\newtheorem{myDef}{Definition}
\newtheorem{myTheo}{Theorem}
\newtheorem{myProp}{Proposition}
\begin{document}

%%
%% The "title" command has an optional parameter,
%% allowing the author to define a "short title" to be used in page headers.
\title[]{VFLGAN: Vertical Federated Learning-based Generative Adversarial Network for Vertically Partitioned Data Publication}

%%%%%%%%%%%%%%%% Authors' Info %%%%%%%%%%%%%%%%%
%%
%% The "author" command and its associated commands are used to define
%% the authors and their affiliations.

\author{Xun Yuan}
\affiliation{%
  \institution{National University of Singapore}
  \city{Singapore}
  \country{Singapore}}
\email{e0919068@u.nus.edu}

\author{Yang Yang}
\affiliation{%
  \institution{National University of Singapore}
  \city{Singapore}
  \country{Singapore}}
\email{y.yang@u.nus.edu}

\author{Prosanta Gope}
\affiliation{%
  \institution{University of Sheffield}
  \city{Sheffield}
  \country{United Kingdom}
}
\email{p.gope@sheffield.ac.uk}

\author{Aryan Pasikhani}
\affiliation{%
  \institution{University of Sheffield}
  \city{Sheffield}
  \country{United Kingdom}
}
\email{aryan.pasikhani@sheffield.ac.uk}

\author{Biplab Sikdar}
\affiliation{%
  \institution{National University of Singapore}
  \city{Singapore}
  \country{Singapore}}
\email{bsikdar@nus.edu.sg}

%%
%% By default, the full list of authors will be used in the page
%% headers. Often, this list is too long, and will overlap
%% other information printed in the page headers. This command allows
%% the author to define a more concise list
%% of authors' names for this purpose.

\renewcommand{\shortauthors}{Yuan et al.}

%%
%% The abstract is a short summary of the work to be presented in the
%% article.
\begin{abstract}
  In the current artificial intelligence (AI) era, the scale and quality of the dataset play a crucial role in training a high-quality AI model. However, good data is not a free lunch and is always hard to access due to privacy regulations like the General Data Protection Regulation (GDPR). 
  A potential solution is to release a synthetic dataset with a similar distribution to that of the private dataset. Nevertheless, in some scenarios, it has been found that the attributes needed to train an AI model belong to different parties, and they cannot share the raw data for synthetic data publication due to privacy regulations. 
  In PETS 2023, Xue et al. \cite{Xue01} proposed the first generative adversary network-based model, VertiGAN, for vertically partitioned data publication. However, after thoroughly investigating, we found that VertiGAN is less effective in preserving the correlation among the attributes of different parties. This article proposes a Vertical Federated Learning-based Generative Adversarial Network, VFLGAN, for vertically partitioned data publication to address the above issues. Our experimental results show that compared with VertiGAN, VFLGAN significantly improves the quality of synthetic data. Taking the MNIST dataset as an example, the quality of the synthetic dataset generated by VFLGAN is \textbf{3.2} times better than that generated by VertiGAN w.r.t. the Fréchet Distance.  
  We also designed a more efficient and effective Gaussian mechanism for the proposed VFLGAN to provide the synthetic dataset with a differential privacy guarantee. On the other hand, differential privacy only gives the upper bound of the worst-case privacy guarantee. This article also proposes a practical auditing scheme that applies membership inference attacks to estimate privacy leakage through the synthetic dataset.
\end{abstract}

%%
%% Keywords. The author(s) should pick words that accurately describe
%% the work being presented. Separate the keywords with commas.
\keywords{Generative adversarial network, Federated learning, Differential privacy, Privacy-preserving data publication}

\maketitle

\section{Introduction}

% \textcolor{blue}{\textbf{Aryan's Suggestion for Introduction}}

{In the realm of deep learning (DL), the efficacy of DL models is intimately tied to the quality and scale of the data they are trained on. For instance, the advancements in image perception models can be largely attributed to comprehensive datasets like ImageNet \cite{deng2009imagenet}. Similarly, as demonstrated in \cite{turc2019}, the state-of-the-art language understanding methods thrive on expansive textual datasets like \cite{conneau-etal-2017-supervised}. Furthermore, the success of modern recommendation systems, highlighted in \cite{ma2019learning}, hinges on rich datasets like Netflix ratings \cite{bennett2007netflix}. When these datasets are effectively leveraged with deep learning, the possibilities are boundless, enabling organizations and governments to devise strategies with unprecedented precision and foresight. However, a notable challenge in this realm is the nature of data collection. Often, data is `vertically partitioned', meaning different pieces of customer information are scattered across multiple entities. For example, while a bank may hold a client's financial history, their health records might be with a hospital or insurance firm. As per studies like \cite{8667703, Xue01}, integrating such dispersed attributes can provide a holistic view of customers, thus significantly enhancing decision-making processes.}

{Despite the evident benefits of integrating dispersed data attributes, practical implementation is often hampered due to privacy concerns. Moreover, stringent data protection regulations like GDPR \cite{voigt2017eu} further curb the sharing of customer data between entities. One potential avenue to navigate these challenges is the publication of synthetic data that mirror the distribution of private data without disclosing any actual private information. However, this solution is not without its vulnerabilities. Adversaries have devised methods that leverage synthetic datasets to glean insights into the corresponding private datasets. Cases in point are the Membership Inference (MI) attacks and attribute inference techniques proposed in \cite{Theresa01}. To counteract such vulnerabilities, Differential Privacy (DP) \cite{dwork2006} offers a promising strategy for privacy protection. By infusing DP principles into synthetic data publication, one can provide these synthetic datasets with a robust privacy assurance, fortifying them against threats like MI attacks.}

{Building upon the foundational discussions on data privacy and the challenges of vertically partitioned data publication, this paper addresses the limitations of existing DP methods in managing vertically partitioned data. Notably, conventional DP solutions like DistDiffGen \cite{6517175}, and DPLT \cite{8667703} are tailored for specific kinds of datasets. Generative Adversarial Networks (GANs) \cite{goodfellow2014generative}, recognized for their ability to replicate original data distributions, provide a novel approach for universal kinds of datasets. Besides, Federated Learning (FL) can help to comply with GDPR's data localization requirements.}
{Through the integration of GANs with Vertical Federated Learning (VFL) \cite{liu2022vertical}, our proposed VFLGAN model surpasses VertiGAN \cite{Xue01} in achieving enhanced attribute correlation. Furthermore, we extend this innovation with DP-VFLGAN, incorporating an optimized Gaussian mechanism tailored to the VFL context, thereby diverging from the DP mechanisms applied in \cite{xie2018differentially, zhang2018differentially, jordon2018pate, chen2020gs}, and advancing the field of privacy-preserving data publication.}

{While DP can offer worst-case privacy assurances \cite{mehner2021towards}, most real-world datasets do not contain the worst-case data record. Importantly, the prime concern for data owners and regulatory bodies is gauging the actual information leakage, as determined by real-world privacy attack simulations. Several privacy metrics \cite{yale2019assessing, lu2019empirical, giomi2023unified} and attack strategies \cite{hayes2017logan, hilprecht2019monte, chen2020gan, van2023membership} aimed at synthetic datasets currently exist. Nevertheless, these metrics don't align with DP principles, and many of the attack strategies make assumptions about reference or auxiliary data.
In light of these issues, this paper introduces an innovative auditing scheme. The proposed scheme aligns with DP principles and draws inspiration from privacy games \cite{Jiayuan01} and shadow models \cite{Reza01}, allowing for a robust assessment of information leakage for any given data record.
Moreover, we experimentally show that the proposed auditing scheme is more robust than current attacks \cite{hayes2017logan, hilprecht2019monte, chen2020gan, van2023membership} that target synthetic datasets. 
}
{Readers can refer to Appendix \ref{sec:literature review} for a detailed literature review.
}

{
In summary, current literature offers several methodologies to address vertically partitioned data publication, but they come with notable shortcomings.  
In this paper, we propose VFLGAN as an effective solution to mitigate those shortcomings. 
Additionally, recognizing the privacy risks inherent in synthetic datasets, we incorporate a differentially private mechanism to ensure that VFLGAN satisfies a DP guarantee. While several mechanisms exist to provide centralized GANs with DP guarantees, as discussed in Appendix \ref{sec:literature review DP}, these mechanisms are unsuitable for VFLGAN due to the shared discriminator. To address this, we design a variant of the Gaussian mechanism tailored for DP-VFLGAN. 
Last, it is important to note that DP provides a privacy guarantee for the worst case, which rarely (if not never) exists in real-world datasets. Therefore, a practical auditing scheme is required to accurately estimate the privacy risk of training data. Existing privacy leakage measurements and attacks, which do not adhere to DP principles or rest on unrealistic assumptions, are inadequate for this purpose. In this paper, we design a novel and practical auditing scheme to effectively estimate the privacy risk of any given record.
}

\subsection{Contributions} \label{sec:contributions}

This paper outlines the following contributions to address the aforementioned research gaps.  
\begin{itemize}
    \item[$\bullet$] Through comprehensive investigation, we identified a critical limitation in a recently proposed GAN-based method for vertically partitioned data publication (presented at PETS 2023 \cite{Xue01}). Specifically, this model fails to effectively learn the correlation among attributes across different parties.
    \item[$\bullet$] We introduce the \emph{first} Vertical Federated Learning (VFL)-based Generative Adversarial Network, named VFLGAN. This novel model is adept at learning correlations among attributes between different parties (as demonstrated in Fig. \ref{fig:demo}) and is equipped to handle both continuous and categorical attributes efficiently.
    \item[$\bullet$] A new Gaussian mechanism has been developed, equipping DP-VFLGAN with a $(\epsilon,\delta)$-Differential Privacy (DP) guarantee. This enhancement ensures heightened privacy protection in data publication.
    \item[$\bullet$] We propose a pragmatic auditing scheme - a privacy leakage measurement - that operates without reliance on unrealistic assumptions. This scheme quantitatively assesses the privacy risk of synthetic datasets.
    \item[$\bullet$] Extensive experiments were conducted to evaluate the quality of synthetic datasets generated by VFLGAN rigorously. We applied multiple metrics for a thorough assessment. Furthermore, the effectiveness of the proposed Gaussian mechanism was also evaluated using our innovative auditing method, demonstrating its efficacy in practical scenarios.

    \item[$\bullet$] The code$\footnote{\url{https://github.com/YuanXun2024/VFLGAN}}$ will be released at publication.
    
\end{itemize}
Readers can refer to \textbf{Appendix} \ref{sec:summary of notation} for a summary of notation.

\begin{figure}[htbp]
\center{\includegraphics[width=0.8\linewidth, scale=1.]{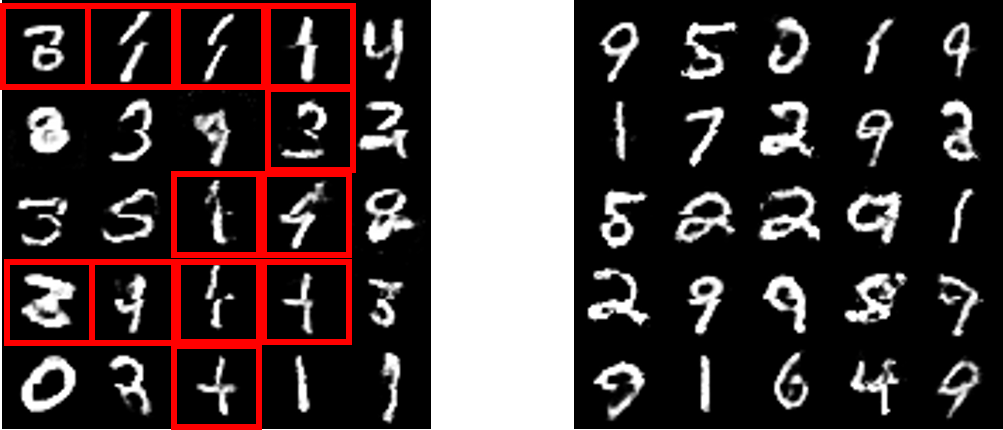}}
\caption{This figure shows synthetic samples generated by GANs trained on vertically partitioned MNIST data (the digits are split evenly into upper and lower halves). The left figure displays samples generated by VertiGAN \cite{Xue01}, highlighting unrecognizable and discontinuous digits. The right figure shows samples generated by the proposed VFLGAN.}
\label{fig:demo}
\vspace{-6mm} %Put here to reduce too much white space after your table 
\end{figure}

\section{Prelimineries}
This section provides preliminaries of DP and GANs to facilitate a comprehensive understanding of the proposed VFLGAN.
Besides, the training process of VertiGAN is introduced briefly.

\subsection{Differential Privacy}
Differential privacy \cite{dwork2006} provides a rigorous privacy guarantee that can be quantitatively analyzed. The pure $\epsilon$-DP is defined as follows.
\begin{myDef}
\label{Def1}
    ($\epsilon$-DP). A randomized mechanism $f: D \rightarrow R$ satisfies $\epsilon$-differential privacy ($\epsilon$-DP) if for any adjacent $D, D^\prime \in \mathcal{D}$ and $S \subset R$ 
    
    \centerline{$Pr[f(D)\in S] \leq e^\epsilon Pr[f(D^\prime)\in S]$.}
\end{myDef}
In the literature, the most commonly used DP is a relaxed version of the pure DP, which allows the mechanism to satisfy $\epsilon$-DP most of the time but not satisfy $\epsilon$-DP with a small probability, $\delta$. The relaxed version, $(\epsilon,\delta)$-DP \cite{dwork2006our}, is defined as follows.
\begin{myDef}
\label{Def2}
    (($\epsilon,\delta$)-DP). A randomized mechanism $f: D \rightarrow R$ provides ($\epsilon,\delta$)-differential privacy (($\epsilon,\delta$)-DP) if for any adjacent $D, D^\prime \in \mathcal{D}$ and $S \subset R$ 
    
    \centerline{$Pr[f(D)\in S] \leq e^\epsilon Pr[f(D^\prime)\in S]+\delta$.}
\end{myDef}

In \cite{mironov2017renyi}, the $\alpha$-R\'enyi divergences between $f(D)$ and $f(D^\prime)$ are applied to define R\'enyi Differential Privacy (RDP) which is a generalization of differential privacy. ($\alpha,\epsilon(\alpha)$)-RDP is defined as follows.
\begin{myDef}
\label{Def3}
    ($(\alpha,\epsilon(\alpha))$-RDP). A randomized mechanism $f:D \rightarrow R$ is said to have $\epsilon(\alpha)$-R\'enyi differential privacy of order $\alpha$, or $(\alpha,\epsilon(\alpha))$-RDP for short if for any adjacent $D, D^\prime \in \mathcal{D}$ it holds that
    
    \centerline{$
D_\alpha\left(f(D) \| f\left(D^{\prime}\right)\right)=\frac{1}{\alpha-1} \log \mathbb{E}_{x \sim f(D)}\left[\left(\frac{\operatorname{Pr}[f(D)=x]}{\operatorname{Pr}\left[f\left(D^{\prime}\right)=x\right]}\right)^{\alpha-1}\right] \leq \epsilon
$.}
\end{myDef}

\cite{mironov2017renyi} proves that the Gaussian mechanism can guarantee an RDP.
\begin{myProp}
\label{Prop: Gaussian mechenism}
    (Gaussian Mechanism) Let $f: D \rightarrow R$ be an arbitrary function with sensitivity being     
    
    \centerline{$\Delta_2 f=\max _{D, D^{\prime}}\left\|f(D)-f\left(D^{\prime}\right)\right\|_2$}     
    \noindent for any adjacent $D, D^\prime \in \mathcal{D}$. The Gaussian Mechanism $M_\sigma$,     
    
    \centerline{$\mathcal{M}_\sigma(\boldsymbol{x})=f(\boldsymbol x)+\mathcal{N}\left(0, \sigma^2 I\right)$}     
    \noindent provides $\left(\alpha, \alpha \Delta_2 f^2 /2\sigma^2\right)$-RDP.
\end{myProp}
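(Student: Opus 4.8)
The plan is to reduce the claim to a direct computation of the R\'enyi divergence between two isotropic Gaussians with a common covariance matrix, and then apply the sensitivity bound. First I would fix an arbitrary pair of adjacent datasets $D, D' \in \mathcal{D}$ and set $\mu_0 = f(D)$, $\mu_1 = f(D')$, so that $\mathcal{M}_\sigma(D) \sim \mathcal{N}(\mu_0, \sigma^2 I)$ and $\mathcal{M}_\sigma(D') \sim \mathcal{N}(\mu_1, \sigma^2 I)$. By translation invariance I may assume $\mu_0 = 0$, and by the rotational symmetry of the isotropic Gaussian I may choose orthonormal coordinates in which $\mu_1 = \|\mu_1\|_2\, e_1$ points along the first coordinate axis. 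Then both densities factorize across the $d$ coordinates, the factors for coordinates $2, \dots, d$ coincide and integrate to one, so the $d$-dimensional R\'enyi divergence collapses to the one-dimensional divergence $D_\alpha\big(\mathcal{N}(0,\sigma^2)\,\big\|\,\mathcal{N}(\|\mu_1\|_2,\sigma^2)\big)$.

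Next I would carry out the one-dimensional integral. Writing $p$ and $q$ for the $\mathcal{N}(0,\sigma^2)$ and $\mathcal{N}(m,\sigma^2)$ densities with $m = \|\mu_1\|_2$, Definition~\ref{Def3} gives $D_\alpha(p\|q) = \tfrac{1}{\alpha-1}\log \int p(x)^\alpha q(x)^{1-\alpha}\,dx$. Substituting the Gaussian densities, the integrand is proportional to $\exp\!\big(-[\alpha x^2 + (1-\alpha)(x-m)^2]/(2\sigma^2)\big)$; completing the square in $x$ rewrites the exponent's numerator as $(x-(1-\alpha)m)^2 + \alpha(1-\alpha)m^2$, so the remaining Gaussian integrates to one and the leftover constant yields $\int p^\alpha q^{1-\alpha}\,dx = \exp\!\big(-\alpha(1-\alpha)m^2/(2\sigma^2)\big)$. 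Applying $\tfrac{1}{\alpha-1}\log(\cdot)$ and using $-\alpha(1-\alpha)/(\alpha-1) = \alpha$ gives $D_\alpha(p\|q) = \alpha m^2/(2\sigma^2) = \alpha\,\|f(D)-f(D')\|_2^2/(2\sigma^2)$.

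Finally I would take the supremum over adjacent pairs: by the definition of $\ell_2$-sensitivity, $\|f(D)-f(D')\|_2 \le \Delta_2 f$, hence $D_\alpha\big(\mathcal{M}_\sigma(D)\,\big\|\,\mathcal{M}_\sigma(D')\big) \le \alpha (\Delta_2 f)^2/(2\sigma^2)$ for every adjacent $D, D'$, which is precisely the $(\alpha,\alpha\Delta_2 f^2/2\sigma^2)$-RDP claim. The only steps that need genuine care are the reduction to one dimension --- one must justify that the rotational invariance of the isotropic Gaussian allows aligning $f(D)-f(D')$ with a coordinate axis without altering the divergence, e.g.\ by invoking the invariance of R\'enyi divergence under the orthogonal (bijective) change of variables --- and the routine but slightly error-prone algebra of completing the square; the rest is bookkeeping, and implicitly we use $\alpha>1$ so that the $1/(\alpha-1)$ factor is well defined.
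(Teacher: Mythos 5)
Your proposal is correct: the reduction to one dimension via translation and rotational invariance, the completing-the-square computation giving $D_\alpha\bigl(\mathcal{N}(0,\sigma^2)\,\|\,\mathcal{N}(m,\sigma^2)\bigr)=\alpha m^2/(2\sigma^2)$, and the final bound $m\le\Delta_2 f$ are all sound. The paper itself offers no proof of this proposition --- it is quoted from the cited RDP reference \cite{mironov2017renyi} --- and your argument is essentially the standard derivation given there.
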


The (R)DP budget should be accumulated if we apply multiple mechanisms to process the data sequentially as we train deep learning (DL) models for multiple iterations. We can calculate the accumulated RDP budget by the following proposition \cite{mironov2017renyi}.

\begin{myProp}
\label{Prop: RDP composition}
    (Composition of RDP) Let $f : D \rightarrow R_1$ be $(\alpha, \epsilon_1)$-RDP and $g : R_1 \times D \rightarrow R_2$ be $(\alpha,\epsilon_2)$-RDP, then the mechanism defined as $(X, Y)$, where $X \sim f(D)$ and $Y \sim g(X, D)$, satisfies $(\alpha, \epsilon_1 + \epsilon_2)$-RDP.
\end{myProp}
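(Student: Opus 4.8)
The plan is to work directly with the exponentiated R\'enyi divergence and exploit the product structure of the joint mechanism. Fix adjacent datasets $D, D' \in \mathcal{D}$ and write $p(x) = \Pr[f(D)=x]$, $p'(x) = \Pr[f(D')=x]$, $q_x(y) = \Pr[g(x,D)=y]$, and $q'_x(y) = \Pr[g(x,D')=y]$. Then the mechanism $(X,Y)$ has distribution $P(x,y) = p(x)\,q_x(y)$ under $D$ and $Q(x,y) = p'(x)\,q'_x(y)$ under $D'$. Since $\alpha>1$, it suffices to prove $\mathbb{E}_{(x,y)\sim P}\bigl[(P(x,y)/Q(x,y))^{\alpha-1}\bigr] \le e^{(\alpha-1)(\epsilon_1+\epsilon_2)}$; applying $\tfrac{1}{\alpha-1}\log(\cdot)$ to both sides, which is monotone increasing for $\alpha>1$, then yields $D_\alpha(P\|Q) \le \epsilon_1+\epsilon_2$, exactly the asserted bound.

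First I would rewrite that expectation as $\sum_{x}\sum_{y} P(x,y)^{\alpha} Q(x,y)^{1-\alpha}$ (integrals in the continuous case), substitute the factorizations of $P$ and $Q$, and regroup to obtain $\sum_x p(x)^{\alpha} p'(x)^{1-\alpha}\bigl(\sum_y q_x(y)^{\alpha} q'_x(y)^{1-\alpha}\bigr)$. For each fixed $x$ the inner sum is exactly $e^{(\alpha-1)D_\alpha(g(x,D)\,\|\,g(x,D'))}$, which is at most $e^{(\alpha-1)\epsilon_2}$ because $g$ is $(\alpha,\epsilon_2)$-RDP and $D,D'$ are adjacent. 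Pulling this bound out of the outer sum leaves $e^{(\alpha-1)\epsilon_2}\sum_x p(x)^{\alpha} p'(x)^{1-\alpha} = e^{(\alpha-1)\epsilon_2}\,e^{(\alpha-1)D_\alpha(f(D)\|f(D'))}$, and the $(\alpha,\epsilon_1)$-RDP of $f$ bounds this by $e^{(\alpha-1)(\epsilon_1+\epsilon_2)}$, which closes the argument.

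The main obstacle — really the only delicate point — is justifying that $e^{(\alpha-1)D_\alpha(g(x,D)\|g(x,D'))} \le e^{(\alpha-1)\epsilon_2}$ holds for \emph{every} value $x$ and not merely on average over $x\sim p$: this is where I must invoke that $g$ being $(\alpha,\epsilon_2)$-RDP as a mechanism on $R_1\times D$ means the guarantee holds pointwise in its first argument, so that the bound can be extracted from the double sum before the outer summation is performed (Tonelli's theorem legitimizes the interchange, since the summand is nonnegative). A minor point worth stating explicitly is that $\alpha>1$ is used twice — to keep $t\mapsto\tfrac{1}{\alpha-1}\log t$ monotone and to make sense of the negative exponent $1-\alpha$ under the standard convention that $Q$ dominates $P$ on the relevant support — and that the continuous case is verbatim identical with integrals replacing sums, which is also the form in which Definition \ref{Def3} is stated.
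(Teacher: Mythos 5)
Your proof is correct: the factorization $P(x,y)=p(x)q_x(y)$, the pointwise bound on the inner sum via the RDP of $g$ in its dataset argument, and the subsequent bound on the outer sum via the RDP of $f$ together give exactly $D_\alpha(P\|Q)\le\epsilon_1+\epsilon_2$, and you correctly flag the one delicate point (that the guarantee for $g$ must hold for every fixed first argument $x$, not merely on average). Note that the paper does not prove this proposition at all — it states it as a known result and cites Mironov's RDP paper — so there is no in-paper argument to compare against; your proof is essentially the standard one from that reference.
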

According to the following proposition, RDP can be converted to $(\epsilon,\delta)$-DP and the proof can be found in \cite{mironov2017renyi}.
\begin{myProp}
\label{Prop: RDP to DP}
    (From RDP to $(\epsilon,\delta)$-DP) If $f$ is an $(\alpha, \epsilon(\alpha))$-RDP mechanism, it also satisfies $(\epsilon(\alpha) + \frac{log1/\delta}{\alpha-1}, \delta)$-DP for any $0 < \delta < 1$.
\end{myProp}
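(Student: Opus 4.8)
The plan is to reduce the desired $(\epsilon,\delta)$-DP inequality to a tail bound on the \emph{privacy loss random variable} and then control that tail with Markov's inequality applied to the R\'enyi moment. Fix adjacent $D, D' \in \mathcal{D}$ and write $P$ and $Q$ for the distributions (densities, w.r.t.\ a common dominating measure $\mu$) of $f(D)$ and $f(D')$. Put $\epsilon = \epsilon(\alpha) + \frac{\log(1/\delta)}{\alpha-1}$ and, for an outcome $x$, let $L(x) = \log\frac{P(x)}{Q(x)}$ denote the privacy loss. For any measurable $S \subset R$ I would split
\[
\Pr_{x\sim P}[x\in S] \;=\; \Pr_{x\sim P}[x\in S,\ L(x)\le \epsilon] \;+\; \Pr_{x\sim P}[x\in S,\ L(x) > \epsilon].
\]

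For the first term, on the event $\{L(x)\le\epsilon\}$ we have $P(x) \le e^{\epsilon} Q(x)$ pointwise, so integrating over $S\cap\{L\le\epsilon\}$ gives $\Pr_{x\sim P}[x\in S,\ L(x)\le\epsilon] \le e^{\epsilon}\Pr_{x\sim Q}[x\in S] = e^{\epsilon}\Pr[f(D')\in S]$. For the second term I would bound it crudely by $\Pr_{x\sim P}[L(x) > \epsilon]$ and show this is at most $\delta$: since $\alpha>1$, Markov's inequality applied to the nonnegative variable $e^{(\alpha-1)L(x)}$ yields
\[
\Pr_{x\sim P}[L(x) > \epsilon] \;\le\; e^{-(\alpha-1)\epsilon}\,\mathbb{E}_{x\sim P}\!\left[e^{(\alpha-1)L(x)}\right] \;=\; e^{-(\alpha-1)\epsilon}\,\mathbb{E}_{x\sim P}\!\left[\left(\tfrac{P(x)}{Q(x)}\right)^{\alpha-1}\right].
\]
By Definition \ref{Def3} the last expectation equals $e^{(\alpha-1)D_\alpha(f(D)\|f(D'))} \le e^{(\alpha-1)\epsilon(\alpha)}$, so the right-hand side is $e^{(\alpha-1)(\epsilon(\alpha)-\epsilon)} = e^{-\log(1/\delta)} = \delta$ by the choice of $\epsilon$. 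Adding the two terms gives $\Pr[f(D)\in S] \le e^{\epsilon}\Pr[f(D')\in S] + \delta$, i.e.\ $(\epsilon,\delta)$-DP with the claimed parameters; the inequality with $D,D'$ swapped follows identically since adjacency and the RDP hypothesis are symmetric.

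The two displayed steps are essentially bookkeeping; the point to be careful about — the main obstacle in a fully rigorous write-up — is the measure-theoretic setup, namely justifying the pointwise inequality $P(x)\le e^{\epsilon}Q(x)$ and the change of measure from $P$ to $Q$ on $\{L\le\epsilon\}$, and checking that the set $\{Q(x)=0\}$ causes no trouble (it forces $L(x)=+\infty$, so its $P$-mass is already absorbed into the second term). One should also record that $\alpha>1$ is used essentially, both for $D_\alpha$ to be the meaningful quantity in Definition \ref{Def3} and to keep the direction of Markov's inequality correct. A complete treatment is in \cite{mironov2017renyi}.
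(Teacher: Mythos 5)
Your proof is correct. Note first that the paper does not actually contain a proof of Proposition \ref{Prop: RDP to DP}: it defers entirely to \cite{mironov2017renyi}, so the comparison is really with that cited argument. Your route --- splitting $\Pr[f(D)\in S]$ over the event that the privacy loss $L$ exceeds the threshold $\epsilon'=\epsilon(\alpha)+\tfrac{\log(1/\delta)}{\alpha-1}$, and bounding the tail $\Pr_{x\sim P}[L(x)>\epsilon']$ by Markov's inequality applied to $e^{(\alpha-1)L}$ --- is the ``moments accountant'' style conversion. Mironov's proof is genuinely different: he first establishes a probability-preservation inequality $\Pr[f(D)\in S]\le\bigl(e^{\epsilon(\alpha)}\Pr[f(D')\in S]\bigr)^{(\alpha-1)/\alpha}$ via H\"older's inequality, and then performs a case analysis on the magnitude of $\Pr[f(D')\in S]$ to turn that multiplicative power bound into the additive $(\epsilon,\delta)$ form. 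Both arguments produce exactly the same constant. Your version has the advantage of being self-contained and of working directly with the privacy-loss random variable; in particular your identity $\mathbb{E}_{x\sim P}\bigl[(P/Q)^{\alpha-1}\bigr]=e^{(\alpha-1)D_\alpha(P\|Q)}$ matches the paper's Definition \ref{Def3} verbatim, since the paper takes the expectation under $f(D)$ at exponent $\alpha-1$. The H\"older route yields a slightly stronger intermediate statement (a bound valid for every event without a residual $\delta$), which is why Mironov states it separately and reuses it. Your measure-theoretic caveats are handled in the right spirit; the only refinement worth recording is that finiteness of $\epsilon(\alpha)$ already forces $P(\{Q=0\})=0$, so that set contributes zero mass rather than being ``absorbed'' into the tail term, and when $\epsilon(\alpha)=\infty$ the claim is vacuous.
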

According to Proposition \ref{Prop: RDP to DP}, given a $\delta$ we can get a tight $(\epsilon^\prime, \delta)$-DP bound by
\begin{equation}
    \label{eq:RDP to DP}
    \epsilon^\prime = \min_\alpha(\epsilon(\alpha) + \frac{log1/\delta}{\alpha-1}).
\end{equation}
\cite{wang2019subsampled} provides a tight upper bound on RDP by considering the combination of the subsampling procedure and random mechanism. This is important for differentially private DL since DL models are mostly updated according to a subsampled mini-batch of data. The enhanced RDP bound can be calculated according to the following proposition, and the proof can be found in \cite{wang2019subsampled}.

\begin{myProp}
\label{theorem: subsample RDP}
(RDP for Subsampled Mechanisms). Given a dataset of $n$ points drawn from a domain $\mathcal{X}$ and a (randomized) mechanism $\mathcal{M}$ that takes an input from $\mathcal{X}^m$ for $m \leq n$, let the randomized algorithm  $\mathcal{M} \circ$ \text{subsample} be defined as (1) subsample: subsample without replacement $m$ datapoints of the dataset (sampling rate $\gamma$ = $m/n$), and (2) apply $\mathcal{M}$: a randomized algorithm taking the subsampled dataset as the input. For all integers $\alpha$ $\geq$ 2, if $\mathcal{M}$ obeys ($\alpha, \epsilon(\alpha)$)-RDP, then this new randomized algorithm $\mathcal{M} \circ$ \text{subsample} obeys ($\alpha, \epsilon^\prime(\alpha)$)-RDP where,

\centerline{$
\begin{aligned}
\epsilon^{\prime}(\alpha) \leq & \frac{1}{\alpha-1} \log \left(1+\gamma^2\left(\begin{array}{c}
\alpha \\
2
\end{array}\right) \min \right. \\
&\left\{4\left(e^{\epsilon(2)}-1\right),  e^{\epsilon(2)} \min \left\{2,\left(e^{\epsilon(\infty)}-1\right)^2\right\}\right\} \\
+ & \left.\sum_{j=3}^\alpha \gamma^j\left(\begin{array}{c}
\alpha \\
j
\end{array}\right) e^{(j-1) \epsilon(j)} \min \left\{2,\left(e^{\epsilon(\infty)}-1\right)^j\right\}\right)
\end{aligned}
$}
\end{myProp}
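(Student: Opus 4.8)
The plan is to bound $D_\alpha\big(\mathcal{M}\circ\text{subsample}(D)\,\|\,\mathcal{M}\circ\text{subsample}(D')\big)$ for an arbitrary pair of adjacent datasets, say $D' = D\cup\{x'\}$. The key structural observation is that subsampling $m$ points \emph{without replacement} decomposes into a two-component mixture according to whether the extra record $x'$ is drawn: with probability $1-\gamma$ the subsample avoids $x'$, and conditioned on that event it is distributed exactly as a uniform $m$-subset of $D$; with probability $\gamma$ it contains $x'$. Writing $q$ for the output distribution of $\mathcal{M}\circ\text{subsample}$ on $D$ and $p_1$ for its output conditioned on the subsample containing $x'$, this gives $p := \mathcal{M}\circ\text{subsample}(D') = (1-\gamma)\,q + \gamma\,p_1$, hence the pointwise identity $p(x)/q(x) = 1 + \gamma\big(p_1(x)/q(x)-1\big)$.

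Because $\alpha$ is an integer $\geq 2$, I would expand $e^{(\alpha-1)D_\alpha(p\|q)} = \mathbb{E}_{x\sim q}\big[(p(x)/q(x))^{\alpha}\big]$ by the binomial theorem:
\[
\mathbb{E}_{x\sim q}\!\left[\Big(1+\gamma\big(\tfrac{p_1(x)}{q(x)}-1\big)\Big)^{\!\alpha}\right]=\sum_{j=0}^{\alpha}\binom{\alpha}{j}\gamma^{j}\,\mathbb{E}_{x\sim q}\!\left[\big(\tfrac{p_1(x)}{q(x)}-1\big)^{j}\right].
\]
The $j=0$ term equals $1$; the $j=1$ term vanishes since $\mathbb{E}_{x\sim q}[p_1(x)/q(x)] = \int p_1 = 1 = \int q$. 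Taking $\log$ and dividing by $\alpha-1$ then reproduces exactly the structure of the claimed bound, once the central moments $\mathbb{E}_{x\sim q}\big[(p_1(x)/q(x)-1)^{j}\big]$ for $2\leq j\leq\alpha$ are controlled.

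Controlling those moments is the heart of the argument and the step I expect to be the main obstacle, because $p_1$ and $q$ are themselves mixtures over subsets and are \emph{not} outputs of $\mathcal{M}$ on a single adjacent pair. I would introduce a common index: every $m$-subset of $D$ arises (with uniform weight) as $T\cup\{z\}$ for an $(m-1)$-subset $T\subseteq D$ and $z\in D\setminus T$, so $q = \mathbb{E}_{(T,z)}\big[\mathcal{M}(T\cup\{z\})\big]$ while $p_1 = \mathbb{E}_{(T,z)}\big[\mathcal{M}(T\cup\{x'\})\big]$. Since $T\cup\{z\}$ and $T\cup\{x'\}$ differ in one record, the joint convexity of the $f$-divergence functional $(\mu,\nu)\mapsto \mathbb{E}_{x\sim\nu}[(\mu(x)/\nu(x))^{j}]=\int \mu^{j}\nu^{1-j}$ lets me move the $(T,z)$-expectation outside and invoke the $(j,\epsilon(j))$-RDP of $\mathcal{M}$, yielding $\mathbb{E}_{x\sim q}[(p_1(x)/q(x))^{j}]\leq e^{(j-1)\epsilon(j)}$; expanding $(p_1/q-1)^{j}$ and summing these estimates produces the $e^{(j-1)\epsilon(j)}$-type terms and, for $j=2$, the $4(e^{\epsilon(2)}-1)$ term. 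The alternative $\min\{2,(e^{\epsilon(\infty)}-1)^{j}\}$ factors come from a second estimate: when $\mathcal{M}$ is $\epsilon(\infty)$-DP, quasi-convexity of the max-divergence gives $e^{-\epsilon(\infty)}\leq p_1(x)/q(x)\leq e^{\epsilon(\infty)}$ pointwise, so $\|p_1/q-1\|_\infty\leq\max\{1,e^{\epsilon(\infty)}-1\}$, which I would feed into $\mathbb{E}_{x\sim q}[(p_1/q-1)^{j}]\leq\|p_1/q-1\|_\infty^{\,j-2}\,\mathbb{E}_{x\sim q}[(p_1/q-1)^{2}]$ and then take the smaller of the two estimates term by term. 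Finally I would check the reverse direction $D_\alpha(q\|p)$ (adjacency $D'=D\setminus\{x'\}$); it follows from the same mixture decomposition and never dominates, so the $\min/\max$ expression above is the overall $\epsilon'(\alpha)$. The delicate points are making the joint-convexity reduction rigorous and tracking the exact constants inside the nested $\min\{\cdots\}$ terms.
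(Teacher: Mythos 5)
The paper does not actually prove this proposition: it is quoted from Wang, Balle and Kasiviswanathan \cite{wang2019subsampled}, and the text simply defers to that reference for the proof, so there is no in-paper argument to compare against. Your outline --- the mixture decomposition $p=(1-\gamma)q+\gamma p_1$, the binomial expansion of $\mathbb{E}_{x\sim q}[(p(x)/q(x))^{\alpha}]$ for integer $\alpha$ with the $j=0,1$ terms handled exactly, and the control of the central moments via joint convexity of $(\mu,\nu)\mapsto\int\mu^{j}\nu^{1-j}$ together with the pointwise $\epsilon(\infty)$ bound --- is essentially the strategy of the proof in that cited source. The one place you correctly flag as delicate is indeed where the real work lies: the coupled pair $(T\cup\{z\},\,T\cup\{x'\})$ is \emph{replacement}-adjacent while the RDP hypothesis on $\mathcal{M}$ is stated for add/remove adjacency, and converting between the two is precisely what generates the constants $4(e^{\epsilon(2)}-1)$ and the factor $2$ inside $\min\{2,\cdot\}$; your sketch identifies but does not execute that conversion, so it is an accurate reconstruction of the cited argument rather than a complete proof.
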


Last, the following post-processing theorem \cite{dwork2014algorithmic} is convenient, with which we can say a framework satisfies DP or RDP if any intermediate function of the framework satisfies DP or RDP.
\begin{myProp}
\label{theorem: post-processing}
(Post-processing). If $f(\cdot)$ satisfies ($\epsilon,\delta$)-DP, $g(f(\cdot))$ will satisfy ($\epsilon,\delta$)-DP for any function $g(\cdot)$. Similarly, if $f(\cdot)$ satisfies ($\alpha,\epsilon$)-RDP, $g(f(\cdot))$ will satisfy ($\alpha,\epsilon$)-RDP for any function $g(\cdot)$.
\end{myProp}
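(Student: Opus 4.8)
The plan is to handle the two claims separately, beginning with the $(\epsilon,\delta)$-DP statement, which is the easier one. I would first reduce to the case of a \emph{deterministic} post-processor: an arbitrary randomized $g$ can be written as $g(\cdot)=\tilde g(\cdot,Z)$, where $Z$ is internal randomness independent of $f(\cdot)$ and $\tilde g(\cdot,z)$ is deterministic for each fixed $z$. Since the $(\epsilon,\delta)$-DP inequality $\Pr[\cdot]\le e^\epsilon\Pr[\cdot]+\delta$ is affine in the probabilities, integrating over the law of $Z$ preserves it, so it suffices to treat each fixed $z$. For a deterministic map $g:R\to R'$ and any measurable $S'\subseteq R'$, I would pull the event back to $S:=g^{-1}(S')\subseteq R$, observe $\Pr[g(f(D))\in S']=\Pr[f(D)\in S]$, and apply the hypothesis that $f$ is $(\epsilon,\delta)$-DP to the set $S$ and the \emph{same} adjacent pair $D,D'$; this gives $\Pr[f(D)\in S]\le e^\epsilon\Pr[f(D')\in S]+\delta=e^\epsilon\Pr[g(f(D'))\in S']+\delta$, as required.

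For the RDP statement, the task is exactly the data-processing inequality for R\'enyi divergence: writing $P=f(D)$, $Q=f(D')$ and letting $K$ denote the Markov kernel induced by $g$, I must show $D_\alpha(KP\,\|\,KQ)\le D_\alpha(P\,\|\,Q)$, after which $g\circ f$ inherits $(\alpha,\epsilon(\alpha))$-RDP because the bound holds for every adjacent pair. My plan is to work with the Hellinger-integral form $H_\alpha(P\|Q):=\mathbb{E}_{x\sim Q}[(dP/dQ)^\alpha]=e^{(\alpha-1)D_\alpha(P\|Q)}$, which for $\alpha>1$ is monotone in $D_\alpha$, and to establish the contraction $H_\alpha(KP\|KQ)\le H_\alpha(P\|Q)$. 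For deterministic $g$ I would partition $R$ into the fibres $g^{-1}(x')$ and, on each fibre, apply Jensen's inequality to the convex function $t\mapsto t^\alpha$ with weights proportional to $Q$, obtaining $\big(\sum_{x\in g^{-1}(x')}P(x)\big)^\alpha\big(\sum_{x\in g^{-1}(x')}Q(x)\big)^{1-\alpha}\le\sum_{x\in g^{-1}(x')}P(x)^\alpha Q(x)^{1-\alpha}$; summing over $x'$ gives the contraction. A randomized $g$ then reduces to this case, since adjoining independent noise $Z$ leaves the divergence unchanged by additivity over products, $D_\alpha(P\otimes\mu\,\|\,Q\otimes\mu)=D_\alpha(P\|Q)$, and $g$ is the composition of this noise-adjoining step with a deterministic read-out.

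The main obstacle is the RDP half: making the data-processing argument airtight beyond the finite discrete setting forces one to phrase everything with Radon--Nikodym derivatives and conditional expectations, and one must separately check the boundary order $\alpha=\infty$ that is used elsewhere in the paper (for instance in Proposition \ref{theorem: subsample RDP}), where $D_\infty(P\|Q)=\log\sup dP/dQ$ and the fibre-wise step instead uses the elementary bound $(\sum_i a_i)/(\sum_i b_i)\le\max_i a_i/b_i$. The $(\epsilon,\delta)$-DP part, by contrast, is a one-line pull-back once the reduction to deterministic $g$ is done; and for a paper at this level it would also be legitimate to cite the R\'enyi data-processing inequality directly from \cite{mironov2017renyi} and only spell out the $(\epsilon,\delta)$-DP argument.
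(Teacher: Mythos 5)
Your proposal is mathematically correct, but note that the paper does not actually prove this proposition: it is stated as a known result imported from the literature (the $(\epsilon,\delta)$-DP half from \cite{dwork2014algorithmic}, the RDP half being the data-processing inequality for R\'enyi divergence as in \cite{mironov2017renyi}), so there is no in-paper argument to compare against. What you give is the canonical self-contained proof of both halves: the reduction of a randomized post-processor to deterministic ones by conditioning on its internal randomness, the pull-back $\Pr[g(f(D))\in S']=\Pr[f(D)\in g^{-1}(S')]$ for the $(\epsilon,\delta)$ case, and the fibre-wise Jensen argument giving $H_\alpha(KP\,\|\,KQ)\le H_\alpha(P\,\|\,Q)$ for $\alpha>1$, together with the product-measure identity $D_\alpha(P\otimes\mu\,\|\,Q\otimes\mu)=D_\alpha(P\,\|\,Q)$ to absorb the post-processor's own randomness; your separate treatment of $\alpha=\infty$ via the mediant inequality is also right and is genuinely needed for Proposition \ref{theorem: subsample RDP}. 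Two small caveats you already half-acknowledge: the fibre/Jensen step as written is for discrete ranges and needs to be restated with conditional expectations for general measurable spaces, and the monotonicity of $t\mapsto t^\alpha$ argument only covers $\alpha>1$ (which is all the paper uses, since its RDP accounting works with $\alpha\ge 2$ and $\alpha=\infty$). For a paper at this level, simply citing the data-processing inequality, as you note at the end, would be the expected move; your expanded argument is a correct substitute rather than a divergence from anything the authors wrote.
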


\subsection{Generative Adversarial Models} \label{sec:preliminaries of GAN}

Given a dataset $X$ where the data record $x \in X$ follows the distribution $P$, the generator of GAN, $G$, aims to generate synthetic data $\tilde x$, $\tilde x=G(z)$, that follows the distribution $P_{G(z)}$ similar to $P$. The input $z$ is sampled from a simple distribution, such as the uniform distribution or a Gaussian distribution. The above object can be achieved with the help of a discriminator, $D$. The generator and discriminator are trained through a competing game, where the discriminator is trained to distinguish $x$ and $\tilde x$, and the generator is trained to generate high-quality $\tilde x$ to fool the discriminator. 
The game between the generator and the discriminator can be formally expressed as the following min-max objective \cite{goodfellow2014generative},
\begin{equation}
\label{eq:min-max objective}
\min _G \max _D \underset{\boldsymbol{x} \sim P}{\mathbb{E}}[\log (D(\boldsymbol{x}))]+\underset{\tilde{\boldsymbol{x}} \sim P_{G(\boldsymbol{z})}}{\mathbb{E}}[\log (1-D(\tilde{\boldsymbol{x}}))].
\end{equation}
As proved in \cite{goodfellow2014generative}, this objective leads to minimizing the Jensen- Shannon divergence between $P$ and $P_{G(\boldsymbol{z})}$. However, the training process of optimizing the objective \eqref{eq:min-max objective} is unstable due to discriminator saturating, which results in vanishing gradients. \cite{arjovsky2017wasserstein} pointed out that the Jensen–Shannon divergence is not continuous and does not provide usable gradients and proposed a new objective, i.e., minimizing the Wasserstein-1 distance between $P$ and $P_{G(\boldsymbol{z})}$, which is continuous everywhere and differentiable almost everywhere under mild assumptions. Based on the new objective, Wasserstein GAN (WGAN) is proposed in \cite{arjovsky2017wasserstein}. Following this idea, subsequent works \cite{gulrajani2017improved, wei2018improving, wu2018wasserstein} propose variants of WGAN to improve the quality of generated data. Same as the previous works \cite{Xue01,chen2020gs}, we adopt the optimization objectives of WGAN\_GP \cite{gulrajani2017improved} for the proposed VFLGAN as,
\begin{equation}
\label{eq:obj_wgan_gp}
    \begin{aligned}
        &\min _D -\mathbb{E}[D(\boldsymbol{x})]+\mathbb{E}[D(\tilde{\boldsymbol{x}})] 
         + \lambda \mathbb{E}\left[\left(\|\nabla D(\hat{\boldsymbol{x}})\|_2-1\right)^2\right], \\
         &\max_G \mathbb{E}[D(\tilde{\boldsymbol{x}})],
    \end{aligned} 
\end{equation}
where $\hat{\boldsymbol{x}}=\beta \boldsymbol{x} + (1-\beta)\tilde{\boldsymbol{x}}$ and $\beta \sim \mathbb{U}(0,1)$.

\subsection{VertiGAN} \label{sec:vertigan}

{Figure \ref{fig:verigan} shows the training framework of VertiGAN for the two-party scenario. Each party shares the same generator backbone ($G_b$). Party $i$ maintains a private generator head ($G_{hi}$) and a private discriminator ($D_i$). The local update in each party is the same process as WGAN\_GP \cite{gulrajani2017improved}, except for the update of the generator backbone ($G_b$). Thus, same as \cite{gulrajani2017improved}, the optimization objective of party $i\in\{1,2\}$ can be expressed as,
\begin{equation}
\label{eq:obj_wgan_gp}
    \begin{aligned}
        &\min _{D_i} -\mathbb{E}[D_i(\boldsymbol{x_i})]+\mathbb{E}[D_i(\tilde{\boldsymbol{x}}_i)] 
         + \lambda \mathbb{E}\left[\left(\|\nabla D_i(\hat{\boldsymbol{x}}_i)\|_2-1\right)^2\right], \\
         &\max_{G_i} \mathbb{E}[D_i(\tilde{\boldsymbol{x}}_i)],
    \end{aligned} 
\end{equation}
where $\hat{\boldsymbol{x}}_i=\beta \boldsymbol{x}_i + (1-\beta)\tilde{\boldsymbol{x}}_i$ and $\beta \sim \mathbb{U}(0,1)$. For the update of the generator backbone ($G_b$), each party sends the local gradients of $G_b$, i.e., $\mathcal{G}^{i}_{G_b}$, to the server. The server summarizes the local gradients as,
\begin{equation}
\mathcal{G}_{G_b} = \mathcal{G}^{1}_{G_b} + \mathcal{G}^{2}_{G_b},
\label{eq:hfl}
\end{equation}
and sends $\mathcal{G}_{G_b}$ to the local parties. Then, the local party updates the local $G_b$ with $\mathcal{G}_{G_b}$. Equation \eqref{eq:hfl} is the main idea of horizontal federated learning (HFL).
}

\begin{figure}[htbp]
\center{\includegraphics[width=\linewidth, scale=1.]{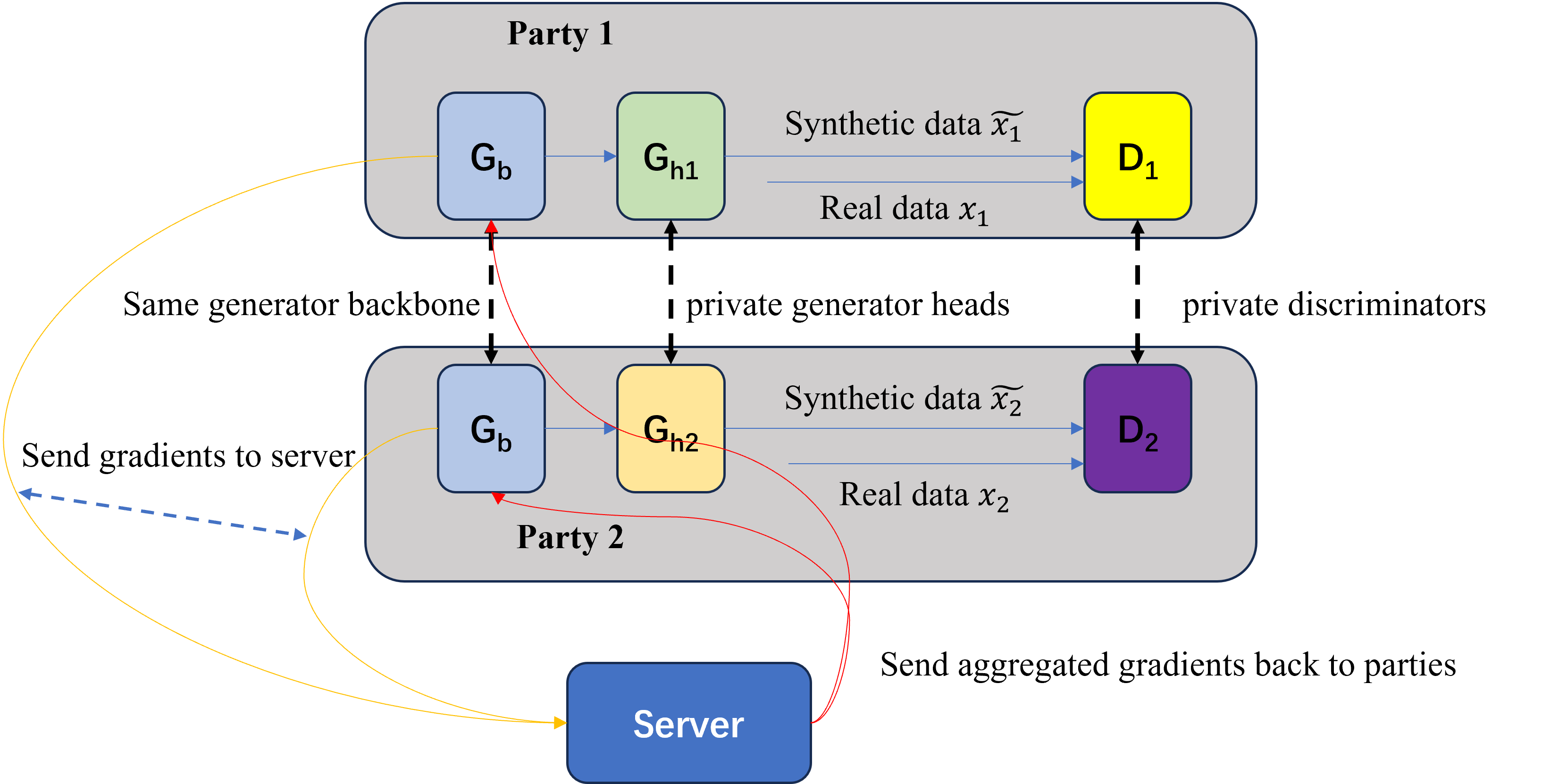}}
\caption{This figure shows the framework of VertiGAN.}
\label{fig:verigan}
\end{figure}

\section{Proposed Vertical Federated Learning-based GAN}

{
This section first introduces our system model. According to the system model, we formulate the vertically partitioned data publication problem as a min-max optimization problem to train the proposed VFLGAN. Subsequently, we specify the architecture of the proposed VFLGAN for the two-party case. Following the architectural overview, the training process of VFLGAN is described in detail. Then, we introduce the differentially private version of VFLGAN, i.e., DP-VFLGAN. The section concludes by delineating the differences between VertiGAN and VFLGAN.
}

\subsection{System Model and Problem Formulation}

Our system model considers a similar scenario as discussed in \cite{Xue01}, where there are $M$ non-colluding parties. Each party $P_i$ maintains a private dataset $X_i \in \mathbb{R}^{N \times |A_i|}$ with $N$ records where $A_i$ denotes the attribute set of $X_i$. 
Now, we consider the following assumptions for our proposed system model.

\textbf{Assumption 1} Records in different datasets $X_i$ with the same index belong to the same object, which can be achieved by applying private set intersection protocols \cite{huang2012private,chen2017fast} in practice. To facilitate the alignment of training inputs across parties without direct data sharing, a pseudorandom number generator can be employed during the training process.

\textbf{Assumption 2} There is no common attribute among the parties. 

\textbf{Assumption 3} The local parties and the central server are honest but curious, i.e., correctly follow the protocols but try to infer sensitive information from other
parties.

With the above assumptions, $M$ private datasets can be combined to construct a new dataset $X$, i.e., $X=[X_1,X_2,\cdots,X_M] \in \mathbb{R}^{N \times \sum_{i=1}^M|A_i|}$, where $[\cdots]$ denotes a concatenation function. The parties aim to generate a synthetic dataset $\tilde X$ in which each record $\tilde{\boldsymbol{x}} \in \tilde{{X}}$ follows a similar distribution to that of $\boldsymbol{x} \in X$,
\begin{equation}
    \label{eq:formulation}
    P_{{\tilde{\boldsymbol{x}}}}\approx P_{\boldsymbol{x}},
\end{equation}
while keeping the local data secret from other parties.

Here, we use GAN-based generators to generate synthetic records that satisfy \eqref{eq:formulation}. 
Figure \ref{fig:system model} illustrates our system model in detail. There is a private dataset $X_i$, a local generator $G_i$, and a local discriminator $D_i$ in each party $P_i$. The server maintains a shared discriminator $D_s$ and is responsible for combining the synthetic data from all the parties. Solid lines between the server and parties represent the communication during the generalisation of the synthetic dataset (inference period), and the dashed lines represent the communication during the training process. In our system model, the private dataset $X_i$ can only be accessed by the corresponding discriminator $D_i$. The shared discriminator $D_s$ aims to guide the local generators to learn the correlation among attributes of different parties. During the training process, each $D_i$ sends its intermediate feature to $D_s$ and $D_s$ return gradients to update $D_i$ and $G_i$. During the inference process, the local generator $G_i$ generates partial synthetic records, ${\tilde{\boldsymbol{x}}}_i=G_i(\boldsymbol{z})$, where $\boldsymbol{z}$ is the same for all parties achieved by a pseudorandom number generator at each local party. Then, the partial synthetic records are concatenated in the server to get a complete synthetic record, $\tilde x = [{\tilde{\boldsymbol{x}}}_1,{\tilde{\boldsymbol{x}}}_2,\cdots,{\tilde{\boldsymbol{x}}}_M]$. According to our system model, the objective \eqref{eq:formulation} can be expressed as the following,
\begin{equation}
    \label{eq:formulation system model}
    P_{{\tilde{\boldsymbol{x}}}}\approx P_{\boldsymbol{x}}, \quad \tilde{\boldsymbol{x}} = [G_1(\boldsymbol{z}),G_2(\boldsymbol{z}),\cdots,G_M(\boldsymbol{z})].
\end{equation}
Thus, the problem targeted by this paper is to train $M$ generators that satisfy \eqref{eq:formulation system model}. As mentioned in Section \ref{sec:preliminaries of GAN}, this problem can be transformed into a min-max optimization problem to obtain such generators. According to the optimization objectives of WGAN\_GP \eqref{eq:obj_wgan_gp}, the vertically partitioned data publication problem of our system model can be formulated as,
\begin{align}
\label{eq:obj_system_model}
    % \begin{aligned}
        &\min _{D_1,\cdots,D_M,D_s} \sum_{i=1}^M\mathcal{L}(D_i,{\tilde{\boldsymbol{x}}}_i,\boldsymbol{x}_i)+\lambda_1\mathcal{L}(D_s,{\tilde{\boldsymbol{f}}},\boldsymbol{f}), \\
\label{eq:obj_system_model_2}
         &\max_{G_1,\cdots,G_M} \sum_{i=1}^M\mathbb{E}[D_i({\tilde{\boldsymbol{x}}}_i)] + \lambda_2\mathbb{E}[D_s({\tilde{\boldsymbol{f}}})],
    % \end{aligned} 
\end{align}
where $\mathcal{L}(D,\tilde{\boldsymbol{x}},\boldsymbol{x})\triangleq -\mathbb{E}[D(\boldsymbol{x})]+\mathbb{E}[D(\tilde{\boldsymbol{x}})] 
         + \lambda \mathbb{E}\left[\left(\|\nabla D(\hat{\boldsymbol{x}})\|_2-1\right)^2\right]$, ${\tilde{\boldsymbol{f}}}$ and $\boldsymbol{f}$ are the concatenations of the intermediate features of $D_1$ to $D_M$ when the inputs are synthetic data and real data respectively, and $\lambda_1$ and $\lambda_2$ are balancing coefficients.

\begin{figure}[htbp]
\center{\includegraphics[width=\linewidth, scale=1.2]{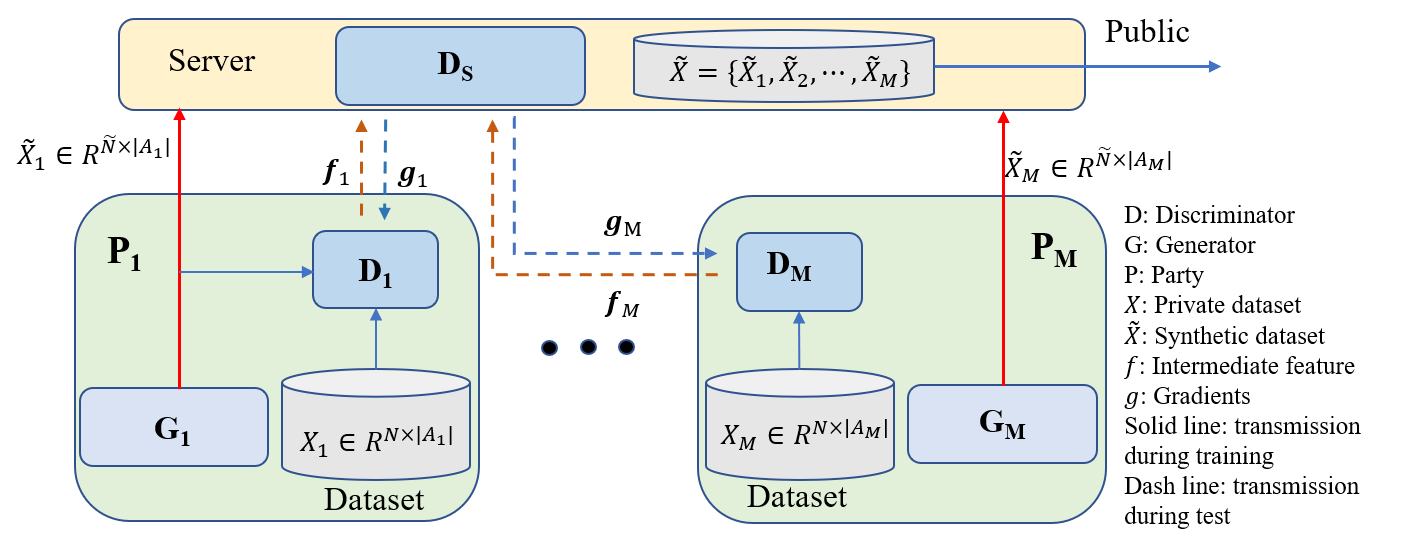}}
\caption{System Model.}
\label{fig:system model}
\end{figure}

% \begin{figure}[htbp]
% \center{\includegraphics[scale=0.35,trim= 100 140 100 140,clip]{}}
% \caption{System Model.}
% \label{fig:system model}
% \vspace{-4mm} %Put here to reduce too much white space after your table 
% \end{figure}

% The main difference between our system model and the one presented in \cite{Xue01} is that our model follows the framework of vertical federated learning while the system model presented in \cite{Xue01} follows the framework of horizontal federated learning. In the HFL framework, each client transforms the gradients of its local model to the server, and the server aggregates the gradients and transforms the aggregated gradients to each client. The server in our system model acts as the active party of the VFL framework, and the local parties in our system act as the passive party of the VFL framework. For details of VFL, readers may refer to \cite{liu2022vertical}.

\subsection{Overview of VFLGAN for Two-party Case}
Figure \ref{fig:vflmodel} illustrates the proposed VFLGAN for the two-party scenario. There are two private generators, $G_1$ and $G_2$, two private discriminators, $D_1$ and $D_2$, and one shared discriminator, $D_s$, in the framework. The generators produce synthetic data with a vector of Gaussian noise $\boldsymbol{z}$, i.e., ${\tilde{\boldsymbol{x}}}_i = G_i(\boldsymbol{z})$. The discriminators, $D_i$ where $i\in \{1,2\}$, are trained to distinguish synthetic data ${\tilde{\boldsymbol{x}}}_i$ and real data $\boldsymbol{x}_i$. Thus, the gradients of $D_i$ on ${\tilde{\boldsymbol{x}}}_i$ can guide $G_i$ to generate synthetic data with a similar distribution to the real data $\boldsymbol{x}_i$, i.e.,
\begin{equation}
\label{eq: 5}
    P_{G_i(\boldsymbol{z})} \approx P_{\boldsymbol{x}_i}. 
\end{equation}

{
Figure \ref{fig:VFLGAN_architecture} in the Appendix shows the detailed structure of the VFLGAN discriminators. In $D_i$ $i\in\{1,2\}$, the layers before the intermediate feature construct the first part of $D_i$ in Fig. \ref{fig:vflmodel}, and the layers after the intermediate feature construct the second part of $D_i$. The intermediate feature ($\boldsymbol{f}_i$) is the output of the first part of $D_i$, i.e.,
\begin{equation}
    \boldsymbol{f}_i = D_i^1(\boldsymbol{x}_i) \quad i\in \{1,2\},
\end{equation}
where $D_i^1$ denotes the first part of $D_i$ and $\boldsymbol{x}_i$ denotes the input of $D_i$.
The intermediate features are transmitted to the server where the concatenation of the intermediate features is input to $D_s$. $D_s$ is trained to distinguish $[{\tilde{\boldsymbol{x}}}_1, {\tilde{\boldsymbol{x}}}_2]$ and $[\boldsymbol{x}_1,\boldsymbol{x}_2]$ by optimizing $\mathcal{L}(D_s,{\tilde{\boldsymbol{f}}},\boldsymbol{f})$ in \eqref{eq:obj_system_model}. As a result, by optimizing \eqref{eq:obj_system_model_2}, $D_s$ can guide $G_1$ and $G_2$ to learn the correlation between $\boldsymbol{x}_1$ and $\boldsymbol{x}_2$ and generate better synthetic data, i.e., 
\begin{equation}
\label{eq: 6}
    P_{[G_1(\boldsymbol{z}),G_2(\boldsymbol{z})]} \approx P_{[\boldsymbol{x}_1,\boldsymbol{x}_2]}. 
\end{equation}
}

\begin{figure}[htbp]
\center{\includegraphics[width=\linewidth, scale=1.2]{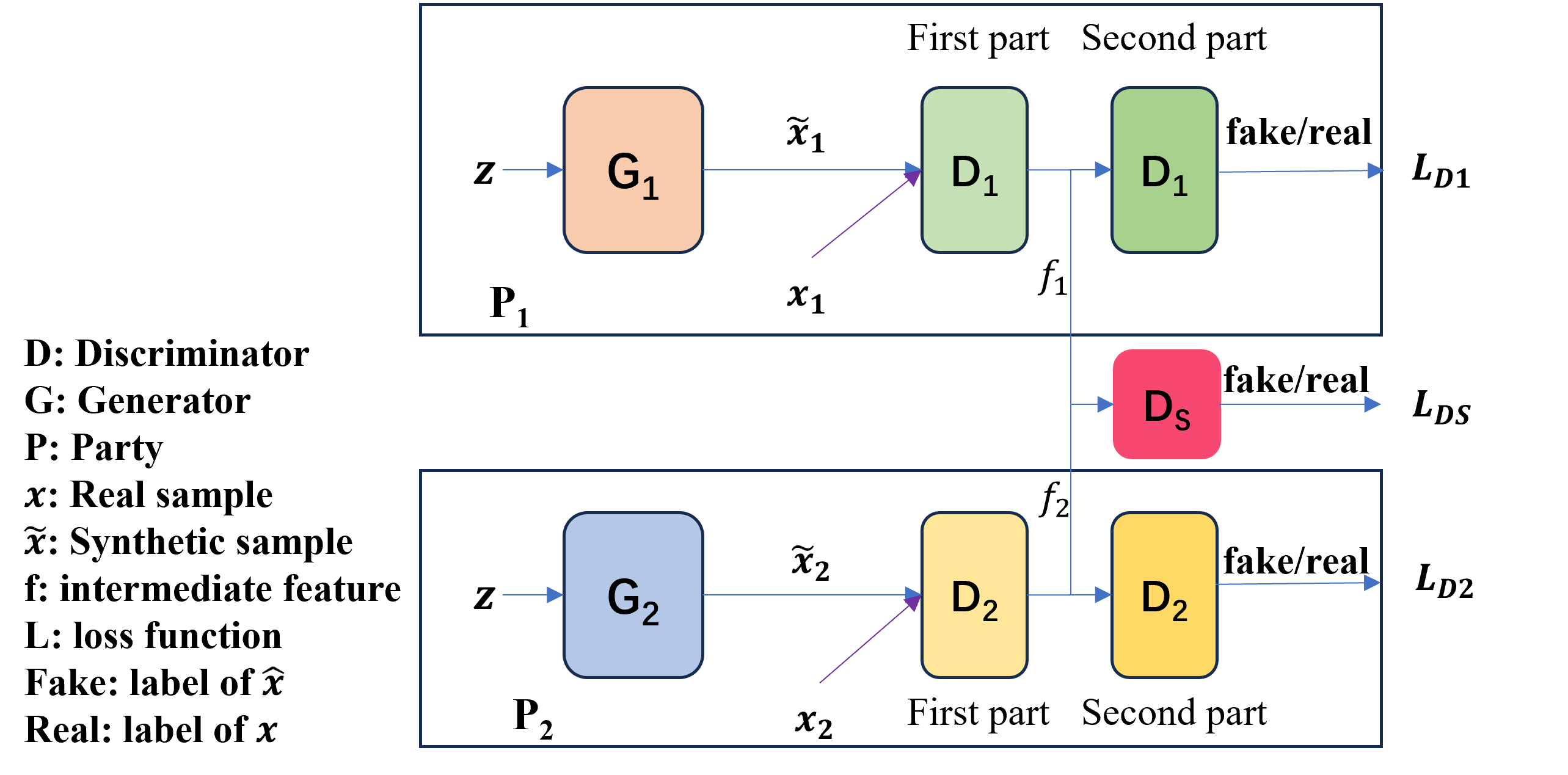}}
\caption{Framework of the proposed VFLGAN.}
\label{fig:vflmodel}
\end{figure}

{
Concerns may arise regarding the potential for privacy leakage through intermediate features within a neural network. However, reconstructing the input from the neural network's output, without access to the model's parameters, presents a difficult ill-posed problem \cite{dosovitskiy2016generating}. For instance, the authors in \cite{Yang-ccs-2019} attempt to invert neural networks used for classification, where the adversary is assumed to possess an auxiliary dataset and the ability to interact with the model by submitting requests and receiving responses. Nevertheless, in our scenario, such access to the model or the ability to submit requests is precluded. Another study \cite{9833677} endeavours to reconstruct training data from the outputs of a trained neural network, with the adversary having access to all but one record in the training dataset and white-box access to the model. Despite the strong assumptions made in \cite{Yang-ccs-2019,9833677}, the reconstructed images are notably blurred, rendering them less precise than the original images. While a blurred image might still reveal some private information, a blurred tabular record is likely to disclose minimal privacy details. Furthermore, to address these concerns, we have developed an auditing scheme, detailed in Section \ref{sec:Auditing Scheme for Intermediate Features}, designed to assess the extent of privacy leakage through intermediate features.
}

\subsection{Training Process of VFLGAN}

Following the training procedure of previous works \cite{goodfellow2014generative, gulrajani2017improved}, we optimize the discriminators and generators in sequence, i.e., we first optimize the discriminators for $T_d$ iterations and then optimize the generators for one iteration. According to our optimization objective \eqref{eq:obj_system_model}, the loss function of $D_s$ is $\mathcal{L}(D_s,{\tilde{\boldsymbol{f}}},\boldsymbol{f})$ and the loss function for $D_1$ and $D_2$ is $\mathcal{L}(D_i,{\tilde{\boldsymbol{x}}}_i,\boldsymbol{x}_i)$. In the remaining paper, we use $\mathcal{L}_{D_s}$ and $\mathcal{L}_{D_i}$ to denote the above loss functions for abbreviation. The gradients of $D_s$ parameters can be calculated according to its loss function by,
\begin{equation}
\label{eq:gradient_ds}
    \mathcal{G}_{D_s} = \nabla_{\boldsymbol{\theta}_{D_s}}\mathcal{L}_{D_s}.
\end{equation}
Note that $\mathcal{L}_{D_s}$ also contributes to the gradients of the first part of $D_1$ and $D_2$ parameters. Thus, the gradients of $D_1$ and $D_2$ parameter can be calculated by,
\begin{align}
\label{eq:gradient_d1}
    \mathcal{G}_{D_1} = \nabla_{\boldsymbol{\theta}_{D_1}}\mathcal{L}_{D_1}+\nabla_{\boldsymbol{\theta}_{D_1}}\mathcal{L}_{D_s}, \\
\label{eq:gradient_d2}
    \mathcal{G}_{D_2} = \nabla_{\boldsymbol{\theta}_{D_2}}\mathcal{L}_{D_2}+\nabla_{\boldsymbol{\theta}_{D_2}}\mathcal{L}_{D_s}.
\end{align}
Finally, the parameters of the discriminators can be updated by,
\begin{equation}
    \label{eq:udgrade_d}
    \boldsymbol{\theta}_{D_i} = \boldsymbol{\theta}_{D_i} - \eta_{D_i} \mathcal{G}_{D_i} \quad i\in \{1,2,S\},
\end{equation}
where $\eta_{D_i}$ denotes the learning rate of $D_i$.

% According to the optimization objective \eqref{eq:obj_system_model}, 
Notably, when we optimize the generators according to the objective function \eqref{eq:obj_system_model_2}, the output of the discriminators is maximised.
However, the optimizers in the DL field, like Adam \cite{kingma2014adam}, usually minimize the loss function. Thus, the loss function for the generators is derived as,
\begin{equation}
    \label{eq:g_loss}
    \mathcal{L}_{G} = -\sum_{i=1}^2\mathbb{E}[D_i(G_i(\boldsymbol{z})]-\mathbb{E}[D_s([\boldsymbol{f}_1,\boldsymbol{f}_2])],
\end{equation}
where $\boldsymbol{f}_i$ denotes the intermediate feature of $D_i$ as shown in Fig. \ref{fig:vflmodel}. Then, the gradients and parameter update of $G_i$ can be calculated as follows, 
\begin{align}
\label{eq:gradient_g}
    &\mathcal{G}_{G_i} = \nabla_{\boldsymbol{\theta}_{G_i}}\mathcal{L}_{G}, \\
\label{eq:udgrade_g}
    &\boldsymbol{\theta}_{G_i} = \boldsymbol{\theta}_{G_i} - \eta_{G_i} \mathcal{G}_{G_i} \quad i\in\{1,2\}.
\end{align}
\emph{Note} $D_s$ also contributes to the loss function of $G_1$ and $G_2$, through which $G_1$ and $G_2$ learn to generate synthetic data satisfying \eqref{eq: 6}.

The training process of the VFLGAN is summarized in Algorithm \ref{alg:Training DP-VFL-GAN}. First, we train the discriminators for $T_d$ iterations. During each iteration, we subsample a mini-batch of real data, $[\boldsymbol{x}_1^B,\boldsymbol{x}_2^B]$, where $B$ denotes the batch size, from the training dataset and generate a mini-batch of synthetic data $[\tilde{\boldsymbol{x}}_1^B,\tilde{\boldsymbol{x}}_2^B]$ with $G_1$ and $G_2$. And the discriminators are trained to distinguish real data and synthetic data. Second, we train the generators to generate more realistic synthetic data for one iteration. The first and second steps repeat for $T_{max}$ epochs, and the algorithm outputs trained $G_1$ and $G_2$. 

\begin{algorithm}[htbp]
\caption{Training Process of (DP-)VFLGAN}\label{alg:Training DP-VFL-GAN}
% \small
\footnotesize
\begin{algorithmic}[1]
\REQUIRE $G_1$ and $G_2$: generators; $D_1$ and $D_2$: discriminators; $\boldsymbol{f}_i$: the intermediate feature of $D_i$; $n_{D_1}$ and $n_{D_2}$: number of layers of $D_1$ and $D_2$; $D_s$: shared discriminator; $X$: dataset $B$: batch size; $T_{max}$: maximum training epochs; $T_d$: discriminators' update steps; $\eta$: learning rate; $l$: latent dimension; $\theta$: parameters of VFLGAN; $\mathcal{G}_{D_i}^1$: gradients of parameters of the first layer of $D_i$.
\ENSURE Trained $G_1$ and $G_2$.

\noindent{\rule{0.95\linewidth}{0.4pt}}

 Initialize the generators and discriminators; \\
\For{$epoch$ in $\{1,2,\cdots,T_{max}\}$}
{
    \tcp*[h]{\textit{update discriminators (line 1 to 19)}}\;
    \For{$iter$ in $\{1,2,\cdots,T_d\}$}
    {
        $\boldsymbol{x}^B=[\boldsymbol{x}_1^B,\boldsymbol{x}_2^B]\subset X$ \tcp*[h]{\textit{Subsample a mini-batch of data}};\\
        
        Generate $\boldsymbol{z}^B$ with $z \sim \mathcal{N}(0,1)^l$ \\
        $\tilde{\boldsymbol{x}}^B = [\tilde{\boldsymbol{x}}_1^B, \tilde{\boldsymbol{x}}_2^B] = [G_1(\boldsymbol{z}^B), G_2(\boldsymbol{z}^B)] $ \tcp*[h]{\textit{Generate synthetic data}};\\
        \tcp*[h]{\textit{Compute losses of $D_1$, $D_2$, and $D_s$ (line 5 to 7)}}\;
        
        $
        \begin{aligned}
        \mathcal{L}_{D_1} &= -\mathbb{E}[D_1(\boldsymbol{x}_1^B)]+\mathbb{E}[D_1(\tilde{\boldsymbol{x}}_1^B)] \\
        & + \lambda \mathbb{E}\left[\left(\|\nabla D_1(\alpha^B \boldsymbol{x}_1^B+(1-\alpha^B) \tilde{\boldsymbol{x}}_1^B)\|_2-1\right)^2\right]
        \end{aligned} 
        $\\

        $
        \begin{aligned}
        \mathcal{L}_{D_2} &= -\mathbb{E}[D_2(\boldsymbol{x}_2^B)]+\mathbb{E}[D_2(\tilde{\boldsymbol{x}}_2^B)] \\
        & + \lambda \mathbb{E}\left[\left(\|\nabla D_2(\alpha^B \boldsymbol{x}_2^B+(1-\alpha^B) \tilde{\boldsymbol{x}}_2^B)\|_2-1\right)^2\right]
        \end{aligned} 
        $\\

        $
        \begin{aligned}
        \mathcal{L}_{D_s} &= -\mathbb{E}[D_s([\boldsymbol{f}_1,\boldsymbol{f}_2])]+\mathbb{E}[D_s([\tilde{\boldsymbol{f}}_1,\tilde{\boldsymbol{f}}_2])] \\
        & + \lambda \mathbb{E}\left[\left(\|\nabla D_s(\alpha^B [\boldsymbol{f}_1,\boldsymbol{f}_2]+(1-\alpha^B) [\tilde{\boldsymbol{f}}_1,\tilde{\boldsymbol{f}}_2])\|_2-1\right)^2\right] \\
        \end{aligned} 
        $\\
        
        $\mathcal{G}_{D_1} = \nabla_{\boldsymbol{\theta}_{D_1}} \mathcal{L}_{D_1} + \nabla_{\boldsymbol{\theta}_{D_1}}\mathcal{L}_{D_s}$ \tcp*[h]{\textit{Compute gradients of $D_1$}};\\
        $\mathcal{G}_{D_2} = \nabla_{\boldsymbol{\theta}_{D_2}} \mathcal{L}_{D_2} + \nabla_{\boldsymbol{\theta}_{D_2}}\mathcal{L}_{D_s}$ \tcp*[h]{\textit{Compute gradients of $D_2$}};\\
        $\mathcal{G}_{D_s} = \nabla_{\boldsymbol{\theta}_{D_s}}\mathcal{L}_{D_s}$ \tcp*[h]{\textit{Compute gradients of $D_s$}};\\

        $\boldsymbol{\theta}_{D_s} = \boldsymbol{\theta}_{D_s}-\eta_{D_s}\mathcal{G}_{D_s}$ \tcp*[h]{\textit{Update parameters of $D_s$}};\\
        
        \tcp*[h]{\textit{Update parameters of the second to the last layers of $D_1$}}\;
        $\boldsymbol{\theta}_{D_1}^{2:n_{D_1}} = \boldsymbol{\theta}_{D_1}^{2:n_{D_1}}-\eta_{D_1}\mathcal{G}_{D_1}^{2:n_{D_1}}$ \\
        
        \tcp*[h]{\textit{Update parameters of the second to the last layers of $D_2$}}\;
        $\boldsymbol{\theta}_{D_2}^{2:n_{D_2}} = \boldsymbol{\theta}_{D_2}^{2:n_{D_2}}-\eta_{D_2}\mathcal{G}_{D_2}^{2:n_{D_2}} $\\
        
        \tcp*[h]{\textit{Update parameters of the first layer of $D_1$ and $D_2$ (line 14 to 19)}}\; 
        \uIf{Training a differentially private version}{
        $\boldsymbol{\theta}_{D_1}^1 = \boldsymbol{\theta}_{D_1}^1-\eta_{D_1}(clip(\mathcal{G}_{D_1}^1,C)+\mathcal{N}\left(0, \sigma^2 (2C)^2 I\right))$ \\
        $\boldsymbol{\theta}_{D_2}^1 = \boldsymbol{\theta}_{D_2}^1-\eta_{D_2}(clip(\mathcal{G}_{D_2}^1,C)+\mathcal{N}\left(0, \sigma^2 (2C)^2 I\right))$} 
        \uElse{
        $\boldsymbol{\theta}_{D_1}^1 = \boldsymbol{\theta}_{D_1}^1-\eta_{D_1}\mathcal{G}_{D_1}^1 $\\
        $\boldsymbol{\theta}_{D_2}^1 = \boldsymbol{\theta}_{D_2}^1-\eta_{D_2}\mathcal{G}_{D_2}^1$} 
    }
        
    \tcp*[h]{\textit{update generators (line 20 to 25)}}\;
    
    % Generate $z^B$ with $z \sim \mathcal{N}(0,1)^l$ \\
    $\tilde{\boldsymbol{x}}^B = [G_1(\boldsymbol{z}^B), G_2(\boldsymbol{z}^B)]$ \tcp*[h]{\textit{Generate a mini-batch of fake data $\tilde{\boldsymbol{x}}^B$}};\\
    
    $\mathcal{L}_{G_1} = -\mathbb{E}[D_1(\tilde{\boldsymbol{x}}_1)]-\mathbb{E}[D_s([\tilde{\boldsymbol{f}}_1,\tilde{\boldsymbol{f}}_2])]$ \tcp*[h]{\textit{Compute losses of $G_1$}};\\
    $\mathcal{L}_{G_2} = -\mathbb{E}[D_2(\tilde{\boldsymbol{x}}_2)]-\mathbb{E}[D_s([\tilde{\boldsymbol{f}}_1,\tilde{\boldsymbol{f}}_2])]$ \tcp*[h]{\textit{Compute losses of $G_2$}};\\
    where $\tilde{\boldsymbol{f}}_i$ is the intermediate feature when the input of $D_i$ is $\tilde{\boldsymbol{x}}_i$ \\
    
    \tcp*[h]{\textit{Compute gradients of $G_1$ and $G_2$}}\;
    $\mathcal{G}_{G_1} = \nabla_{\boldsymbol{\theta}_{G_1}} \mathcal{L}_{G_1}$, $\quad$
    $\mathcal{G}_{G_2} = \nabla_{\boldsymbol{\theta}_{G_2}} \mathcal{L}_{G_2}$ 
        
    \tcp*[h]{\textit{Update parameters of $G_1$ and $G_2$}}\;
    $\boldsymbol{\theta}_{G_1} = \boldsymbol{\theta}_{G_1}-\eta_{G_1}\mathcal{G}_{G_1},\quad \boldsymbol{\theta}_{G_2} = \boldsymbol{\theta}_{G_2}-\eta_{G_2}\mathcal{G}_{G_2}$ 
}
 Return $G_1$ and $G_2$.
\end{algorithmic}
\end{algorithm}

\subsection{Differentially Private VFLGAN} \label{sec:Differentially Private VFLGAN}
The training process of DP-VFLGAN is also summarized in Algorithm \ref{alg:Training DP-VFL-GAN} since the training processes of VFLGAN and DP-VFLGAN are the same for most steps. There are mainly two differences between the two training processes. (i) We apply the proposed Gaussian mechanism \eqref{eq:add noise} in DP-VFLGAN. When updating the parameters of the first linear layers of $D_1$ and $D_2$, we clip and add Gaussian noise to the gradients as follows,
\begin{align}
\label{eq:clip}
    &clip(\mathcal{G}_{D_i}^1,C) = \mathcal{G}_{D_i}^1 / \max \left(1,\left\|\mathcal{G}_{D_i}^1\right\|_2 / C\right) \quad i\in \{1,2\}, \\
\label{eq:add noise}
    &\mathcal{G}_{D_i}^1 = clip(\mathcal{G}_{D_i}^1,C) + \mathcal{N}\left(0, \sigma^2 (2C)^2 I\right),
\end{align}
where $\mathcal{G}_{D_i}^1$ denotes the gradients of the first layer parameters of $D_i$ and $C$ denotes the clipping bound. Previous Gaussian mechanisms for GANs \cite{Xue01, xie2018differentially, zhang2018differentially} clip and add Gaussian noise to all discriminator parameters. However, we proved in Theorem \ref{theorem RDP guarantee} that clipping and adding Gaussian noise to the gradients of the first-layer parameters of $D_1$ and $D_2$ can provide a DP guarantee for $G_1$ and $G_2$. (ii) We need to set a privacy budget before training DP-VFLGAN to ensure $G_1$ and $G_2$ satisfy $(\epsilon, \delta)$-DP. Here, we apply the official implementation of \cite{wang2019subsampled} to select the proper $T_{max}$ and $\sigma$ to achieve the privacy budget, i.e., $(\epsilon, \delta)$-DP, and the details are described in Section \ref{sec:RDP Guarantee}.

\subsection{RDP Guarantee and Proof} \label{sec:RDP Guarantee}
\begin{myTheo} \label{theorem RDP guarantee}
    (RDP Guarantee) All the local discriminators and generators satisfy ($\alpha$, $\alpha/(2\sigma^2)$)-RDP in one training iteration of DP-VFLGAN.
\end{myTheo}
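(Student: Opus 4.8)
The plan is to isolate the single point in one iteration where private data is accessed and then released in any form, show that step is an instance of the Gaussian mechanism with the correct sensitivity, and transfer the guarantee to everything else via post-processing (Proposition~\ref{theorem: post-processing}). Fix two adjacent datasets and consider one discriminator-update iteration of Algorithm~\ref{alg:Training DP-VFL-GAN}. The only parameters whose update injects fresh noise are the first-layer parameters, updated by $\boldsymbol{\theta}_{D_i}^1 \leftarrow \boldsymbol{\theta}_{D_i}^1 - \eta_{D_i}(clip(\mathcal{G}_{D_i}^1,C) + \mathcal{N}(0,\sigma^2(2C)^2 I))$ for $i\in\{1,2\}$; I would take the map from the subsampled private batch to $clip(\mathcal{G}_{D_i}^1,C)$ as the function $f$ being privatized.

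First I would bound the $\ell_2$ sensitivity of $f$. The clipping rule \eqref{eq:clip} forces $\|clip(\mathcal{G}_{D_i}^1,C)\|_2 \le C$ for every input, so for any two adjacent datasets the triangle inequality gives $\|clip(\mathcal{G}_{D_i}^1,C) - clip(\mathcal{G}_{D_i}^{1\prime},C)\|_2 \le 2C$, hence $\Delta_2 f \le 2C$; this bound holds irrespective of batch size because the clip is applied to the aggregated batch gradient. Then I would apply Proposition~\ref{Prop: Gaussian mechenism} with sensitivity $\Delta_2 f = 2C$ and noise covariance $\sigma^2(2C)^2 I$, i.e. effective standard deviation $2C\sigma$: it yields $(\alpha,\alpha(2C)^2/(2(2C\sigma)^2))$-RDP, and the factors $2C$ cancel to give precisely $(\alpha,\alpha/(2\sigma^2))$-RDP for each updated $\boldsymbol{\theta}_{D_i}^1$.

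It then remains to propagate this bound to the other quantities produced during the iteration — the updates to the remaining layers of $D_1$ and $D_2$, the update of the shared discriminator $D_s$, the updates of $G_1$ and $G_2$ — and to their joint distribution. Here I would argue that, conditioned on the parameters at the start of the iteration and on the public latent noise $\boldsymbol{z}$ (hence on all purely synthetic quantities $\tilde{\boldsymbol{x}}_i$, $\tilde{\boldsymbol{f}}_i$, which contain no private information), each private dataset $X_i$ enters the computation only as the input to the first layer of $D_i$, so that every downstream gradient — $\mathcal{G}_{D_i}^{2:n_{D_i}}$, $\mathcal{G}_{D_s}$, and the generator gradients $\mathcal{G}_{G_i}$ — and hence every updated parameter, is a deterministic function of the already-privatized noisy first-layer gradients together with those public quantities. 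Post-processing (Proposition~\ref{theorem: post-processing}) then transfers the $(\alpha,\alpha/(2\sigma^2))$-RDP bound to the entire output of the iteration, which is exactly the claim that all local discriminators and generators are $(\alpha,\alpha/(2\sigma^2))$-RDP.

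I expect this last step to be the main obstacle: formalizing the assertion that $X_i$ influences the iteration ``only through the first layer of $D_i$''. Because VFLGAN couples the parties through the shared discriminator $D_s$ acting on the concatenated intermediate features, and because the WGAN-GP loss contains a gradient-penalty term evaluated at interpolates $\hat{\boldsymbol{x}}_i = \beta\boldsymbol{x}_i + (1-\beta)\tilde{\boldsymbol{x}}_i$ of real and synthetic data, one must check carefully that no un-privatized functional of $X_i$ is ever exposed in the released artifacts and, in particular, that the intermediate features and the non-first-layer parameters are either kept strictly local or only read out after the first-layer randomization. Once this information-flow bookkeeping is settled, the RDP accounting above is routine, and combining it with the composition, RDP-to-DP, and subsampled-RDP statements (Propositions~\ref{Prop: RDP composition}--\ref{theorem: subsample RDP}) over all $T_{max}$ and $T_d$ iterations would yield the end-to-end $(\epsilon,\delta)$-DP guarantee invoked in Section~\ref{sec:Differentially Private VFLGAN}.
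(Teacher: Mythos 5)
Your sensitivity analysis and the resulting $(\alpha,\alpha/(2\sigma^2))$ bound for the noised first-layer gradients are exactly the paper's: clipping \eqref{eq:clip} bounds each gradient by $C$, the triangle inequality gives $\Delta_2 f\le 2C$, and Proposition~\ref{Prop: Gaussian mechenism} with noise standard deviation $2C\sigma$ makes the factors of $2C$ cancel. (The paper additionally spells out that $\mathcal{G}_{D_i}^1$ depends on both parties' mini-batches through $\mathcal{L}_{D_s}$, so the bound is stated with respect to $\boldsymbol{x}_1^B$, $\boldsymbol{x}_2^B$, and the joint $\boldsymbol{x}^B$; your argument covers this implicitly, since the clipping bound is agnostic to which record changes.)

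The gap is in your propagation step. The assertion that $\mathcal{G}_{D_i}^{2:n_{D_i}}$, $\mathcal{G}_{D_s}$, and hence the updated upper-layer and shared-discriminator parameters are deterministic functions of the already-privatized first-layer gradients is false: those gradients are computed from the forward pass of the \emph{raw} private batch through $D_i$ and $D_s$ and are applied without any noise, so they are not post-processings of $clip(\mathcal{G}_{D_i}^1,C)+\mathcal{N}(0,\sigma^2(2C)^2I)$. The paper does not attempt that claim. Instead it post-processes the privatized first-layer \emph{parameters} $\boldsymbol{\theta}_{D_i}^1$, which it treats as the sole bottleneck between the private data and the artifacts the theorem actually protects: the intermediate features are written as $\boldsymbol{f}_i=func_{D_i}(\boldsymbol{\theta}_{D_i}^1\boldsymbol{x}_i)$, and the gradient reaching the generator is factored as $\boldsymbol{\delta}^G_{D_i}=\boldsymbol{\delta}^1_{D_i}\boldsymbol{\theta}_{D_i}^1$, with Proposition~\ref{theorem: post-processing} invoked in both cases. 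You correctly anticipated that this information-flow bookkeeping is the crux, but you did not supply the factorization-through-$\boldsymbol{\theta}_{D_i}^1$ argument, and the route you stated (everything downstream is a function of the noisy gradients) would not close it. Be aware, too, that the paper's own use of post-processing here is delicate: in both $func_{D_i}(\boldsymbol{\theta}_{D_i}^1\boldsymbol{x}_i)$ and $\boldsymbol{\delta}^1_{D_i}\boldsymbol{\theta}_{D_i}^1$ the outer map itself depends on the private data (directly, or through un-noised upper-layer parameters updated in the same iteration), which is outside the hypotheses of Proposition~\ref{theorem: post-processing}; so the step you flagged as the main obstacle is genuinely the weakest link in either version of the argument.
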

\begin{proof}
The proof of Theorem \ref{theorem RDP guarantee} is presented in \textbf{Appendix} \ref{sec:proof of RDP guarantee}.
\end{proof}

Now, we introduce how to select proper $\sigma$ and $T_{max}$ to meet our DP budget using Theorem \ref{theorem RDP guarantee}. First, the RDP guarantee in Theorem \ref{theorem RDP guarantee} can be enhanced by Proposition \ref{theorem: subsample RDP} for external attackers and the server since we subsample mini-batch records from the whole training dataset. Then, RDP budget is accumulated by $T_{max}$ iterations, which can be calculated according to Proposition \ref{Prop: RDP composition}. Last, the RDP guarantee is converted to DP guarantee using \eqref{eq:RDP to DP}. We can adjust the $\sigma$ and $T_{max}$ to make the calculated DP guarantee meet our DP budget. 
Notably, similar to the DP guarantee in \cite{Xue01}, the above DP guarantee specifically addresses external threats. This is due to the deterministic nature of mini-batch selection in each training iteration for internal parties (excluding the server), as opposed to a subsampling process. For internal adversaries, privacy can be enhanced by sacrificing efficiency. For example, the mini-batch size can be changed to $\hat{B}>B$. When updating the parameters, each party can randomly select the gradients of $B$ samples and mask the gradients of other ($\hat{B}-B$) samples. In this way, all parties do not know precisely which samples are used by others to update the parameters. 

\subsection{Differences between VFLGAN and VertiGAN}
As discussed in Section \ref{sec:vertigan}, in VertiGAN, the objective functions \eqref{eq:obj_wgan_gp} of the local discriminator and generator in each party are the same as WGAN\_GP's objective functions \eqref{eq:min-max objective}. According to \cite{gulrajani2017improved}, this can ensure that the synthetic data ($\tilde{\boldsymbol{x}}_i$) generated by part $i$ follows a similar distribution of the real data ($\boldsymbol{x}_i$) of party $i$, i.e., $P_{\tilde{\boldsymbol{x}}_i} \simeq P_{{\boldsymbol{x}}_i}$. After calculating the gradients of each local generator backbone, VertiGAN applies HFL \eqref{eq:hfl} to maintain the same generator backbone across all parties and assumes that the same generator backbones can learn the correlations among attributes across those parties, i.e., $P_{[\tilde{\boldsymbol{x}}_1, \cdots, \tilde{\boldsymbol{x}}_N]} \simeq P_{[\tilde{\boldsymbol{x}}_1, \cdots, \tilde{\boldsymbol{x}}_N]}$, which is experimentally proved less effective in this paper. 
On the other hand, VFLGAN incorporates the objective functions of WGAN\_GP into the VFL framework with a shared discriminator. Accordingly, we propose novel objective functions, \eqref{eq:obj_system_model} and \eqref{eq:obj_system_model_2}, for VFLGAN. From \eqref{eq:obj_system_model} and \eqref{eq:obj_system_model_2}, we can see that the objective functions of VFLGAN involve all parties through the shared discriminator while the objective functions of each party are independent in VertiGAN. In \eqref{eq:obj_system_model} and \eqref{eq:obj_system_model_2}, according to \cite{gulrajani2017improved}, the fractions involving local discriminatory $D_i$ can ensure that $P_{\tilde{\boldsymbol{x}}_i} \simeq P_{{\boldsymbol{x}}_i}$ and the fractions involving the shared discriminator $D_s$ can ensure that $P_{[\tilde{\boldsymbol{x}}_1, \cdots, \tilde{\boldsymbol{x}}_N]} \simeq P_{[\tilde{\boldsymbol{x}}_1, \cdots, \tilde{\boldsymbol{x}}_N]}$. 

Another difference is that the differentially private VertiGAN applies the same Gaussian mechanism as \cite{xie2018differentially, zhang2018differentially}, which clips and adds noise to all gradients. However, as proved in Thereom \ref{theorem RDP guarantee}, clipping and adding noise to the gradients of the first linear layer of the local discriminator is enough to provide a DP guarantee to all local discriminators and local generators in DP-VFLGAN. Based on this discovery, DP-VFLGAN applies a new variant of the Gaussian mechanism as discussed in Section \ref{sec:Differentially Private VFLGAN}.

\section{Privacy Leakage Measurement} \label{sec:Privacy Leakage Measurement}

In this section, we introduce two distinct auditing schemes, i.e., the Auditing Scheme for Synthetic Datasets (ASSD) and the Auditing Scheme for Intermediate Features (ASIF). These schemes are designed to conduct MI attacks within a leave-one-out setting \cite{Jiayuan01}, a scenario in which the adversary is presumed to be aware of the entire dataset except for the target record \((x_t)\). While stringent for an external adversary, this assumption provides an empirical upper limit on the success rates of MI attacks. Additionally, this setting is deemed realistic for data publishers who can readily manipulate the inclusion of the target record in the training process.
The differentiation between our MI attack strategy and that described in \cite{Jiayuan01} hinges on two primary aspects: (i) The focus of our MI attack is on synthetic datasets, as opposed to the machine learning models targeted in \cite{Jiayuan01}; (ii) Our auditing schemes adopt the shadow model attack method \cite{Theresa01, Reza01, houssiau01} as the adversary model, contrasting with the binary hypothesis test approach utilized in \cite{Jiayuan01}, which is specifically crafted for classification models.

\subsection{Auditing Scheme for Synthetic Datasets} \label{sec:Auditing Scheme for Synthetic Datasets}

The following privacy game can define the MI attack applied in ASSD. \textbf{(i)} First, the challenger selects a fixed target record $\boldsymbol{x}_t$ from a given training dataset $X$. \textbf{(ii)} Then, the challenger trains a generator $G_0 \stackrel{s_{0}}{\longleftarrow} \mathcal{T}(X \setminus \boldsymbol{x}_t)$ on $X \setminus \boldsymbol{x}_t$ (dataset $X$ excluding $\boldsymbol{x}_t$) by using a fresh random seed $s_{0}$ in the training algorithm $\mathcal{T}$. In this paper, $\mathcal{T}$ refers to Algorithm \ref{alg:Training DP-VFL-GAN}. \textbf{(iii)} The challenger trains another generator $G_1 \stackrel{s_{1}}{\longleftarrow} \mathcal{T}(X)$ on $X$ by using a fresh random seed $s_{1}$. 
\textbf{(iv)} The challenger applies the two generators, $G_0$ and $G_1$, to produce two synthetic datasets, $\tilde{X}_1$ and $\tilde{X}_2$.
\textbf{(v)} The challenger flips a random unbiased coin $b\in \{0,1\}$ and sends the synthetic dataset and target record \{$\tilde{X}_b, \boldsymbol{x}_t$\} to the adversary. \textbf{(vi)} The adversary tries to figure out the true $b$ based on the observation of $\{\tilde{X}_b, \boldsymbol{x}_t\}$, i.e., $\hat{b} \leftarrow \mathcal{A}(\tilde{X}_b, \boldsymbol{x}_t)$. \textbf{(vii)} If $\hat{b}=b$, the adversary wins. Otherwise, the challenger wins. This MI attack is summarized in Algorithm \ref{Game:MI}. 

We apply the shadow modelling approach \cite{Reza01} to train the adversary model $\mathcal{A}$ in Algorithm \ref{Game:MI} through the following process. \textbf{(i)} Train $M$ generators, $G_{0_{1:M}}$, on dataset $X\setminus \boldsymbol{x}_t$ by using different random seeds $s_{0_{1:M}}$ in the training algorithm $\mathcal{T}$. \textbf{(ii)} Apply $G_{0_{1:M}}$ to generate $M$ synthetic datasets $\tilde{X}_{0_{1:M}}$. \textbf{(iii)} Train $M$ generators, $G_{1_{1:M}}$, on dataset $X$ by using different random seeds $s_{1_{1:M}}$. \textbf{(vi)} Apply $G_{1_{1:M}}$ to generate $M$ synthetic datasets $\tilde{X}_{1_{1:M}}$. \textbf{(v)} Extract features of the synthetic datasets $\tilde{X}_{0_{1:M}}$ and $\tilde{X}_{1_{1:M}}$. \textbf{(vi)} Train the adversary model $\mathcal{A}$ to distinguish whether the features are from $\tilde{X}_{0_{1:M}}$ or $\tilde{X}_{1_{1:M}}$. The training process is summarized in Algorithm \ref{alg:MI adversary}. 

In summary, the adversary is trained to detect whether the target exists in the training data through the features of the synthetic dataset. Then, we evaluate the capability of the adversary through the MI attack. The success rate of the MI attack can reflect the potential privacy leakage that external attackers can take advantage of through the synthetic dataset and MI attack.

\begin{algorithm}[htbp]
\footnotesize
\caption{Membership Inference Attack}\label{Game:MI}
\begin{algorithmic}[1]
\REQUIRE Training algorithm $\mathcal{T}$; dataset $X$; target record $\boldsymbol{x}_t$; unbiased coin $b$; fresh random seeds $s_{0}$ and $s_{1}$; generators of VFLGAN $G_i$; synthetic dataset $\tilde X$; Gaussian noise $\boldsymbol{z}$.
\ENSURE Success or failure.
\noindent{\rule{0.95\linewidth}{0.4pt}}
\STATE $G_0 \stackrel{s_{0}}{\longleftarrow} \mathcal{T}(X \setminus \boldsymbol{x}_t)$
\STATE $G_1 \stackrel{s_{1}}{\longleftarrow} \mathcal{T}(X)$
\STATE $\tilde{X}_0 \longleftarrow G_0(\boldsymbol{z})$ \& $\tilde{X}_1 \longleftarrow G_1(\boldsymbol{z})$
\STATE $b \sim \{0,1\}$
\STATE $\hat{b} \longleftarrow \mathcal{A}(\tilde{X}_b, \boldsymbol{x}_t)$
\IF{$\hat{b}==b$}
\STATE Output success
\ELSE
\STATE Output failure
\ENDIF
\end{algorithmic}
\end{algorithm}

\begin{algorithm}[htbp]
\footnotesize
\caption{Training the Adversary of MI Attack} \label{alg:MI adversary}
\begin{algorithmic}[1]
\REQUIRE Training algorithms $\mathcal{T}$ and $\mathcal{T}_\mathcal{A}$; dataset $X$; target record $\boldsymbol{x}_t$; random seeds $s_{0_{1:M}}$ and $s_{1_{1:M}}$; synthetic dataset $\tilde X$; feature extraction function $Extr(\cdot)$.
\ENSURE Trained $\mathcal{A}$.
\noindent{\rule{0.95\linewidth}{0.4pt}}
\STATE $G_{0_{1:M}} \stackrel{s_{0_{1:M}}}{\longleftarrow} \mathcal{T}(X \setminus \boldsymbol{x}_t)$
\STATE $\tilde{X}_{0_{1:M}} \longleftarrow G_{0_{1:M}}$
\STATE $G_{1_{1:M}} \stackrel{s_{1_{1:M}}}{\longleftarrow} \mathcal{T}(X)$
\STATE $\tilde{X}_{1_{1:M}} \longleftarrow G_{1_{1:M}}$
\STATE $Feat_{0_{1:M}} \longleftarrow Extr(\tilde{X}_{0_{1:M}})$ \& $Feat_{1_{1:M}} \longleftarrow Extr(\tilde{X}_{1_{1:M}})$
\STATE $\mathcal{A} \longleftarrow \mathcal{T}_\mathcal{A}(Feat_{0_{1:M}}, Feat_{1_{1:M}})$
\end{algorithmic}
\end{algorithm}

\subsection{Auditing Scheme for Intermediate Features} \label{sec:Auditing Scheme for Intermediate Features}
The MI attack and training process of the adversary applied in ASIF are very similar to those (Algorithm \ref{Game:MI} and Algorithm \ref{alg:MI adversary}) of ASSD. So we briefly introduce ASIF here and details can be found in Appendix \ref{app:Auditing Scheme for Intermediate Features}. In ASIF, the challenger also trains two VFLGANs ($\theta_0$ and $\theta_1$) on datasets $X$ and $X\setminus \boldsymbol{x}_t$, respectively, and uses their first part of local discriminators to generate intermediate features. The adversary tries to figure out whether the given intermediate feature is from $\theta_0$ or $\theta_1$. Similar to ASSD, ASIF applies shadow model attack as the adversary model. The difference is that the adversary model of ASSD is to figure out whether the target record $\boldsymbol{x}_t$ appears in the training data through a given synthetic dataset, while the adversary model of ASIF is to do the same job through given intermediate features.

\section{Experimental Details and Results}

In this section, we commence by detailing our experimental setup, including the datasets, baseline methods, and evaluation metrics. Following this, we rigorously assess the effectiveness of VFLGAN and its privacy-enhanced counterpart, DP-VFLGAN. The section concludes with an in-depth analysis of the privacy leakage associated with VFLGAN and DP-VFLGAN, employing our newly developed privacy auditing schemes, ASSD and ASIF.

\subsection{Experiment Setup}
\subsubsection{Datasets}

{We employ three datasets for evaluation: the MNIST dataset, Adult dataset \cite{misc_adult_2}, Wine Quality datasets \cite{misc_wine_quality_186}, Credit dataset \cite{misc_south_german_credit_522}, and HCV dataset \cite{misc_hepatitis_c_virus_for_egyptian_patients_503}.} 
All datasets utilized in this study are derived from real-world sources. To simulate the scenario where two distinct parties possess different attributes of the same group of data samples, we divided each dataset into two sub-datasets through a vertical split.

\textbf{MNIST} is a widely recognized dataset comprised of handwriting digit images. In our experiment, each image is transformed into a one-dimensional vector with 784 attributes. These attributes are split between two clients in a vertically partitioned manner: the first 392 attributes to Client 1, and the remaining 392 attributes to Client 2. Each attribute is an integer ranging from 0 to 255. Although image data is not typically partitioned vertically, we utilize this to demonstrate the shortcomings of VertiGAN \cite{Xue01}, which our proposed VFLGAN addresses effectively.

\textbf{Adult} \cite{misc_adult_2} is a prominent machine-learning dataset used for classification tasks. It consists of fifteen attributes: fourteen features describing various aspects of an individual, and a label indicating whether the person's income exceeds 50,000 dollars. The dataset comprises six integer attributes and nine categorical attributes.

\textbf{Wine quality} \cite{misc_wine_quality_186} is used for classification tasks and includes two subsets: Red-Wine-Quality and White-Wine-Quality. Each subset contains twelve attributes, with eleven continuous attributes serving as features of a wine sample and one categorical attribute denoting the wine's quality.

\textbf{Credit} \cite{misc_south_german_credit_522} is a financial dataset used for classification tasks. It consists of 21 attributes: 20 features describing various aspects of an individual and a label indicating the person's credit. The dataset comprises 3 integer attributes and 18 categorical attributes.

\textbf{HCV} \cite{misc_hepatitis_c_virus_for_egyptian_patients_503} is a medical dataset used for classification tasks. It consists of 29 attributes: 28 features describing personal information and medicine check results and a label indicating the HCV staging. The dataset comprises 13 integer attributes and 16 categorical attributes.

\textbf{Preprocessing}. Same as \cite{gulrajani2017improved}, the MNIST images undergo normalization, with all attributes scaled to the range of $[-1,1]$. We adopt a different strategy for the tabular datasets from that used in \cite{Xue01}. Instead of transforming all attributes to binary form that can cause information loss and potential privacy breach \cite{Theresa01}, we convert categorical attributes into one-hot vectors while integer and continuous attributes are standardized using the following formula:
\begin{equation}
\label{eq:normalization}
    a = (a-\mu_a)/\sigma_a,
\end{equation}
where $\mu_a$ and $\sigma_a$ represent the mean and variance of the attribute $a$, respectively.
The main architectures of generators for different datasets are similar but the last activation layers vary according to the different preprocessing. For generators trained on the MNIST dataset, we utilize the $\operatorname{tanh}(\cdot)$ function as the final activation layer. This choice aligns with the fact that all standardized attributes range from -1 to 1. In contrast, for generators trained on tabular datasets, the approach differs. Here, the identity function is applied as the final activation layer for normalized integer and continuous attributes, acknowledging that these do not have a fixed range. For one-hot attributes, we implement the Gumbel softmax function, as detailed in \cite{jang2016categorical}.

\subsubsection{Baseline models}
First of all, we employ WGAN\_GP \cite{gulrajani2017improved} as a baseline model, which is trained in a centralized manner, to establish a performance upper bound for models trained in a vertically partitioned manner. Next, we select VertiGAN and DPLT as comparative models to underscore the superiority of VFLGAN in vertically partitioned scenarios. Another baseline is a modified version of VFLGAN, named VFLGAN-base, which omits the second part of $D_1$ and $D_2$, allowing us to assess the impact of these modules. Readers can refer to Appendix \ref{app:Details of VFLGAN-base} for details of VFLGAN-base. 

We apply WGAN\_GP to evaluate the proposed Gaussian mechanism and choose GS-WGAN \cite{chen2020gs} and DPSGD GANs \cite{xie2018differentially, zhang2018differentially} as baselines. This is due to their inability to provide a DP guarantee for our proposed VFLGAN. Notably, we maintain consistent generator and discriminator architectures across various differentially private mechanisms for a fair comparison. For comprehensive information on the architectures of the proposed VFLGAN and the baselines, please see \textbf{Appendix} \ref{sec:Architectures}. Furthermore, we apply Proposition \ref{theorem: subsample RDP} to calculate the privacy budget for DPSGD GAN, ensuring a fair comparison by providing a tighter DP bound. This proposition is employed by both GS-WGAN and our mechanism for calculating the DP budget. We exclude PATE-GAN \cite{jordon2018pate} in our baselines, as its conceptual framework is akin to GS-WGAN but presents challenges in training. Specifically, the authors of GS-WGAN \cite{chen2020gs} reported difficulties in training PATE-GAN on the MNIST dataset.

\subsubsection{Utility Metrics}\label{sec:utility metrics} According to \cite{Xue01}, there are two primary metrics for evaluating the utility of synthetic datasets: statistical similarity and AI training performance.
For statistical similarity, we initially consider the Average Variant Distance (AVD) method, the same as \cite{Xue01,8667703}, which measures discrepancies between real and synthetic datasets. However, AVD is primarily applicable to discrete attributes, making it less suitable for our study. Instead, we utilize the Fréchet Distance (FD) \cite{frechet1957distance}, inspired by the Fréchet Inception Distance (FID) \cite{heusel2017gans}. A lower FD indicates a closer resemblance between the real and synthetic datasets. We directly compute the FD using raw data for tabular datasets. However, individual image pixels lack meaningful semantic content, so for the MNIST dataset, we employ an Autoencoder to derive more semantically rich latent features, which are then used to calculate FD. The FD is calculated as,
 \begin{equation}
    \label{eq:FD}
    FD(X,\tilde X) = \left\|\boldsymbol{\mu}-\boldsymbol{\tilde\mu}\right\|_2^2+\operatorname{Tr}\left(\boldsymbol{V}+\boldsymbol{\tilde{V}}-2\left(\boldsymbol{{V}}\boldsymbol{\tilde{V}}\right)^{1 / 2}\right),
\end{equation}
where $\operatorname{Tr}(\cdot)$ calculates the trace of the input matrix, $\boldsymbol{\mu}$ and $\boldsymbol{V}$ denote the mean and variance of the real dataset $X$, and $\boldsymbol{\tilde\mu}$ and $\boldsymbol{\tilde{V}}$ are those of the synthetic dataset $\tilde X$.
% \newline
We evaluate AI training performance using accuracy and F1 scores in four different settings: training and testing on real data (TRTR), on synthetic data (TSTS), training on real data and testing on synthetic data (TRTS), and vice versa (TSTR). The TRTR setting serves as a baseline for AI model performance. The similarity of metrics in TSTS, TRTS, and TSTR settings to TRTR indicates how closely the synthetic data distribution resembles real data distribution. Notably, higher accuracy and F1 scores in TSTS, TRTS, and TSTR settings do not necessarily imply higher synthetic data quality.
For the unlabeled synthetic MNIST data, we use the Inception Score (IS) \cite{barratt2018note}, which assesses how convincing the synthetic images are in belonging to real labels. IS is calculated as:
\begin{equation}
    \label{eq:IS}
    \operatorname{IS}(G)=\exp \left(\mathbb{E}_{\boldsymbol{\tilde x}} D_{K L}(p(y \mid \boldsymbol{\tilde x}) \| p(y))\right),
\end{equation}
where $p(y \mid \boldsymbol{\tilde x})$ ) is the conditional class distribution from the Inception model \cite{szegedy2016rethinking}, and $p(y)=\int_{\boldsymbol{\tilde x}} p(y \mid \boldsymbol{\tilde x}) p_g(\boldsymbol{\tilde x})$ is the marginal class distribution.
The IS measures the distribution distance between $p(y)$ and $p(y \mid \boldsymbol{\tilde x})$. A higher IS indicates a more realistic generation of synthetic data. Since the Inception model is not optimized for MNIST, we replace it with an MLP model trained for digit classification on the MNIST dataset.

\subsection{Utility Results of VFLGAN}
In this section, we evaluate the performance of VFLGAN using the above-mentioned six datasets, chosen for their relevance and representativeness in diverse scenarios. 
% Our comprehensive analysis aims to not only demonstrate the overall effectiveness of VFLGAN but also to highlight its specific strengths in handling vertically partitioned data. 
We delve into a detailed comparison of VFLGAN against established benchmarks, showcasing its enhanced ability to capture intricate correlations among attributes distributed across different parties. Additionally, we employ a suite of robust evaluation metrics, carefully selected to provide a multi-faceted view of VFLGAN's performance. These metrics are designed to assess the distribution similarity between synthetic datasets and real datasets and the applicability of synthetic datasets in downstream tasks, thereby offering a holistic understanding of the model's capabilities and superiority in the realm of synthetic data generation.

\subsubsection{MNIST Dataset}
All models were trained for 300 epochs on the MNIST dataset, and the FD curves, as shown in the left part of Fig. \ref{fig:FD_and_IS}, reveal insightful trends. WGAN\_GP emerges as the top performer in terms of FD, with VFLGAN closely following. VFLGAN-base also surpasses VertiGAN, indicating a more effective generation of vertically partitioned data through VFL compared to HFL.
For a practical demonstration, we selected models with the lowest FD across training epochs to generate synthetic images. The outcomes, as illustrated in Fig. \ref{fig:MNIST_demo}, reveal a marked difference in the quality of the generated images. 
Specifically, the synthetic dataset generated by VFLGAN has an FD score approximately twice as high as the synthetic dataset produced by WGAN\_GP, while VertiGAN's synthetic dataset has an FD score nearly seven times higher. This gap underscores the substantial advantage of VFLGAN over VertiGAN.  
Moreover, about half of VertiGAN's synthetic samples are either unrecognizable or lack continuity. On the other hand, while the VFLGAN-base produces more discernible images, some noise is evident, likely a result of information loss inherent to the VFL framework. To address this, we integrated an additional component into $D_1$ and $D_2$, i.e., the second part of $D_1$ and $D_2$ and corresponding loss functions, to enhance the similarity of the generated  ${\tilde{\boldsymbol{x}}}_1$ and ${\tilde{\boldsymbol{x}}}_2$ to the real samples $\boldsymbol{x}_1$ and $\boldsymbol{x}_2$, as per Equations \eqref{eq:gradient_d1} and \eqref{eq:gradient_d2}. This modification is evident in the improved clarity and recognizability of the synthetic data generated by VFLGAN. The right side of Fig. \ref{fig:FD_and_IS} illustrates the IS curves during training. The IS for real images sets a benchmark for synthetic ones. Consistent with the FD results, WGAN\_GP leads in terms of generative model performance, with VFLGAN coming in third but maintaining a significant lead over VertiGAN. The IS performance of VFLGAN-base also exceeds that of VertiGAN. Ultimately, the alignment of performance rankings between IS and FD further affirms the superiority of VFLGAN in generating high-quality synthetic data.

\begin{figure}[htbp]
\center{\includegraphics[width=\linewidth, scale=1.]{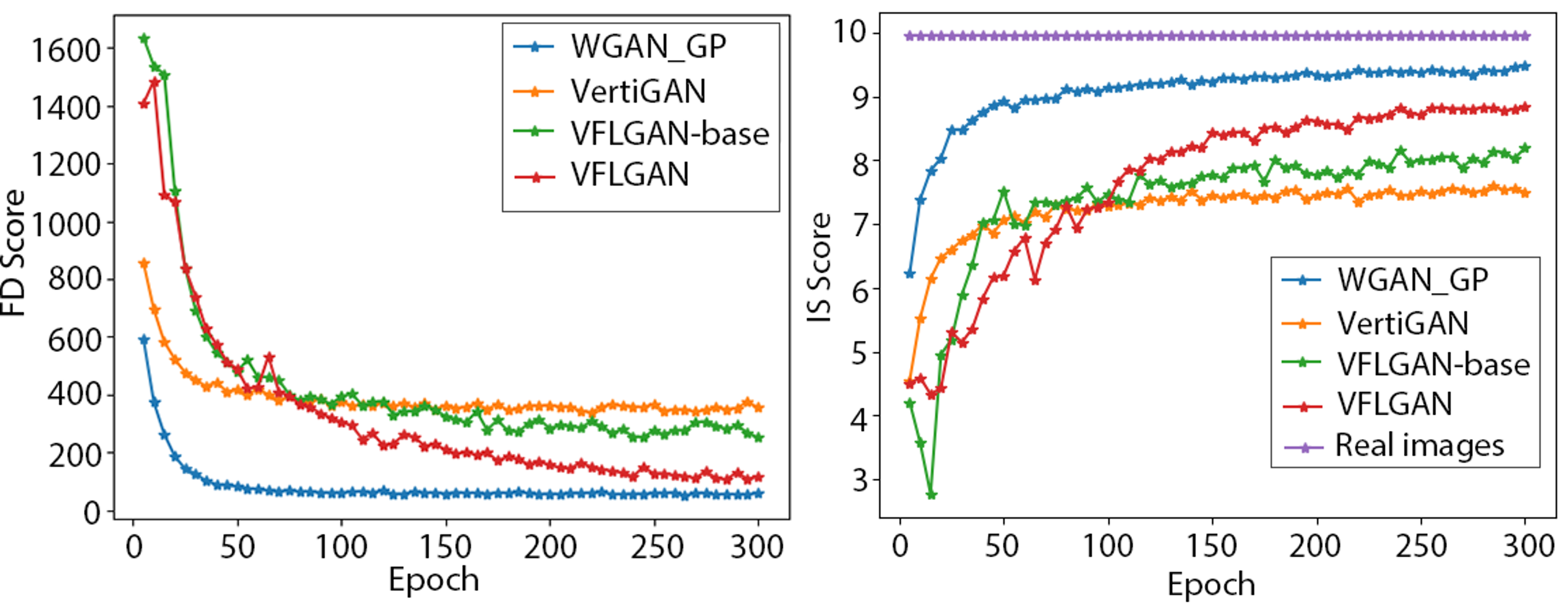}}
\caption{FD curves (lower is better) and IS curves (higher is better) on the MNIST dataset.}
\label{fig:FD_and_IS}
\end{figure}

\begin{figure}[htbp]
\center{\includegraphics[width=0.7\linewidth, scale=1.]{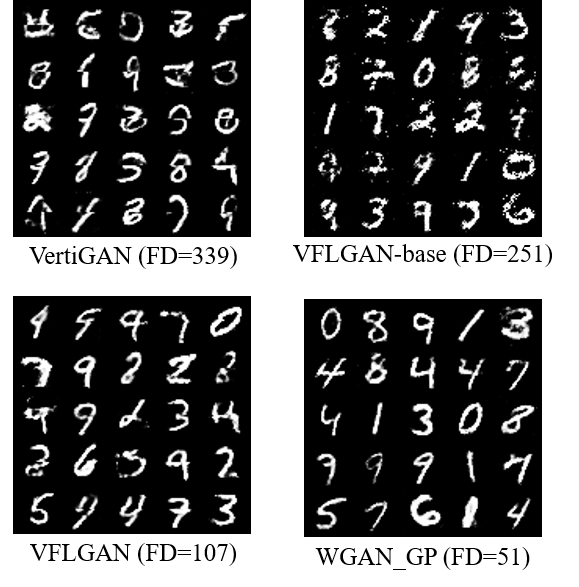}}
\caption{Synthetic digit samples.}
\label{fig:MNIST_demo}
\end{figure}

\begin{figure*}[htbp]
\center{\includegraphics[width=\linewidth, scale=1.]{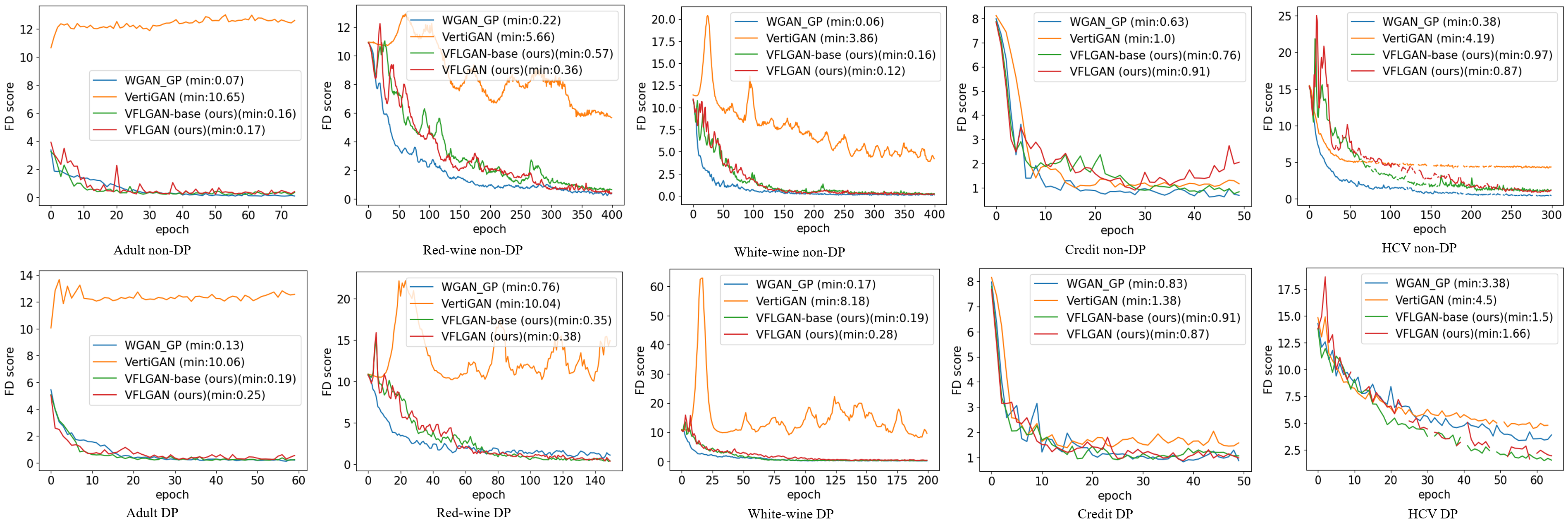}}
\caption{FD curves of different methods during training on the five datasets. The upper five figures show the FD curves of non-DP models. The lower five figures show the FD curves of DP models satisfying (10,$\delta$)-DP. Minimum values on each curve are shown by legends. The discontinuity in the FD curves on HCV datasets is caused by nan value when calculating FD.}
\label{fig:fid_all_datasets}
\end{figure*}

\begin{table*}[htbp]
\centering
{
\renewcommand\arraystretch{.8}
    \caption{Classification Accuracy and F1 score of Random Forest Models to Evaluate the AI Training Utility of the Synthetic Data on the Wine-Quality Datasets and Adult Dataset. Absolute different metric values compared with the TRTR setting are shown in the bracket and the main evaluation metric is `Total Difference', the \textcolor{red}{lower} the better.}
    \label{tab:utility classifiction}

\begin{center}
    \scalebox{0.8}{\begin{tabular}{ccccccccc|c}
    \hline
    {} & \multicolumn{2}{c}{\textbf{TRTR}} & \multicolumn{2}{c}{\textbf{TSTS}} & \multicolumn{2}{c}{\textbf{TRTS}} & \multicolumn{2}{c}{\textbf{TSTR}} & {}\\
    \textbf{Methods} & \textbf{Acc} & \textbf{F1} & \textbf{Acc} & \textbf{F1} & \textbf{Acc} & \textbf{F1} & \textbf{Acc} & \textbf{F1}  & {\textbf{Total Difference}} \\
     \hline
     \multicolumn{10}{c}{\textbf{Red-Wine-Quality}} \\
    \hline
     {\textbf{WGAN\_GP}\cite{gulrajani2017improved}} &0.59 &0.26 &0.57 (0.02) &0.28 (0.02) &0.55 (0.03) &0.25 (0.01) &0.58 (0.01) &0.26 (0.00) & \textbf{\textcolor{red}{0.09}}\\
     \hline
     \textbf{VertiGAN}\cite{Xue01} &0.59 &0.26 &0.53 (0.06) &0.21 (0.05) &0.51 (0.08) &0.19 (0.07) &0.39 (0.20) &0.14 (0.12) &0.58 \\  
    \hline
    \textbf{VFLGAN-base (ours)} &0.59 &0.26 &0.55 (0.04) &0.27 (0.01) &0.51 (0.08) &0.21 (0.05) &0.55 (0.04) &0.27 (0.01) & 0.23\\
     \hline
     \textbf{VFLGAN (ours)} &0.59 &0.26 &0.56 (0.03) &0.26 (0.00) &0.54 (0.05) &0.24 (0.02) &0.58 (0.01) &0.26 (0.00) & \textbf{\textcolor{blue}{0.11}}\\
     \hline
    \multicolumn{10}{c}{\textbf{White-Wine-Quality}} \\
    \hline
     {\textbf{WGAN\_GP}\cite{gulrajani2017improved}} &0.53 &0.21 &0.53 (0.00) &0.24 (0.03) &0.53 (0.00) &0.21 (0,00) &0.53 (0.00) &0.20 (0.01) & \textbf{\textcolor{red}{0.04}}\\
     \hline   
     \textbf{VertiGAN}\cite{Xue01} &0.53 &0.21 &0.53 (0.00) &0.16 (0.05) &0.48 (0.05) &0.16 (0.05) &0.47 (0.06) &0.14 (0.07) & 0.28\\  
    \hline
    \textbf{VFLGAN-base (ours)} &0.53 &0.21 &0.53 (0.00) &0.21 (0.00) &0.52 (0.01) &0.20 (0.01) &0.52 (0.01) &0.20 (0.01) & \textbf{\textcolor{red}{0.04}}\\
     \hline
     \textbf{VFLGAN (ours)} &0.53 &0.21 &0.51 (0.02) &0.20 (0.01) &0.50 (0.03) &0.19 (0.02) &0.51 (0.02) &0.19 (0.02) & 0.12\\
     \hline
    \multicolumn{10}{c}{\textbf{Adult Dataset}} \\
    \hline
    {\textbf{WGAN\_GP}\cite{gulrajani2017improved}} &0.82 &0.72 &0.82 (0.00) &0.74 (0.02) &0.81 (0.01) &0.71 (0.01) &0.72 (0.10) &0.69 (0.03) & \textbf{\textcolor{red}{0.17}} \\
     \hline
     \textbf{VertiGAN}\cite{Xue01} &0.82 &0.72 &0.59 (0.23) &0.37 (0.35) &0.53 (0.29) &0.48 (0.24) &0.75 (0.07) &0.43 (0.29) & 1.47\\  
    \hline
    \textbf{VFLGAN-base (ours)} &0.82 &0.72 &0.81 (0.01)&0.66 (0.06) &0.80 (0.02) &0.68 (0.04)&0.76 (0.06) &0.72 (0.00) & \textbf{\textcolor{blue}{0.19}}\\
     \hline
      \textbf{VFLGAN (ours)} &0.82 &0.72 &0.82 (0.00) &0.65 (0.07) &0.79 (0.03) &0.65 (0.07) &0.79 (0.03) &0.54 (0.18) & {0.38}\\  
     \hline
     
     \multicolumn{10}{c}{\textbf{Credit Dataset}} \\
    \hline
    {\textbf{WGAN\_GP}\cite{gulrajani2017improved}} &0.75 &0.67 &0.68 (0.07) &0.55 (0.12) &0.67 (0.08) &0.56 (0.11) &0.67 (0.08) &0.53 (0.14) & \textbf{\textcolor{blue}{0.60}} \\
     \hline
     \textbf{VertiGAN}\cite{Xue01}&0.75 &0.67  &0.67 (0.08) &0.47 (0.20) &0.64 (0.11) &0.46 (0.21) &0.68 (0.07) &0.42 (0.25) & 0.92\\  
    \hline
    \textbf{VFLGAN-base (ours)} &0.75 &0.67  &0.72 (0.03)&0.67 (0.00) &0.64 (0.11) &0.51 (0.16)&0.63 (0.12) &0.58 (0.09) & \textbf{\textcolor{red}{0.51}}\\
     \hline
      \textbf{VFLGAN (ours)} &0.75 &0.67 &0.77 (0.02) &0.45 (0.22) &0.69 (0.06) &0.48 (0.19) &0.70 (0.05) &0.42 (0.25) & {0.79}\\  
     \hline
     
     \multicolumn{10}{c}{\textbf{HCVEGY Dataset}} \\
    \hline
    {\textbf{WGAN\_GP}\cite{gulrajani2017improved}}&0.23 &0.20 &0.29 (0.06) &0.17 (0.03) &0.23 (0.00) &0.21 (0.01) &0.24 (0.01) &0.16 (0.04) & \textbf{\textcolor{red}{0.15}} \\
     \hline
     \textbf{VertiGAN}\cite{Xue01}  &0.23 &0.20 &0.30 (0.07)&0.21 (0.01) &0.25 (0.02) &0.22 (0.02)&0.24 (0.01) &0.14 (0.06) & \textbf{\textcolor{blue}{0.19}}\\
    \hline
    \textbf{VFLGAN-base (ours)} &0.23 &0.20 &0.31 (0.08)&0.26 (0.06) &0.24 (0.01) &0.20 (0.00)&0.27 (0.04) &0.17 (0.03) & {0.22}\\
     \hline
      \textbf{VFLGAN (ours)} &0.23 &0.20 &0.29 (0.06)&0.22 (0.02) &0.28 (0.05) &0.24 (0.04)&0.25 (0.02) &0.17 (0.03) & {0.22}\\
     \hline
     \multicolumn{10}{c}{\textbf{Red-Wine-Quality $(10, 5\times 10^{10-4})$-DP}} \\
     \hline
     {\textbf{WGAN\_GP}\cite{gulrajani2017improved}} &0.59 &0.26 &0.56 (0.03) &0.27 (0.01) &0.56 (0.03) &0.25 (0.01) &0.56 (0.03) &0.24 (0.02) & \textbf{\textcolor{red}{0.13}}\\
     \hline
     \textbf{VertiGAN}\cite{Xue01} &0.59 &0.26 &0.62 (0.03) &0.13 (0.13) &0.46 (0.13) &0.16 (0.10) &0.40 (0.19) &0.10 (0.16) &0.74 \\ 
     \hline
    \textbf{DPLT} &0.59 &0.26 &0.37 (0.22) &0.26 (0.00) &0.36 (0.23) &0.22 (0.04) &0.31 (0.28) &0.16 (0.10) & {0.87} \\ 
    \hline
    \textbf{VFLGAN-base (ours)} &0.59 &0.26 &0.59 (0.00) &0.23 (0.03) &0.59 (0.00) &0.23 (0.03) &0.57 (0.02) &0.21 (0.05) & \textbf{\textcolor{red}{0.13}}\\
     \hline
     \textbf{VFLGAN (ours)} &0.59 &0.26 &0.61 (0.02) &0.22 (0.04) &0.60 (0.01) &0.24 (0.02) &0.55 (0.04) &0.20 (0.06) & 0.19\\
     \hline
    \multicolumn{10}{c}{\textbf{White-Wine-Quality $(10, 2\times 10^{10-4})$-DP}} \\
    \hline
     {\textbf{WGAN\_GP}\cite{gulrajani2017improved}} &0.53 &0.21 &0.55 (0.02) &0.24 (0.03) &0.53 (0.00) &0.20 (0.01) &0.52 (0.01) &0.20 (0.01) & \textbf{\textcolor{red}{0.08}}\\
     \hline   
     \textbf{VertiGAN}\cite{Xue01} &0.53 &0.21 &0.70 (0.17) &0.22 (0.01) &0.48 (0.05) &0.17 (0.04) &0.38 (0.15) &0.12 (0.09) & 0.51\\  
     \hline
    \textbf{DPLT}  &0.53 &0.21 &0.43 (0.10) &0.15 (0.06) &0.42 (0.11) &0.17 (0.04) &0.49 (0.04) &0.14 (0.07) & {0.42} \\ 
    \hline
    \textbf{VFLGAN-base (ours)} &0.53 &0.21 &0.54 (0.01) &0.19 (0.02) &0.53 (0.00) &0.19 (0.02) &0.52 (0.01) &0.17 (0.04) & \textbf{\textcolor{blue}{0.10}}\\
     \hline
     \textbf{VFLGAN (ours)} &0.53 &0.21 &0.52 (0.01) &0.16 (0.05) &0.51 (0.02) &0.19 (0.02) &0.51 (0.02) &0.17 (0.04) & 0.16\\
     \hline

    \multicolumn{10}{c}{\textbf{Adult Dataset $(10, 1\times 10^{10-5})$-DP}} \\
    \hline
    {\textbf{WGAN\_GP}\cite{gulrajani2017improved}} &0.82 &0.72 &0.84 (0.02) &0.78 (0.06) &0.82 (0.00) &0.72 (0.00) &0.75 (0.07) &0.71 (0.01) & \textbf{\textcolor{blue}{0.16}} \\
     \hline
     \textbf{VertiGAN}\cite{Xue01} &0.82 &0.72 &0.66 (0.16) &0.40 (0.32) &0.56 (0.26) &0.50 (0.22) &0.75 (0.07) &0.43 (0.29) & 1.32\\  
     \hline
    \textbf{DPLT}  &0.82 &0.72 &0.60 (0.22) &0.55 (0.17) &0.48 (0.34) &0.44 (0.28) &0.31 (0.51) &0.30 (0.42) & {1.94} \\ 
    \hline
    \textbf{VFLGAN-base (ours)} &0.82 &0.72 &0.82 (0.00)&0.72 (0.00) &0.80 (0.02) &0.69 (0.03)&0.82 (0.00) &0.72 (0.00) & \textbf{\textcolor{red}{0.05}}\\
     \hline
      \textbf{VFLGAN (ours)} &0.82 &0.72 &0.83 (0.01) &0.78 (0.06) &0.82 (0.00) &0.73 (0.01) &0.75 (0.07) &0.71 (0.01) & \textbf{\textcolor{blue}{0.16}}\\  
     \hline
       \multicolumn{10}{c}{\textbf{Credit Dataset $(10, 1\times 10^{10-3})$-DP}} \\
    \hline
    {\textbf{WGAN\_GP}\cite{gulrajani2017improved}} &0.75 &0.67  &0.78 (0.03) &0.70 (0.03) &0.74 (0.01) &0.63 (0.04) &0.71 (0.04) &0.63 (0.04) & \textbf{\textcolor{red}{0.19}} \\
     \hline
     \textbf{VertiGAN}\cite{Xue01}&0.75 &0.67  &0.70 (0.05) &0.44 (0.23) &0.64 (0.11) &0.48 (0.19) &0.69 (0.06) &0.42 (0.25) & 0.89\\ 
     \hline
    \textbf{DPLT}  &0.75 &0.67 &0.54 (0.21) &0.45 (0.22) &0.53 (0.22) &0.48 (0.19) &0.65 (0.10) &0.53 (0.14) & {1.08} \\ 
    \hline
    \textbf{VFLGAN-base (ours)} &0.75 &0.67  &0.66 (0.09)&0.43 (0.24) &0.63 (0.14) &0.49 (0.18)&0.69 (0.06) &0.45 (0.22) & {0.93}\\
     \hline
      \textbf{VFLGAN (ours)} &0.75 &0.67 &0.72 (0.03) &0.65 (0.02) &0.67 (0.08) &0.54 (0.13) &0.67 (0.08) &0.56 (0.11) & \textbf{\textcolor{blue}{0.45}}\\  
     \hline
     \multicolumn{10}{c}{\textbf{HCVEGY Dataset $(10, 8\times 10^{10-4})$-DP}} \\
    \hline
    {\textbf{WGAN\_GP}\cite{gulrajani2017improved}} &0.23 &0.20 &0.28 (0.05) &0.22 (0.02) &0.26 (0.03) &0.23 (0.03) &0.24 (0.01) &0.12 (0.08) & \textbf{\textcolor{blue}{0.22}} \\
     \hline
     \textbf{VertiGAN}\cite{Xue01}  &0.23 &0.20  &0.29 (0.06)&0.19 (0.01) &0.25 (0.02) &0.22 (0.02)&0.26 (0.03) &0.12 (0.08) & \textbf{\textcolor{blue}{0.22}}\\
    \hline
    \textbf{DPLT} &0.23 &0.20  &0.41 (0.18) &0.17 (0.03) &0.30 (0.07) &0.25 (0.05) &0.25 (0.02) &0.11 (0.09) & {0.44} \\ 
     \hline
    \textbf{VFLGAN-base (ours)} &0.23 &0.20 &0.31 (0.08)&0.25 (0.05) &0.24 (0.01) &0.21 (0.01)&0.24 (0.01) &0.10 (0.10) & {0.26}\\
     \hline
      \textbf{VFLGAN (ours)}  &0.23 &0.20 &0.31 (0.08) &0.20 (0.00) &0.23 (0.00) &0.21 (0.01) &0.25 (0.02) &0.17 (0.03) & \textbf{\textcolor{red}{0.14}} \\ 
     \hline
     
    \end{tabular}}
    \end{center}
    }
    \begin{tablenotes}
        \footnotesize
        \item\textbf{TRTR}: Train on real test on real; \textbf{TSTS}: train on synthetic test on synthetic; \textbf{TRTS}: train on real test on synthetic; \textbf{TSTR}: train on synthetic test on real;
        \textbf{Ac}: accuracy; \textbf{F1}: F1-score.
    \end{tablenotes}
\end{table*}

\subsubsection{Tabular Datasets}
Now, we evaluate the proposed methods on five tabular datasets, i.e., Adult \cite{misc_adult_2}, Red-Wine-Quality and White-Wine-Quality \cite{misc_wine_quality_186}, Credit \cite{misc_south_german_credit_522}, and HCV \cite{misc_hepatitis_c_virus_for_egyptian_patients_503}.
The upper five figures in Fig. \ref{fig:fid_all_datasets} show the FD curves of various methods during training on the five datasets, and the lowest FD scores during training are highlighted in the figures. From Fig. \ref{fig:fid_all_datasets}, we can see that VFLGAN and VFLGAN-base show similar performance to WGAN\_GP (trained in a centric manner) on all five datasets. Besides, VFLGAN and VFLGAN-base show superior performance than VertiGAN on all five datasets w.r.t. the lowest FD score, which means the performance of VFLGAN and VFLGAN-base almost reach the upper bound. Notably, the performance of VertiGAN w.r.t. FD score is far worse than other methods on Adult, Red-Wine-Quality, and White-Wine-Quality datasets while the performance of VertiGAN is close to other methods on Credit and HCV datasets. One possible reason is that some attributes have a large value range in Adult, Red-Wine-Quality, and White-Wine-Quality datasets and VertiGAN is not effective in learning the distribution of those attributes.
On the contrary, the attributes' value ranges are limited in Credit and HCV datasets and VertiGAN achieves better performance w.r.t. FD. 

We choose the models with the lowest FD among all training epochs to generate synthetic datasets with the same size as the real dataset. Then, we train random forest (RF) models to classify the data samples by the classification target attributes in the five datasets, using all other attributes under TRTR, TSTS, TRTS, and TSTR settings as introduced in Section \ref{sec:utility metrics}.
We apply the accuracy and F1 score to evaluate the performance of RF models. Note that under TRTR and TSTS settings, we apply cross-validation with 10 evenly split sub-sets and report the mean accuracy and F1 score. The experiment results of non-DP methods are shown in the upper five sub-tables in Table \ref{tab:utility classifiction}. TRTR setting provides the baseline accuracy and F1 score.
The accuracy and F1 scores of other settings should be similar to those under the TRTR setting if the synthetic dataset is similar to the real dataset.
Thus, we calculate the absolute difference of the accuracy and F1 scores between the TRTR and other settings as the evaluation metric.
From the upper five sub-tables in Table \ref{tab:utility classifiction}, we can see that VFLGAN and VFLGAN-base generate better synthetic data than VertiGAN by a significant margin except for the HCV dataset. 
Moreover, the metrics of VFLGAN, VFLGAN-base, and WGAN\_GP are very close, which means the performance of VFLGAN and VFLGAN-base almost reaches the upper bound. 
For the HCV dataset, the performance of the RF model is poor even for the TRTR setting, which means that this dataset is not suited for classification. In this case, we can refer to the FD for evaluation. As we mentioned before, VFLGAN, VFLGAN-base, and WGAN\_GP outperform VertiGAN according to FD (Fig. \ref{fig:fid_all_datasets}).

\textbf{Takeaway}: The shared discriminator $D_s$ plays a pivotal role in guiding the generators towards accurately learning the correlations among attributes distributed across different parties. Additionally, the second part of the discriminators $D_1$ and $D_2$ contributes to aligning the distribution of the synthetic data $\tilde{\boldsymbol{x}}_1$ and $\tilde{\boldsymbol{x}}_2$ more closely with that of real data $\boldsymbol{x}_1$ and $\boldsymbol{x}_2$, respectively. The performance gap between VFLGAN and VFLGAN-base is noticeable when the attribute number is large and correlations are closely related, e.g., MNIST, but the performance gap between VFLGAN and VFLGAN-base is ignorable when the number of attributes is small. Furthermore, our evaluation highlights the limitations of relying solely on TSTR metrics for assessing statistical utility, as evidenced by VertiGAN's comparable performance under TSTR on the Adult dataset.
In some cases like HCV, some datasets are not suitable for classification, and we need to refer to FD for evaluation.
This finding underlines the critical importance of utilizing a diverse set of evaluation methods, including TSTS, TRTS, and FD, to ensure a comprehensive analysis of synthetic datasets. Moreover, we observe that while $k$-way AVD applied in \cite{8667703, Xue01} can measure the similarity of discrete attributes, it becomes computationally intensive as $k$ increases and cannot support continuous data. Conversely, FD provides an efficient means to quantify the overall attribute similarity with minor computational demand, facilitating the selection of the optimal model with the lowest FD across training epochs. 
These insights advocate for the adoption of these metrics as standards in future research.

\subsection{Utility Results of DP-VFLGAN}

In this section, we first evaluate the effectiveness of the proposed Gaussian mechanism. Then, we evaluate the utility of the proposed DP-VFLGAN. 

\subsubsection{Effectiveness of the Proposed Gaussian Mechanism} \label{sec:effectiveness of DP}

We use the MNIST dataset to evaluate the effectiveness of different mechanisms. The DP budget is set as $(10, 1\times 10^{-5})$, i.e., the generators trained by all mechanisms satisfy $(10, 1\times 10^{-5})$-DP. 
As mentioned before, we apply the same network architecture as the official implementation of WGAN\_GP when evaluating various differential privacy methods. The detailed architecture of WGAN\_GP is presented in \textbf{Appendix} \ref{sec:Architectures}.
After training, we choose the generators with the lowest FD among training epochs to generate synthetic samples and the results are shown in Fig. \ref{fig:MNIST_dp_demo}. From the figure, we can see that our proposed Gaussian mechanism provides significantly better digits compared with that of DPSGD-GAN and GS\_WGAN when satisfying the same DP guarantee. There are only two differences between our implementation of GS\_GAN and that of the original paper. First, the model architectures are different for fair comparison. Second, we use unconditional GAN and conditional GAN is applied in the original paper. 
Considering the excellent results shown in \cite{chen2020gs}, We think GS\_WGAN relies on the advanced network architecture applied in the official implementation. 

\textbf{Takeaway}:
Compared with DPSGD, our proposed Gaussian mechanism introduces significantly less noise to the parameter gradients, thus preserving more useful information. Unlike GS\_WGAN, our mechanism does not necessitate partitioning the training dataset into multiple subsets. It is widely recognized that smaller datasets can adversely affect the performance of GANs. Consequently, when operating under the same privacy budget, the GAN trained with our Gaussian mechanism consistently outperforms its counterparts. 

\begin{figure}[htbp]
\center{\includegraphics[width=0.7\linewidth, scale=1.]{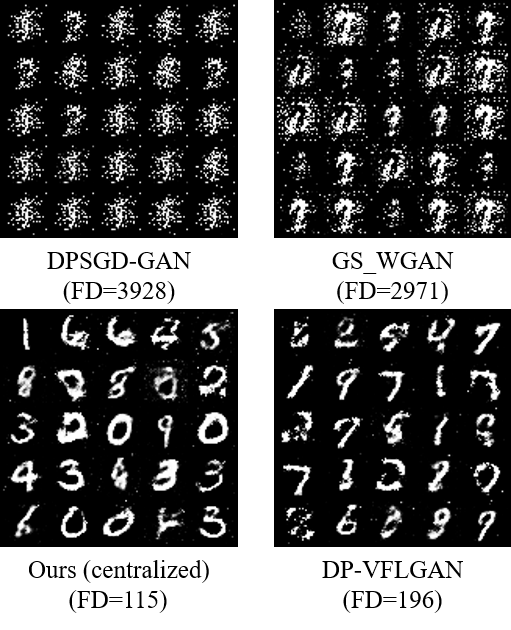}}
\caption{Synthetic digit samples satisfying $(10, 1\times 10^{-5})$-DP.}
\label{fig:MNIST_dp_demo}
\vspace{-2mm} %Put here to reduce too much white space after your table 
\end{figure}

\subsubsection{Utility Results of DP-VFLGAN}
We apply the proposed Gaussian mechanism to WGAN\_GP to provide it with a DP guarantee. We also apply the proposed Gaussian mechanism to VertiGAN for fair comparison instead of using the DPSGD, which is applied in the original paper, since we have shown that our mechanism can provide a much tighter DP guarantee than DPSGD. We set $\epsilon=10$ for all methods. For Adult and MNIST datasets, we set $\delta=1\times 10^{-5}$ like \cite{chen2020gs}. We set $\delta=5\times 10^{-4}$ and $\delta=2\times 10^{-4}$ for Red-Wine-Quality and White-Wine-Quality, respectively, since they are small datasets with only 1599 and 4898 records, respectively. We set $\delta=8\times 10^{-4}$ and $\delta=1\times 10^{-3}$ for HCV and Credit, respectively, since they are small datasets with only 1385 and 1000 records, respectively. We use the same $T_{max}$ and $\sigma$ for all the DL methods, which are computed according to Section \ref{sec:RDP Guarantee} to meet the DP budget.

The performance of the proposed DP-VFLGAN on the MNIST dataset is shown in Fig. \ref{fig:MNIST_dp_demo}. We can see the FD of DP-VFLGAN is lower than the FDs of non-DP VFLGAN-base and non-DP VertiGAN in Fig \ref{fig:MNIST_demo}, which shows the effectiveness of both VFLGAN and the proposed Gaussian mechanism. The FD curves during training are shown in the bottom five figures of Fig. \ref{fig:fid_all_datasets}. Similar to the non-DP methods, the DP-WGAN\_GP, DP-VFLGAN, and DP-VFLGAN-base achieve lower FD than DP-VertiGAN. We do not show the FD performance of DPLT in Fig. \ref{fig:fid_all_datasets} since it is too high compared with those of GAN-based methods. 
The AI training utility of different DP methods is shown in Table \ref{tab:utility classifiction}. 
We can see that compared with the non-DP versions of WGAN\_GP, VFLGAN, and VFLGAN-base, the performance drop of the corresponding DP versions is limited, which shows the effectiveness of the proposed Gaussian mechanism. Moreover, the AI training utility of DP-VFLGAN and DP-VFLGAN-base is close to that of DP-WGAN\_GP, which shows the effectiveness of the proposed VFLGAN framework. The performance increase of DP-VFLGAN-base and DP-VFLGAN on the Adult dataset may be caused by randomness during training. In addition, all GAN-based methods show better performance than DPLT w.r.t. the AI training utility.
% \vspace{-2mm}

\subsection{Empirical Privacy Analysis}

In this section, we first audit the privacy risk of the synthetic datasets generated by VFLGAN and DP-VFLGAN with ASSD. Then, we show that the proposed ASSD is robust compared to current MI attacks for synthetic datasets. Last, we estimate the privacy risk of intermediate features of ASIF.

\begin{table}[htbp]
\centering
{
\renewcommand\arraystretch{.8}
    \caption{Mean and Standard Deviation of the MI attack Accuracy through Synthetic Datasets Generated by VFLGAN and DP-VFLGAN.}
    \label{tab:privacy leakage}
    \begin{tabular}{c|cc|cc}
    \hline
     {} & \textbf{Naive}  &\textbf{Corr} & \textbf{Naive} &  \textbf{Corr} \\
    \hline
     {} & \multicolumn{2}{c|}{\textbf{Record 1 (Adult)}} & \multicolumn{2}{c}{\textbf{Record 2 (Adult)}} \\
    \hline
    {Non-DP} & 0.65(0.09) &  0.59(0.05) & 0.58(0.10) & 0.55(0.11) \\
    \hline
    {$(10, 10^{-5})$-DP} & 0.46(0.06) &  0.47(0.13) & 0.51(0.12) & 0.44(0.10) \\
    \hline
    {$(5, 10^{-5})$-DP} & 0.50(0.09) &  0.49(0.10) & 0.50(0.08) & 0.48(0.11) \\
    \hline
    {} & \multicolumn{2}{c|}{\textbf{Record 1 (R wine)}} & \multicolumn{2}{c}{\textbf{Record 2 (R wine)}} \\
    \hline
    {Non-DP} & 0.59(0.11) &  0.48(0.07) & 0.51(0.05) & 0.51(0.11) \\
    \hline
    \scriptsize{$(10, 5\times 10^{10-4})$-DP} & 0.53(0.12) &  0.47(0.13) & 0.52(0.18) & 0.48(0.12) \\
    \hline
    \scriptsize{$(5, 5\times 10^{10-4})$-DP} & 0.48(0.11) &  0.50(0.07) & 0.43(0.09) & 0.51(0.12) \\
    \hline
    {} & \multicolumn{2}{c|}{\textbf{Record 1 (W wine)}} & \multicolumn{2}{c}{\textbf{Record 2 (W wine)}} \\
    \hline
    {Non-DP} & 0.55(0.12) &  0.57(0.12) & 0.60(0.06) & 0.50(0.09) \\
    \hline
    \scriptsize{$(10, 2\times 10^{10-4})$-DP} & 0.54(0.11) &  0.39(0.15) & 0.52(0.05) & 0.51(0.06) \\
    \hline
    \scriptsize{$(5, 2\times 10^{10-4})$-DP} & 0.44(0.03) &  0.51(0.14) & 0.53(0.09) & 0.51(0.13) \\
    \hline
    \end{tabular}
    }
    \begin{tablenotes}
        \footnotesize
        \item\textbf{Naive}: The Adversary using naive feature; \textbf{Corr}: The Adversary using correlation feature; Non-DP: VFLGAN; $(\epsilon, \delta)$-DP: DP-VFLGAN providing $(\epsilon, \delta)$-DP guarantee.
    \end{tablenotes}
    \vspace{-4mm}
\end{table}

\begin{table*}[htbp]
\centering
{
\renewcommand\arraystretch{0.8}
    \caption{AUC Performance of Different Attacking Methods on Synthetic Datasets.}
    \label{tab:attacks on synthetic datasets}
\begin{center}
    \begin{tabular}{cccccccc}
    \hline
    \footnotesize{\diagbox{Dataset}{Methods}} & LOGAN 0 \cite{hayes2017logan} & LOGAN D1 \cite{hayes2017logan} & MC \cite{hilprecht2019monte} & GAN-leaks 0 \cite{chen2020gan} & GAN-leaks CAL \cite{chen2020gan} & DOMIAS \cite{van2023membership}& ASSD(ours) \\
    \hline
    \footnotesize{R-Wine (100/1000)} & 0.47 & 0.48 & 0.60 & 0.61 & 0.58 & 0.63 & \textcolor{red}{\textbf{0.98}} \\
    \hline
    \footnotesize{R-Wine (200/1000)} & 0.52 & 0.50 & 0.52 & 0.55 & 0.55 & 0.54 & \textcolor{red}{\textbf{0.95}} \\
    \hline
    \footnotesize{W-Wine (100/2000)} & 0.48 & 0.46 & 0.53 & 0.54 & 0.57 & 0.60 & \textcolor{red}{\textbf{0.87}} \\
    \hline
    \footnotesize{W-Wine (200/2000)} & 0.50 & 0.50 & 0.55 & 0.56 & 0.59 & 0.58 & \textcolor{red}{\textbf{0.92}} \\
    \hline
    \footnotesize{Housing (100/10000)} & 0.51 & 0.52 & 0.54 & 0.57 & 0.55 & 0.60 & \textcolor{red}{\textbf{0.81}} \\
    \hline
    \footnotesize{Housing (200/10000)} & 0.54 & 0.54 & 0.50 & 0.51 & 0.50 & 0.50 & \textcolor{red}{\textbf{0.89}} \\
    \hline
    \end{tabular}
\end{center}
    }
    \begin{tablenotes}
        \footnotesize
        \item{Dataset (\#$N_1$/\#$N_2$)}: the size of training dataset and test dataset is $N_1$ and the size of reference dataset is $N_2$. Note the proposed auditing method (ours) does not need test and reference datasets. 
    \end{tablenotes}
    \vspace{-4mm}
\end{table*}

\subsubsection{Auditing Synthetic Dataset with ASSD} \label{sec:implementation of ASSD}
In implementing the MI Attack, our approach involved two distinct phases of generator training. First, we trained 100 generators on the entire private dataset, each with a unique random seed. Concurrently, we trained another set of 100 generators on the same dataset but excluding a specific target record, ensuring each generator had a different random seed. This setup enabled us to generate 200 synthetic datasets tailored to assess the presence of the target record. For the training of our model, we randomly selected 140 synthetic datasets—70 generated from the complete dataset and 70 excluding the target record. The remaining 60 datasets were used for testing purposes. The core of our analysis involved training a Random Forest model to identify whether the target record was part of the training data, based on the characteristics of these synthetic datasets. After testing the Random Forest model, we repeated the selection, training, and testing process five times to ensure robustness in our results. This repetition allowed us to calculate the mean and standard deviation of the attack's AUC score, providing a quantitative measure of privacy leakage. A higher AUC score indicated a more significant privacy risk, while an AUC score at or below 0.5 suggested minimal privacy exposure. Consistent with methodologies in previous studies \cite{Theresa01}, we employed naive and correlation features for the MI attacks. We decided not to include results from histogram features in our analysis, as they showed little potential for privacy compromise in our experiments. 

We apply the methods of \cite{Theresa01} and \cite{meeus2023achilles}, to select the most vulnerable records in the training data with the proposed MI attack. Detailed methodologies for selecting these vulnerable records are described in \textbf{Appendix} \ref{sec:vulnerability}. We chose the two most vulnerable records from each dataset for this analysis to measure the potential privacy leakage.
The results of this assessment are presented in Table \ref{tab:privacy leakage}. We observe that the attack AUC scores on datasets generated by VFLGAN consistently exceed 50\%, indicating a certain risk of privacy leakage. In contrast, datasets generated by DP-VFLGAN show attack AUC scores close to or below 50\%. This suggests a substantial reduction in privacy risk, highlighting the effectiveness of the DP-VFLGAN in protecting against MI attacks.

\subsubsection{Evaluating the Effectiveness of ASSD} Now, we compare the proposed ASSD with several current MI attacks for synthetic datasets proposed in \cite{hayes2017logan, hilprecht2019monte, chen2020gan, van2023membership} on three datasets, i.e., Red-Wine-Quality, White-Wine-Quality, and Housing \cite{pace1997sparse}. Housing is applied in the benchmark proposed in \cite{van2023membership}. Since those MI attacks cannot support categorical attributes, we add two new datasets to the benchmark with minimum categorical attributes, i.e., Red-Wine-Quality and White-Wine-Quality. We also apply We set the size of training datasets to 100 and 200. All the attacks in \cite{hayes2017logan, hilprecht2019monte, chen2020gan, van2023membership} require a test dataset with the same size as the training dataset. Moreover, Domias, Logan D1, and Gan-leak cal require a reference dataset, which presents the same distribution as the training dataset. On the contrary, the proposed ASSD does not require a test or reference dataset. The attack AUC scores are shown in Table \ref{tab:attacks on synthetic datasets}. ASSD present a superior performance than other attacks, which shows the effectiveness of the proposed auditing scheme. We can infer that current attacks in \cite{chen2020gan,hayes2017logan,hilprecht2019monte,van2023membership}  won't succeed when the AUC score of ASSD is small (e.g., less than $\sim$0.6).

\textbf{Takeaway}:
ASSD is more effective than other attacks because ASSD targets the vulnerable record in the training dataset selected by the method of \cite{Theresa01} while other attacks target every record in the training dataset. Besides, ASSD applies the leave-one-out setting, which is stronger than auxiliary datasets but more realistic and practical for data holders. The notable disparity in AUC scores of ASSD between Table \ref{tab:attacks on synthetic datasets} and Table \ref{tab:privacy leakage} can be attributed to variations in the sizes of the respective training datasets. In Table \ref{tab:attacks on synthetic datasets}, the size of training datasets is 100 and 200 while in Table \ref{tab:privacy leakage}, the whole datasets are used to train the models. It's well known that a smaller size can cause more privacy leakage \cite{van2023membership}.

\subsubsection{Auditing Intermediate Features with ASIF}
The implementation of ASIF is similar to that of ASSD in Section \ref{sec:implementation of ASSD}. We also train 200 pairs ($D_1$ and $D_2$) of local discriminators, in which 100 pairs are trained on the complete dataset and 100 pairs are trained on the same dataset but excluding a specific target record ($x_t$), which is selected by the methods in \cite{Theresa01}. We input the complete dataset to the local discriminators to generate 200 sets of intermediate features, of which 140 sets are used for training, and 60 sets are used for the test. Then, we train a random forest model to detect whether $x_t$ is in the training dataset given a set of intermediate features. From Table \ref{tab:IF privacy leakage}, we can see that the privacy risk of intermediate features is minor, even for non-DP models.

\begin{table}[htbp]
\centering
{
\renewcommand\arraystretch{.8}
    \caption{Mean and Standard Deviation of the MI attack Accuracy through Intermediate Features Generated by VFLGAN and DP-VFLGAN.}
    \label{tab:IF privacy leakage}
    \begin{tabular}{c|cc|cc}
    \hline
     {} & \textbf{Naive}  &\textbf{Corr} & \textbf{Naive} &  \textbf{Corr} \\
    \hline
    
     {} & \multicolumn{2}{c|}{\textbf{Record 1 (Adult)}} & \multicolumn{2}{c}{\textbf{Record 2 (Adult)}} \\
    \hline

    {Non-DP} & {0.49(0.12)} &  {0.58(0.09)} & {0.53(0.09)} & {0.55(0.12)} \\
    \hline

    {$(10, 10^{-5})$-DP} & {0.49(0.14)} &  {0.50(0.08)} & {0.54(0.11)} & {0.48(0.12)} \\
    \hline

    {$(5, 10^{-5})$-DP} &{0.49(0.03)} & {0.50(0.12)} & {0.54(0.09)} & {0.51(0.10)} \\
    \hline

    {} & \multicolumn{2}{c|}{\textbf{Record 1 (R wine)}} & \multicolumn{2}{c}{\textbf{Record 2 (R wine)}} \\
    \hline

    {Non-DP} & 0.54(0.09) &  0.55(0.14) & 0.52(0.17) & 0.54(0.12) \\
    \hline

    \scriptsize{$(10, 5\times 10^{10-4})$-DP} & 0.53(0.04) &  0.50(0.07) & 0.54(0.09) & 0.50(0.09) \\
    \hline

    \scriptsize{$(5, 5\times 10^{10-4})$-DP} & 0.51(0.09) &  0.45(0.03) & 0.54(0.02) & 0.53(0.07) \\
    \hline

    {} & \multicolumn{2}{c|}{\textbf{Record 1 (W wine)}} & \multicolumn{2}{c}{\textbf{Record 2 (W wine)}} \\
    \hline

    {Non-DP} &{0.52(0.10)} & {0.50(0.08)} &{0.51(0.09)} &{0.49(0.13)} \\
    \hline

    \scriptsize{$(10, 2\times 10^{10-4})$-DP} &{0.52(0.10)} &  {0.52(0.12)} & {0.48(0.10)} & {0.51(0.12)} \\
    \hline

    \scriptsize{$(5, 2\times 10^{10-4})$-DP} & {0.49(0.14)} & {0.56(0.07)} &{0.52(0.07)} &{0.53(0.08)} \\
    \hline
    
    \end{tabular}
    }
    \begin{tablenotes}
        \footnotesize
        \item\textbf{Naive}: The Adversary using naive feature; \textbf{Corr}: The Adversary using correlation feature; Non-DP: VFLGAN; $(\epsilon, \delta)$-DP: DP-VFLGAN providing $(\epsilon, \delta)$-DP guarantee.
    \end{tablenotes}
    \vspace{-6mm} %Put here to reduce too much white space after your table 
\end{table}

\section{Conclusion and Future Work}

In this paper, we found that VertiGAN published in PETS 2023 \cite{Xue01} can not effectively learn the correlation among attributes between different parties, which leads to the distribution of synthetic data being different from that of the real data. To resolve this issue, we proposed the \emph{first} VFL-based GAN, VFLGAN, for vertically partitioned data publication. Experiment results show that VFLGAN significantly improves the quality of synthetic data compared with the state-of-the-art methods. We also proposed a Gaussian mechanism for VFLGAN to make the generators satisfy a rigorous DP guarantee. Experimental results show that the proposed Gaussian mechanism can produce better synthetic data compared with current differentially private mechanisms under the same DP budget. Besides, the utility drop of DP-VFLGAN is limited compared to its non-DP version. Additionally, we propose a practical privacy leakage measurement with realistic assumptions since the DP is too conservative. Our experimental results show that DP-VFLGAN can effectively mitigate privacy breaches. However, the proposed privacy leakage measurement can only estimate the privacy breach from the view of external attackers. Our future work will be on estimating the privacy breach from the view of internal attackers.

%%
%% The next two lines define the bibliography style to be used, and
%% the bibliography file.
\clearpage
\clearpage

\bibliographystyle{ACM-Reference-Format}
\bibliography{sample-base}

% \clearpage

%%
%% If your work has an appendix, this is the place to put it.
\appendix
% \section*{Appendix}

\section{Appendix}

\subsection{Literature Review} \label{sec:literature review}

This section presents a detailed literature review. In this regard, we first introduce the recent works about vertically partitioned data publication methods that provide DP guarantees. Next, we review multiple differentially private mechanisms for GANs and explain why they are unsuitable for the proposed VFLGAN. Last, we describe some privacy measurements and attacks on synthetic datasets inspired by which we propose our practical auditing schemes to measure privacy leakage. 
\subsubsection{{Vertically Partitioned Data Publication}}
In \cite{6517175}, the authors proposed a secure two-party algorithm, DistDiffGen, that applies the exponential mechanism for vertically partitioned data publication and satisfies $\epsilon$-DP. However, their protocol only works under the two-party scenario and the data utility deteriorates fast when the number of attributes increases \cite{6517175}. Besides, DistDiffGen is tailored to classification tasks and cannot guarantee meaningful utility for many common data analysis tasks \cite{8667703}.
In \cite{8667703}, the authors proposed DPLT for vertically partitioned data publication, which satisfies $\epsilon$-DP. However, DPLT is limited to discrete datasets. Besides, DPLT evenly splits the privacy budget over all the attribute pairs, which is unreasonable since the information leakage levels of different attributes are usually different. As pointed out in \cite{Xue01}, the noise scale may increase exponentially with the increased data dimensionality, which can cause a significant utility loss. The first GAN-based vertically partitioned data publication method, VertiGAN, was proposed in \cite{Xue01} to solve the above issues. VertiGAN satisfies $(\epsilon,\delta)$-DP, where the DP guarantee is achieved by adding noise when updating discriminators' parameters so the privacy budget can be distributed more intelligently among attributes and the curse of dimensionality can be mitigated. However, the discriminator is updated according to FedAvg \cite{mcmahan2017communication}, a Horizontal Federated Learning (HFL)-based method. However, VeriGAN is less effective in learning the correlation among the attributes of different parties. This paper applies a VFL framework to learn the correlation mentioned above.
% among the attributes of different parties.
\subsubsection{{Differentially Private Generative Adversary Networks}} \label{sec:literature review DP}

For general deep learning models, there are mainly two kinds of training methods to make the trained models satisfy differential privacy, i.e., Differentially Private Stochastic Gradient Descent (DPSGD) \cite{shokri2015privacy, yu2019differentially} and Private Aggregation of Teacher Ensembles (PATE) \cite{papernot2018scalable}. In \cite{xie2018differentially, zhang2018differentially}, the authors proposed variants of DPSGD to train the discriminator and apply non-private SGD to train the generator. The rationale is that only the discriminator can access the training data, and the generator learns from the discriminator to generate better synthetic data. According to the post-processing theorem, Proposition \ref{theorem: post-processing}, the generator satisfies the same level of DP as the discriminator \cite{xie2018differentially, zhang2018differentially}. However, this method is unsuitable for our VFLGAN, shown in Fig. \ref{fig:vflmodel}, since each party cannot control the parameter update of the shared discriminator. PATE-GAN \cite{jordon2018pate} adopts the PATE framework to provide differential privacy. First, multiple non-private discriminators are trained on non-overlapping subsets. Then, the non-private discriminators are applied to train a student discriminator that satisfies $(\epsilon, \delta)$-DP. Last, the student discriminator is used to train the generator. However, this method is inapplicable to our VFL-based scheme since the non-private teacher discriminators can cause information leakage through the shared discriminator. GS-WGAN \cite{chen2020gs} provides another solution. The discriminators are trained in a non-private way while the backward gradients between the discriminators and the generator are sanitized with a Gaussian mechanism to make the generator satisfy a $(\epsilon,\delta)$-DP. Besides, the privacy guarantee is enhanced by the subsampling procedure \cite{wang2019subsampled}, i.e., dividing the training dataset into multiple non-overlapping subsets as \cite{jordon2018pate} and training multiple discriminators on each subset. However, this framework cannot be adapted to VFLGAN since the information can be leaked through the shared discriminator when training non-private discriminators.

\subsubsection{{Privacy Measurements and Attacks on Synthetic Dataset}}
Some papers \cite{yale2019assessing, lu2019empirical, giomi2023unified} utilize the distribution similarity between the synthetic dataset and the training dataset to measure the potential privacy leakage.
In \cite{yale2019assessing}, the privacy loss is measured by nearest neighbour adversarial accuracy, and both training and test datasets are required to calculate the privacy loss. 
In \cite{lu2019empirical}, the authors first sort the original and synthetic data records according to a predefined metric, then compare the number of matches according to the rank of ordered records where more matches indicate a higher risk of re-identification. 
In \cite{giomi2023unified}, the authors proposed singling out attack, linkability attack, and inference attack against synthetic datasets, which assume there is a control dataset following the same distribution as the training dataset and measured the privacy leakage by the attack accuracy. However, the above works do not follow the DP principle when measuring privacy leakage. Instead, they mainly measure the overfitting level of the GAN to the training data as privacy leakage. 

The principle of DP was first adopted in \cite{Theresa01} to design privacy attacks against synthetic datasets. The authors of \cite{Theresa01} apply shadow models \cite{Reza01} to launch an MI attack that tries to distinguish whether the training dataset contains a specific record given the generated synthetic dataset. Shadow models MI attack is also applied in \cite{houssiau01}, which is further enhanced by an advanced feature map. However, these shadow model attacks require a large amount of reference data which follows the same distribution as the training dataset. For example, the size of the reference dataset is ten times larger than the training dataset in \cite{Theresa01}. The feasibility of this assumption in the context of privacy auditing is questionable, as expecting the data publisher to employ an excessively large dataset for auditing purposes is impractical. Moreover, as pointed out in \cite{Jiayuan01}, improper reference datasets can overestimate or underestimate the privacy loss. 
In \cite{guepin2023synthetic}, the authors proposed an MI attack requiring only the targeted synthetic dataset. However, the double-counting phenomenon has not been resolved.
In this paper, we adopt the leave-one-out setting proposed in \cite{Jiayuan01} and shadow models to measure the privacy leakage of any specific record, which satisfies the principle of DP and does not require a reference dataset.

There is another research line about MI attacks on synthetic datasets \cite{hayes2017logan, hilprecht2019monte, chen2020gan, van2023membership}. These attacks have the same black-box assumption as our auditing method, i.e., the GAN models are not accessible to attackers. However, the attacks in \cite{hayes2017logan, hilprecht2019monte, chen2020gan, van2023membership} require auxiliary datasets and targets every record in the training dataset. On the contrary, our auditing method does not require any auxiliary dataset so that data holders can use all of their data to train the generative model. Besides, our auditing method targets one vulnerable data record and thus achieves superior performance than the attacks in \cite{hayes2017logan, hilprecht2019monte, chen2020gan, van2023membership}. On the other hand, \cite{jagielski2020auditing, lu2022general, steinke2024privacy, krishna2023unleashing} aim to estimate the lower bounds on $\epsilon$ for $\epsilon$-DP. There are three major differences between our auditing method and those in \cite{jagielski2020auditing, lu2022general, steinke2024privacy, krishna2023unleashing}. First of all, methods in \cite{jagielski2020auditing, lu2022general, steinke2024privacy, krishna2023unleashing} estimate the worst
privacy risk of ALL possible inputs while our auditing method estimates the privacy risk of the record in training datasets (which is also the focus of data holders). Second, methods in \cite{jagielski2020auditing, lu2022general, steinke2024privacy, krishna2023unleashing} are designed for classification models and apply the classification losses to conduct attacks. However, there is no such classification loss in our VFLGAN. Third, in the vertically partitioned data publication scenario, models are assumed to be not accessible to attackers, as in \cite{hayes2017logan, hilprecht2019monte, chen2020gan, van2023membership}. However, methods in \cite{jagielski2020auditing, lu2022general, steinke2024privacy, krishna2023unleashing} require submitting requests to the classification models and receiving the results that are used to launch the attacks.

\begin{table*}[htbp]
\centering
{
\renewcommand\arraystretch{0.9}
    \caption{Classification Accuracy and F1 score of Random Forest Models to Evaluate the AI Training Utility of the Synthetic Data on the Wine-Quality Datasets. Absolute different metric values compared with the TRTR setting are shown in the bracket, and the main evaluation metric is `Total Difference'; \textcolor{red}{lower} is better.}
    \label{tab:red-wine}
    \begin{tabular}{ccccccccc|c}
    \hline
    \multicolumn{10}{c}{\textbf{Normalized Integer Representation for the Quality attribute (red wine)}} \\
    \hline
    {} & \multicolumn{2}{c}{\textbf{TRTR}} & \multicolumn{2}{c}{\textbf{TSTS}} & \multicolumn{2}{c}{\textbf{TRTS}} & \multicolumn{2}{c}{\textbf{TSTR}} & \textbf{Total Difference}\\
    \textbf{Models} & \textbf{Acc} & \textbf{F1} & \textbf{Acc} & \textbf{F1} & \textbf{Acc} & \textbf{F1} & \textbf{Acc} & \textbf{F1}  & {} \\
     \hline
     {\textbf{WGAN\_GP}\cite{gulrajani2017improved}} &0.59 &0.26 &0.62 (0.03) &0.45 (0.19) &0.53 (0.06) &0.20 (0.06) &0.54 (0.05) &0.29 (0.03) & 0.42\\     
     \hline
     \textbf{VertiGAN}\cite{Xue01} &0.59 &0.26 &0.92 (0.33) &0.67 (0.41) &0.80 (0.21) &0.41 (0.15) &0.55 (0.04) &0.26 (0.00) & 1.14\\  
     \hline
     \textbf{VFLGAN-base (ours)} &0.59 &0.26 &0.74 (0.15) &0.28 (0.02) &0.66 (0.07) &0.25 (0.01) &0.56 (0.03) &0.26 (0.00) & 0.28\\
     \hline
     \textbf{VFLGAN (ours)} &0.59 &0.26 &0.74 (0.15) &0.44 (0.18) &0.59 (0.00) &0.29 (0.03) &0.51 (0.08) &0.26 (0.00) & 0.44\\ 
    \hline

    \multicolumn{10}{c}{\textbf{Normalized Integer Representation for the Quality attribute (white wine)}} \\
    \hline
     {\textbf{WGAN\_GP}\cite{gulrajani2017improved}} &0.53 &0.21 &0.49 (0.04) &0.17 (0.04) &0.48 (0.05) &0.14 (0.07) &0.53 (0.00) &0.19 (0.02) & 0.22\\
     \hline
     \textbf{VertiGAN}\cite{Xue01} &0.53 &0.21 &0.77 (0.24) &0.48 (0.27) &0.46 (0.07) &0.15 (0.06) &0.38 (0.15) &0.14 (0.07) & 0.86\\  
    \hline
    \textbf{VFLGAN-base (ours)} &0.53 &0.21 &0.52 (0.01)&0.28 (0.07) &0.46 (0.07) &0.20 (0.01)&0.47 (0.06) &0.19 (0.02) & 0.24\\
     \hline
      \textbf{VFLGAN (ours)} &0.53 &0.21 &0.50 (0.03) &0.24 (0.03) &0.47 (0.06) &0.18 (0.03) &0.50 (0.03) &0.21 (0.00) & 0.18\\  
     \hline
    \end{tabular}
    }
    \begin{tablenotes}
        \footnotesize
        \item\textbf{TRTR}: Train on real test on real; \textbf{TSTS}: train on synthetic test on synthetic; \textbf{TRTS}: train on real test on synthetic; \textbf{TSTR}: train on synthetic test on real;
        \textbf{Ac}: accuracy; \textbf{F1}: F1-score.
    \end{tablenotes}
\end{table*}

\subsection{Proof of Theorem \ref{theorem RDP guarantee}} \label{sec:proof of RDP guarantee}
\begin{proof}
Let $\boldsymbol{x}_i^B$ and $\boldsymbol{x}_i^{B^\prime} \in X_i$ denote two adjacent mini-batches of training data of party $i$. 
The gradients of the parameters of the first layer of $D_i$ are clipped using \eqref{eq:clip}. Then the L2 norm of those gradients has the following upper bound,
\begin{equation}
\label{eq:clip in proof}
    \left\|clip(\mathcal{G}{_{D_i(\boldsymbol{x}_j^B)}^1},C)\right\|_2 \leq C, \quad i,j \in \{1,2\}.
\end{equation}
Note that although the input of $D_i$ is $\boldsymbol{x}_i^B$, the gradients of $D_i$ are affected by both $\boldsymbol{x}_1^B$ and $\boldsymbol{x}_2^B$ according to \eqref{eq:gradient_d1}. According to the triangle inequality, the L2 sensitivity of the parameters can be derived as
\begin{equation}
\label{eq:sensitivity in proof}
    \Delta_2 f=\max _{\boldsymbol{x}_j^B, \boldsymbol{x}_j^{B^\prime}}\left\|clip(\mathcal{G}{_{D_i(\boldsymbol{x}_j^B)}^1},C)-clip(\mathcal{G}{_{D_i(\boldsymbol{x}_j^{B^\prime})}^1},C)\right\|_2 \leq 2 C,
\end{equation}
where $i,j \in \{1,2\}$. According to Proposition \ref{Prop: Gaussian mechenism}, $\mathcal{G}{_{D_i}^1}$ computed by \eqref{eq:add noise} satisfies $(\alpha, \alpha/(2\sigma^2))$-RDP w.r.t. $\boldsymbol{x}_1^B$ and $\boldsymbol{x}_2^B$ since original gradients are calculated regarding $\boldsymbol{x}_1^B$ and $\boldsymbol{x}_2^B$. 
Let $\boldsymbol{x}^B=[\boldsymbol{x}_1^B,\boldsymbol{x}_2^B]$ and $\boldsymbol{x}^{B^\prime}=[\boldsymbol{x}_1^{{B^\prime}},\boldsymbol{x}_2^{{B^\prime}}]$ be two adjacent mini-batches collected from the complete dataset, i.e., $X=[X_1,X_2]$. Equations \eqref{eq:clip in proof} and \eqref{eq:sensitivity in proof} still hold if we delete the subscript of $\boldsymbol{x}_j^B$ and $\boldsymbol{x}_j^{B^\prime}$. Thus, similar to the above proving process, $\mathcal{G}{_{D_i}^1}$ computed by \eqref{eq:add noise} also satisfies $(\alpha, \alpha/(2\sigma^2))$-RDP w.r.t. $\boldsymbol{x}^B$.

According to Proposition \ref{theorem: post-processing} (Post-processing theorem), the parameters of the first layer of $D_1$ and $D_2$ updated by
\begin{equation}
    \boldsymbol{\theta}_{D_i}^1 = \boldsymbol{\theta}_{D_i}^1-\eta_{D_i}\mathcal{G}_{D_i}^1
\end{equation}
satisfy the same RDP as $\mathcal{G}_{D_i}^1$.

The outputs of the first part of $D_1$, $D_2$, i.e., the intermediate features $\boldsymbol{f}_1$ and $\boldsymbol{f}_2$, can be expressed as,
\begin{equation}
\label{eq:InF-DP}
    \boldsymbol{f}_i=func_{D_i}(\boldsymbol{\theta}_{D_i}^1 \boldsymbol{x}_i), \quad i \in \{1,2\},
\end{equation}
where $\boldsymbol{x}_i$ denotes the input of $D_i$, $func_{D_i}$ denotes the calculation after the first layer ($\boldsymbol{\theta}_{D_i}^1 \boldsymbol{x}_i$). Thus, according to Proposition \ref{theorem: post-processing}, since $\boldsymbol{\theta}^1_{D_i}$ satisfies $(\alpha, \alpha/(2\sigma^2))$-RDP, the mechanism $func_{D_i}(\boldsymbol{\theta}_{D_i}^1 \boldsymbol{x}_i)$ in \eqref{eq:InF-DP} satisfy $(\alpha, \alpha/(2\sigma^2))$-RDP, i.e., the first part of $D_1$ and $D_2$ satisfy $(\alpha, \alpha/(2\sigma^2))$-RDP.

On the other hand, according to \eqref{eq:g_loss}, the generators $G_1$ and $G_2$ can only learn the information about the input data from the gradients of the first layer of $D_1$ and $D_2$, respectively. During the back-propagation process, let $\boldsymbol{\delta}^G_{D_i}$ and $\boldsymbol{\delta}^1_{D_i}$ denote the backward gradients before and after the first layer of $D_i$, which is illustrated in Fig. \ref{fig:backprop}. Note that the parameters of the first layer of $D_i$ are updated according to $\boldsymbol{\delta}^1_{D_i}$ and the parameters of $G_i$ are updated according to $\boldsymbol{\delta}^G_{D_i}$. The relationship between $\boldsymbol{\delta}^G_{D_i}$ and $\boldsymbol{\delta}^1_{D_i}$ can be expressed as
\begin{equation}
    \boldsymbol{\delta}^G_{D_i} = \boldsymbol{\delta}^1_{D_i}\boldsymbol{\theta}_{D_i}^1.
\end{equation}
According to Proposition \ref{theorem: post-processing}, $\boldsymbol{\delta}^G_{D_i}$ satisfies the same RDP as $\boldsymbol{\theta}_{D_i}^1$.
Since the parameters of $G_i$ is updated according to $\boldsymbol{\delta}^G_{D_i}$, $G_i$ satisfies $(\alpha, \alpha/(2\sigma^2))$-RDP w.r.t. $x_1^B$, $x_2^B$, and $x^B$.
\end{proof}

\begin{figure}[htbp]
\center{\includegraphics[width=\linewidth, scale=1.2]{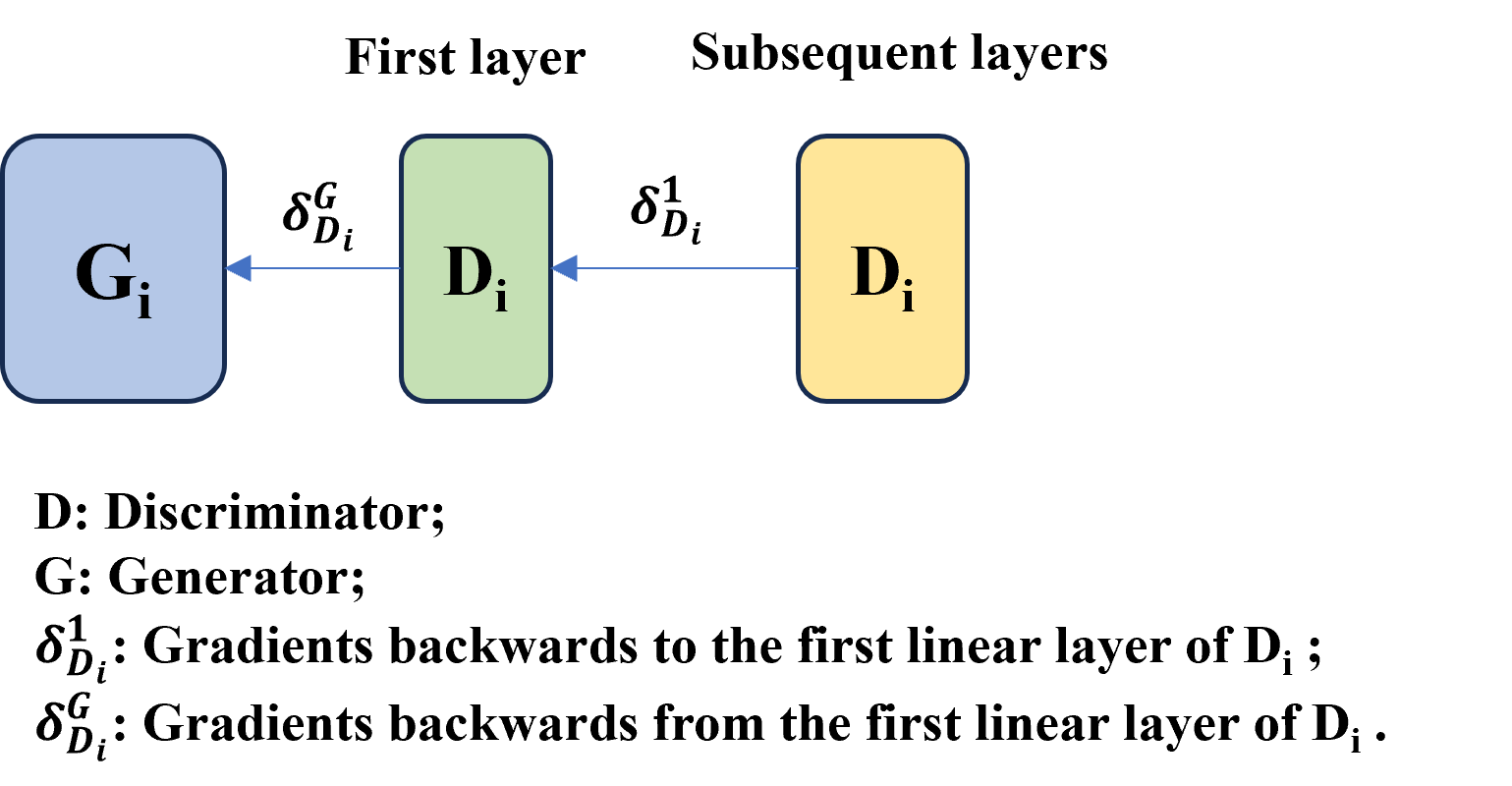}}
\caption{Backward loss before and after the first layer of the discriminators.}
\label{fig:backprop}
\end{figure}

% \begin{figure}[htbp]
% \center{\includegraphics[scale=0.8,trim= 300 240 380 200,clip]{}}
% \caption{Backward loss before and after the first layer of the discriminators.}
% \label{fig:backprop}
% \end{figure}

\subsection{Auditing Scheme for Intermediate Features} \label{app:Auditing Scheme for Intermediate Features}

The following privacy game can define the MI attack applied in ASIF. \textbf{(i)} First, the challenger selects a fixed target record $\boldsymbol{x}_t$ from a given training dataset $X$. \textbf{(ii)} Then, the challenger trains a VFLGAN $\boldsymbol{\theta}_0 \stackrel{s_{0}}{\longleftarrow} \mathcal{T}(X \setminus \boldsymbol{x}_t)$ on $X \setminus \boldsymbol{x}_t$ (dataset $X$ excluding $\boldsymbol{x}_t$) by using a fresh random seed $s_{0}$ in the training algorithm $\mathcal{T}$. In this paper, $\mathcal{T}$ refers to Algorithm \ref{alg:Training DP-VFL-GAN}. \textbf{(iii)} The challenger trains another VFLGAN $\boldsymbol{\theta}_1 \stackrel{s_{1}}{\longleftarrow} \mathcal{T}(X)$ on $X$ by using a fresh random seed $s_{1}$ in algorithm $\mathcal{T}$. 
\textbf{(iv)} The challenger inputs the whole dataset $X$ into $\boldsymbol{\theta}_0(\cdot)$ and $\boldsymbol{\theta}_1(\cdot)$ to generate two sets of intermediate features $\boldsymbol{f}_0$ and $\boldsymbol{f}_1$, respectively.
\textbf{(v)} The challenger flips a random unbiased coin $b\in \{0,1\}$ and sends the set of intermediate features and target record, $\boldsymbol{f}_b$ and $\boldsymbol{x}_t$, to the adversary. \textbf{(vi)} The adversary tries to figure out the true $b$ based on the observation of $\{\boldsymbol{f}_b, \boldsymbol{x}_t\}$, i.e., $\hat{b} \leftarrow \mathcal{A}(\boldsymbol{f}_b, \boldsymbol{x}_t)$. \textbf{(vii)} Last, if $\hat{b}=b$, the adversary wins. Otherwise, the challenger wins. This MI attack is summarized in Algorithm \ref{alg:MI attack of ASIF}. 

\begin{algorithm}[ht]
\footnotesize
\caption{Membership Inference Attack}\label{alg:MI attack of ASIF}
\begin{algorithmic}[1]
\REQUIRE Training algorithm $\mathcal{T}$; dataset $X$; target record $x_t$; unbiased coin $b$; fresh random seeds $s_{0}$ and $s_{1}$; VFLGAN $\theta_i$; intermediate feature $f$.
\ENSURE Success or failure.
\noindent{\rule{0.95\linewidth}{0.4pt}}
\STATE $\boldsymbol{\theta}_0 \stackrel{s_{0}}{\longleftarrow} \mathcal{T}(X \setminus \boldsymbol{x}_t)$
\STATE $\boldsymbol{\theta}_1 \stackrel{s_{1}}{\longleftarrow} \mathcal{T}(X)$
\STATE $\boldsymbol{f}_0 \longleftarrow \boldsymbol{\theta}_0(X)$ \& $\boldsymbol{f}_1 \longleftarrow \boldsymbol{\theta}_1(X)$
\STATE $b \sim \{0,1\}$
\STATE $\hat{b} \longleftarrow \mathcal{A}(\boldsymbol{f}_b, \boldsymbol{x}_t)$
\IF{$\hat{b}==b$}
\STATE Output success
\ELSE
\STATE Output failure
\ENDIF
\end{algorithmic}
\end{algorithm}

\begin{algorithm}[ht]
\footnotesize
\caption{Training the Adversary of MI Attack} \label{alg:adversary of ASIF}
\begin{algorithmic}[1]
\REQUIRE Training algorithms $\mathcal{T}$ and $\mathcal{T}_\mathcal{A}$; dataset $X$; target record $\boldsymbol{x}_t$; random seeds $s_{0_{1:M}}$ and $s_{1_{1:M}}$; feature extraction function $Extr(\cdot)$; intermediate feature $\boldsymbol{f}$.
\ENSURE Trained $\mathcal{A}$.
\noindent{\rule{0.95\linewidth}{0.4pt}}
\STATE $\theta_{0_{1:M}} \stackrel{s_{0_{1:M}}}{\longleftarrow} \mathcal{T}(X \setminus \boldsymbol{x}_t)$
\STATE $\boldsymbol{f}_{0_{1:M}} \longleftarrow \boldsymbol{\theta}_{0_{1:M}}(X)$
\STATE $\boldsymbol{\theta}_{1_{1:M}} \stackrel{s_{1_{1:M}}}{\longleftarrow} \mathcal{T}(X)$
\STATE $\boldsymbol{f}_{1_{1:M}} \longleftarrow \boldsymbol{\theta}_{1_{1:M}}(X)$
\STATE $Feat_{0_{1:M}} \longleftarrow Extr(\boldsymbol{f}_{0_{1:M}})$ \& $Feat_{1_{1:M}} \longleftarrow Extr(\boldsymbol{f}_{1_{1:M}})$
\STATE $\mathcal{A} \longleftarrow \mathcal{T}_\mathcal{A}(Feat_{0_{1:M}}, Feat_{1_{1:M}})$
\end{algorithmic}
\end{algorithm}
Now we describe how to train the adversary model $\mathcal{A}$ in Algorithm \ref{alg:MI attack of ASIF}. We apply the shadow modelling approach \cite{Reza01} to train the adversary model $\mathcal{A}$ through the following process. \textbf{(i)} First, train $M$ VFLGANs, $\boldsymbol{\theta}_{0_{1:M}}$, on dataset $X\setminus \boldsymbol{x}_t$ by using different random seeds $s_{0_{1:M}}$ in the training algorithm $\mathcal{T}$. \textbf{(ii)} Apply $\boldsymbol{\theta}_{0_{1:M}}$ to generate $M$ intermediate features $\boldsymbol{f}_{0_{1:M}}$. \textbf{(iii)} Train $M$ VFLGANs, $\boldsymbol{\theta}_{1_{1:M}}$, on dataset $X$ by using different random seeds $s_{1_{1:M}}$ in the training algorithm $\mathcal{T}$. \textbf{(vi)} Apply $\boldsymbol{\theta}_{1_{1:M}}$ to generate $M$ sets of intermediate features $\boldsymbol{f}_{1_{1:M}}$. \textbf{(v)} Extract features of the intermediate features. \textbf{(vi)} Train the adversary model $\mathcal{A}$ to distinguish whether the features are from $\boldsymbol{f}_{0_{1:M}}$ or $\boldsymbol{f}_{1_{1:M}}$. The training process is summarized in Algorithm \ref{alg:adversary of ASIF}.

\subsection{Model Architectures} \label{sec:Architectures}

Figure \ref{fig:WGAN_architecture} shows the architecture of WGAN\_GP, which is applied when we evaluate the effectiveness of various differentially private mechanisms in Section \ref{sec:effectiveness of DP}. Our implementation of VertiGAN (no public official code) also applies the same architecture as shown in Fig. \ref{fig:WGAN_architecture}. In the proposed VFLGAN, the architecture of the generator is the same as that of the generator in Fig. \ref{fig:WGAN_architecture} while the architecture of discriminators is shown in Fig. \ref{fig:VFLGAN_architecture}. 

% \begin{figure}[htbp]
% \center{\includegraphics[width=\linewidth, scale=1.2]{}}
% \caption{Architectures of WGAN\_GP.}
% \label{fig:WGAN_architecture}
% \end{figure}

\begin{figure}[htbp]
\center{\includegraphics[scale=0.50,trim= 100 50 400 120,clip]{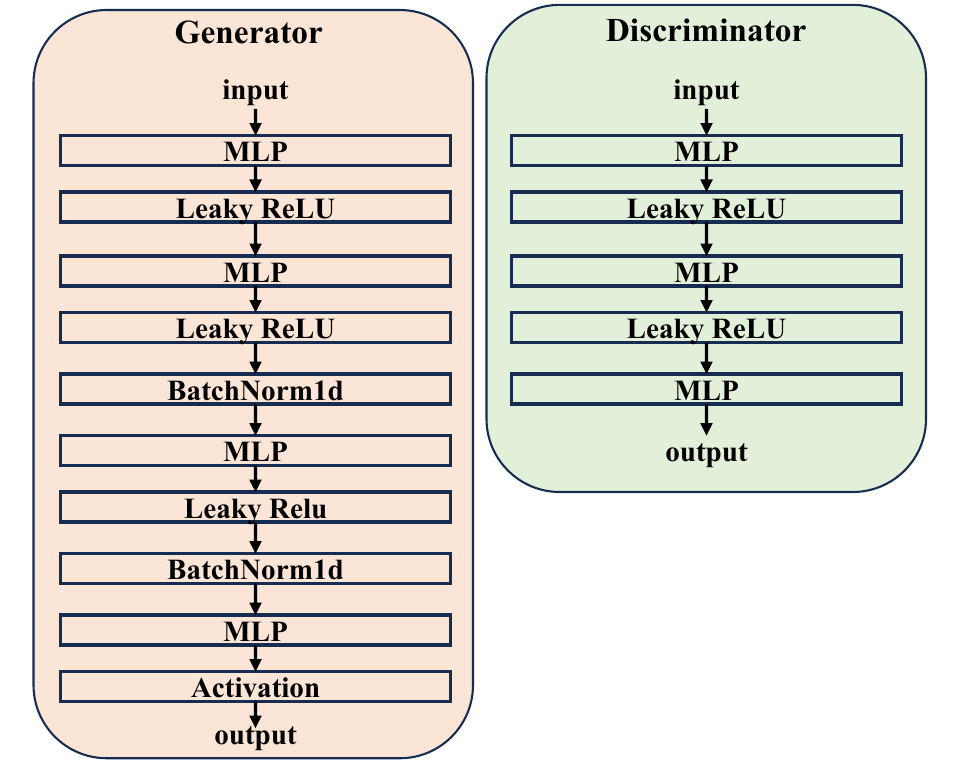}}
\caption{Architectures of WGAN\_GP.}
\label{fig:WGAN_architecture}
\end{figure}

\begin{figure}[htbp]
\center{\includegraphics[width=\linewidth, scale=1.2]{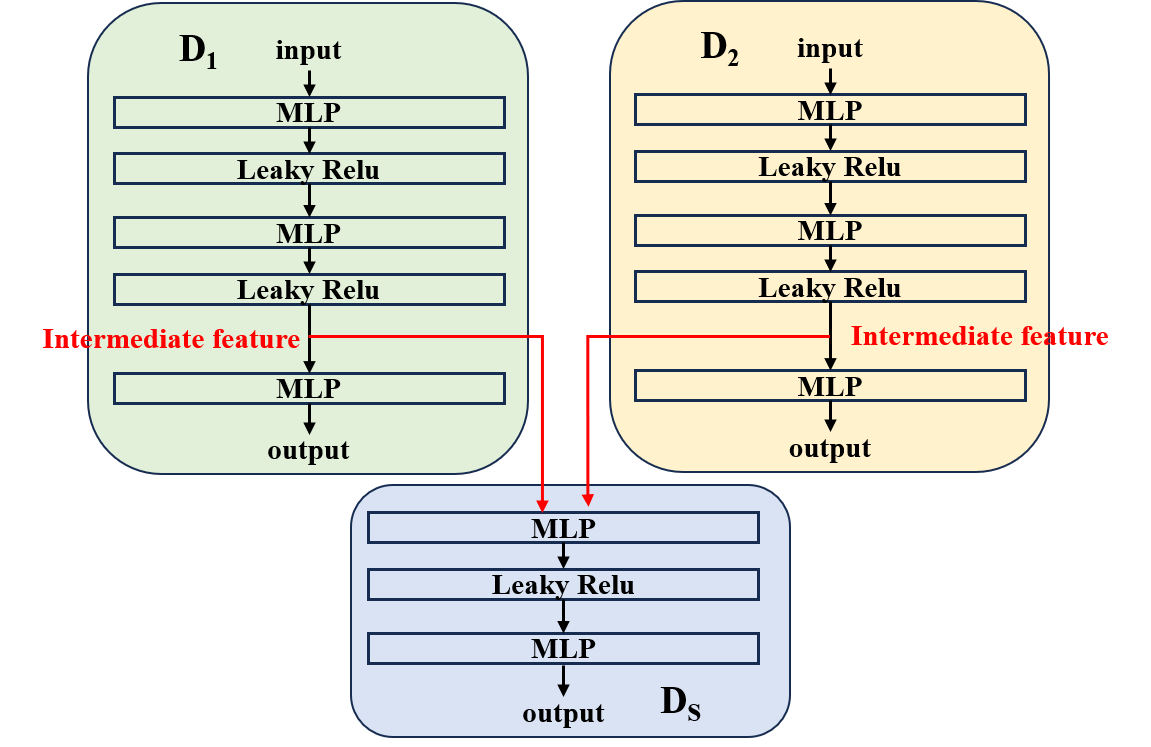}}
\caption{Architectures of the discriminators of VFLGAN.}
\label{fig:VFLGAN_architecture}
\end{figure}

% \begin{figure}[htbp]
% \center{\includegraphics[scale=0.50,trim= 320 20 100 120,clip]{}}
% \caption{Architectures of the discriminators of VFLGAN.}
% \label{fig:VFLGAN_architecture}
% \end{figure}

\subsection{Details of VFLGAN-base} \label{app:Details of VFLGAN-base}

Figure \ref{fig:VFLmodel_base} shows the frameworks of VFLGAN-base. To construct VFLGAN-base, we delete the second part (shown with a deeper colour) of $D_1$ and $D_2$ of VFLGAN (shown in Fig. \ref{fig:vflmodel}) and their loss functions ($L_{D_1}$ and $L_{D_2}$). $L_{D_i}$, $i\in\{1,2\}$, can push the distribution of $\tilde{\boldsymbol{x}}_i$ to be similar to that of $\boldsymbol{x}_i$. $L_{D_s}$ can push the distribution of $[\tilde{\boldsymbol{x}}_1,\tilde{\boldsymbol{x}}_2]$ to be similar to that of $[\boldsymbol{x}_1,\boldsymbol{x}_2]$. 
VFLGAN-base serves as an ablation model to evaluate the effectiveness of the second part and loss function of $D_1$ and $D_2$ in VFLGAN.
As shown by our experimental results, the performance of VFLGAN and VFLGAN-base are similar for low-dimensional data. However, for high-dimensional data like images in MNIST datasets, some information will be lost during the concatenation of intermediate features from different parties and the second part of $D_1$ and $D_2$ and their loss functions can help to improve synthetic data quality.
VFLGAN-base is the typical structure of VFL, where one active party ($D_s$) calculates the loss function to update local models in passive parties. In our adjustment of VFLGAN, passive parties (party 1 and party 2) also contribute to their own local model updates.

\begin{figure}[htbp]
\center{\includegraphics[width=\linewidth, scale=1.]{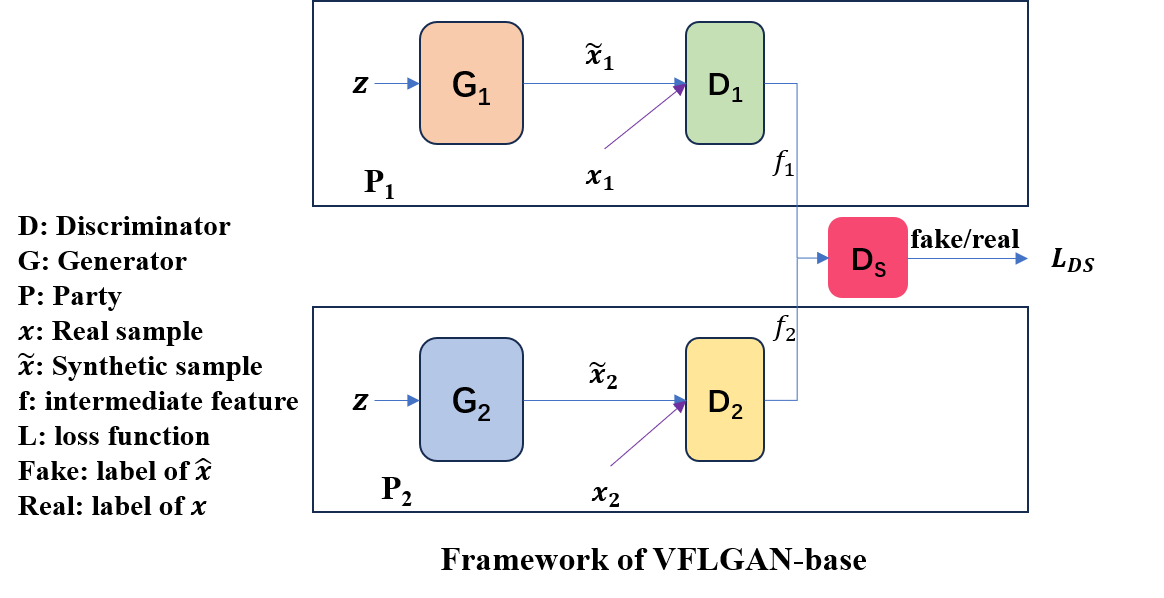}}
\caption{Framework of VFLGAN-base.}
\label{fig:VFLmodel_base}
\end{figure}

\subsection{Supplemental Experiment Results: Normalization for Integer Categorical Attribute} \label{sec:Supplemental Experiment}

In this section, we apply normalization \eqref{eq:normalization} to preprocess the `quality' attribute of red-wine-quality and white-wine-quality datasets. The FD curves during training are shown in Fig. \ref{fig:norm-wine-quality}. We can see that the FD curves of the proposed VFLGAN and VFLGAN-base get very close to the WGAN while the FD curve of VertiGAN is above that of WGAN with a significant gap. The AI training utility is shown in Table \ref{tab:red-wine} and the proposed methods have significantly superior performance compared to VertiGAN. Besides, we can see that the AI utilities in Table \ref{tab:red-wine} are lower than those of corresponding methods in Table \ref{tab:utility classifiction}, which means one-hot representation is more suitable for integer categorical attributes.

\begin{figure}[htbp]
\center{\includegraphics[width=\linewidth, scale=1.]{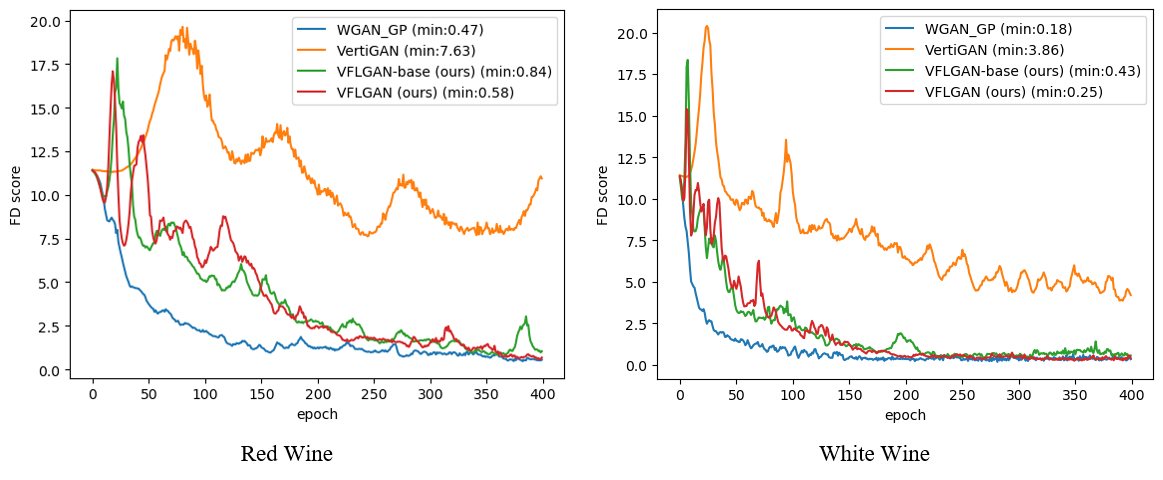}}
\caption{FD curves on the Red-wine-quality dataset (the lower the better).}
\label{fig:norm-wine-quality}
\end{figure}

\subsection{Vulnerable Record Identification} \label{sec:vulnerability}
In \cite{Theresa01}, the authors select records with the most outlier attributes as the most vulnerable records. There may be multiple records that have the most outlier attributes.
Specifically, in our implementation, we select outliers with the following process. First, we select the value of the first quartile and the third quartile of an attribute denoted as $Q_1$ and $Q_4$, respectively. Then, we calculate the threshold by $T=Q_3-Q_1$. Last, we determine the attribute $a$ of a record as an outlier if it satisfies one of the following conditions,
\begin{align}
\text{condition 1:}\quad Q_1 - a > T, \\
\text{condition 2:}\quad    a - Q_3 > T.
\end{align}

In \cite{meeus2023achilles}, the authors calculate the distance between a record and its nearest neighbour record and select the record with the maximum distance as the most vulnerable record. There is usually only one record that has the maximum nearest neighbour distance. Specifically, the distance is calculated by,
\begin{equation}
\begin{aligned}
d\left(x_i, x_j\right):= &1-\frac{\left|\mathcal{F}_{\text {cat }}\right|}{F} \frac{h\left(x_i\right) \cdot h\left(x_j\right)}{\left\|h\left(x_i\right)\right\|_2 *\left\|h\left(x_j\right)\right\|_2}\\
                        &-\frac{\left|\mathcal{F}_{\text {cont }}\right|}{F} \frac{c\left(x_i\right) \cdot c\left(x_j\right)}{\left\|c\left(x_i\right)\right\|_2 *\left\|c\left(x_j\right)\right\|_2},
\end{aligned}
\end{equation}

where $F$ is the number of attributes, $\left|\mathcal{F}_{\text {cat}}\right|$ and $\left|\mathcal{F}_{\text {cont }}\right|$ denote the number of categorical attributes and the number of continuous attributes, respectively, $h(x)$ denotes the one-hot encoding of categorical attributes, and $c(x)$ denotes the vector of continuous attributes. 

In Table \ref{tab:privacy leakage}, we select the most vulnerable record with the method of \cite{meeus2023achilles} as Record 1 and select the most vulnerable record with the method of \cite{Theresa01} as Record 2.
Interestingly, Record 1 of the Adult dataset has both the maximum nearest neighbour distance and the most outlier attributes.

\subsection{Summary of Notation} \label{sec:summary of notation}

All the symbols and notations used in the proposed scheme are defined in Table \ref{tab:Summary of Notation}.

\begin{table}[htbp]
\centering
{
\renewcommand\arraystretch{1.}
    \caption{Symbols and Notations}
    \label{tab:Summary of Notation}
    \begin{tabular}{cc}
    \hline
     \textbf{Notation} & \textbf{Description}\\
    \hline
     \textbf{DP} & differential privacy \\
    \hline
    \textbf{RDP} & Rényi differential privacy \\
    \hline
    \textbf{VFL} & vertical federated learning \\
    \hline
    \textbf{HFL} & horizontal federated learning \\
    \hline
    \textbf{GAN} & generative adversarial network \\
    \hline
    \textbf{ASSD} & Auditing Scheme for Synthetic Datasets \\
    \hline
    \textbf{ASIF} & Auditing Scheme for Intermediate Features \\
    \hline
    \textbf{$P_v$} & distribution of variable $v$ \\
    \hline
    \textbf{$x/X$} & real data / real dataset \\
    \hline
    \textbf{$x_i/X_i$} & real data / real dataset of party $i$ \\
    \hline
    \textbf{$\tilde x/\tilde X$} & synthetic data / synthetic dataset \\
    \hline
    \textbf{$\tilde x_i/\tilde X_i$} & synthetic data / synthetic dataset of party $i$ \\
    \hline
    \textbf{$\lambda$} & balancing coefficient \\
    \hline
    \textbf{$\eta$} & learning rate \\
    \hline
    \textbf{$B$} & size of the mini-batch during training\\    
    \hline
    \textbf{$x^B$} & a mini-batch of real samples \\    
    \hline
    \textbf{$\tilde x^B$} & a mini-batch of synthetic samples \\    
    \hline
    \textbf{$D_i$} & discriminator $i$ (discriminator of party $i$) \\
    \hline
    \textbf{$D^1_i$} & the first part of discriminator $i$ \\
    \hline
    \textbf{$D_s$} & shared discriminator \\
    \hline
    \textbf{$G_i$} & generator $i$ \\
    \hline
    \textbf{$\mathcal{L}_{M}$} & loss function of model $M$ \\
    \hline
    \textbf{$\theta_{M}$} & model $M$'s parameters \\
    \hline
    \textbf{$\theta^1_{M}$} & parameters of model $M$'s first layer \\
    \hline
    \textbf{$\theta^{2:n}_{M}$} & parameters between $M$'s $2^{nd}$ layer to $n^{th}$ layer  \\
    \hline
    \textbf{$\mathcal{G}_{M}$} & gradients of model $M$'s parameters \\
    \hline
    \textbf{$\mathcal{G}^1_{M}$} & gradients of $\theta^1_{M}$ \\
    \hline
    \textbf{$\mathcal{G}^{2:n}_{M}$} & gradients of $\theta^{2:n}_{M}$ \\
    \hline
    \end{tabular}
    }
\end{table}

\end{document}